\documentclass[twoside,10pt]{article}

\PassOptionsToPackage{numbers,sort&compress}{natbib}
\usepackage{blindtext}

\usepackage[margin=0.75in]{geometry}

\usepackage{etoolbox}
\usepackage{ifxetex,ifluatex}
\makeatletter
\newcommand{\TNRswitch}{\fontfamily{ptm}\selectfont} 
\ifxetex
  \usepackage{fontspec}
  \renewcommand{\TNRswitch}{\fontspec{Times New Roman}}
\else\ifluatex
  \usepackage{fontspec}
  \renewcommand{\TNRswitch}{\fontspec{Times New Roman}}
\fi\fi
\def\name{\normalsize\mdseries\TNRswitch}
\patchcmd{\@maketitle}{\Large\bf}{\Large\mdseries\TNRswitch}{}{}%
\makeatother

\usepackage[abbrvbib]{jmlr2e}  
\bibpunct{[}{]}{,}{n}{,}{,}   

\usepackage{lastpage}

\usepackage{blindtext}
\usepackage{amsmath}
\usepackage{mathrsfs}
\usepackage{upgreek}
\usepackage{subcaption}

\newcommand{\Kr}{K_{\boldsymbol{\lambda}}} 
\newcommand{\kr}{k_{\boldsymbol{\lambda}}} 
\newcommand{\bx}{\mathcal{B}_{2}(\mathcal{X})} 
\newcommand{\bz}{\mathcal{B}_{2}(\mathcal{X} \times \mathcal{Y})} 
\newcommand{\h}{\mathcal{H}_{\kr}} 
\newcommand{\p}{\psi^{\boldsymbol{\lambda}}} 
\newcommand{\pj}{\psi_{\lambda _{l},j_{l}}} 
\newcommand{\g}{g_{\lambda _{l},j_{l}}}
\newcommand{\vp}{\varphi^{\boldsymbol{\lambda}}} 
\newcommand{\w}{\omega _{\kappa}(\rho)}
\newcommand{\wi}{\omega _{\kappa }(\rho_{X}^{(i)})}
\newcommand{\rk}{\rho _{\kappa}} 
\newcommand{\ka}{\kappa} 
\newcommand{\rb}{r^{\boldsymbol{\lambda}}}

\newcommand{\il}{\mathrm{I}_{\boldsymbol{\lambda}}} 
\newcommand{\hf}{\mathcal{H}_{\mathcal{F}}} 
\newcommand{\hp}{\mathcal{H}_{\mathrm{para}}} 
\newcommand{\bo}{\mathrm{B}(\hp;\mathbb{R}^{d})} 
\newcommand{\es}{e _{s}} 
\newcommand{\pg}{\mathcal{P_{G}^{X}}} 
\newcommand{\bq}{\widetilde{b}_{q}^{(j,l)}(\rho)} 
\newcommand{\gm}{\hat{g}_{\lambda _{l},j_{l}}^{\tilde{m}}} 
\newcommand{\pt}{\mathcal{T}_{1,\odot}}
\newcommand{\tm}{\tilde{m}}

\newcommand{\ts}{\mathscr{T}_{M}(\mathrm{T}_{\mathbb{S},n})}

\newcommand{\eg}{\mathcal{E}_{N}}
\newcommand{\ex}{\mathcal{E}_{N,\mathcal{X}}}
\newcommand{\ho}{\mathcal{H}_{\mathrm{T}_{n}}}
\newcommand{\co}{C(\Omega)}
\newcommand{\tx}{\tilde{X}}
\newcommand{\htt}{\mathcal{H}_{\mathrm{T}_{n}}}
\newcommand{\gp}{\mathcal{J} _{\Phi}}
\newcommand{\Pg}{\Phi _{\mathcal{G}}}
\newcommand{\E}{\mathcal{E}}
\newcommand{\wg}{\widetilde{\mathcal{J}}}

\newcommand{\st}{\widetilde{S}}

\newcommand{\pb}{\bar{\Phi}}
\newcommand{\te}{\tilde{\epsilon}}
\newcommand{\vr}{\varrho}
\newcommand{\lt}{\tilde{\Lambda}}

\newcommand{\tc}{\mathcal{T}_{C_\mathcal{B}}}
\newcommand{\bc}{C_{\mathcal{B}}}
\newcommand{\gb}{\mathcal{G}_{[t_{0},T]}(\mathrm{B})}
\newcommand{\kt}{\tilde{\kappa}}
\newcommand{\ob}{\Omega _{\mathcal{B}}}
\newcommand{\Bb}{\mathcal{B}_{2,b}(\mathcal{X})}
\newcommand{\vk}{v_{\tilde{\kappa}}}
\newcommand{\bk}{b_{\tilde{\kappa}}}
\newcommand{\upp}{\uppsi_{2}(\rho_{X}^{(i)})}
\newcommand{\upn}{\overline{\uppsi_{2}(\rho_{X})}}
\newcommand{\nx}{\nu_{\mathcal{G}}}
\newcommand{\T}{\mathrm{T}}
\newcommand{\ngg}{\nu_{\mathcal{G}}}
\newcommand{\mg}{\mu_{\mathcal{G}}}
\newcommand{\tmg}{\tilde{\mu}_{\mathcal{G}}}
\newcommand{\at}{{a}_{s}^{(j,l)}(x,y)}
\newcommand{\ad}{{a}_{s}^{(j,l)}(x,\cdot)}
\newcommand{\di}{d_{\vartheta}^{(i)}}
\newcommand{\qedblack}{\hfill\ensuremath{\blacksquare}}

\ShortHeadings{Sample JMLR Paper}{One and Two}
\firstpageno{1}

\begin{document}

\title{Ghost in the Kernel: In-Context Learning with Efficient Transformers via Domain Generalization}

\author{\name Peilin Liu  $^1$\thanks{Email: peilin.6liu@gmail.com}  
       \quad
       \name Ding-Xuan Zhou$^1$\thanks{Email: dingxuan.zhou@sydney.edu.au} \\
       \vspace{1em}
       \addr $^1$School of Mathematics and Statistics, University of Sydney, NSW Australia
       }

\maketitle
\pagestyle{plain}

\begin{abstract}
Transformer-based large models have demonstrated remarkable generalization abilities across different tasks by leveraging a context-aware attention module for in-context learning. With richer context, transformers adapt more effectively to the current use case without any parameter updates. However, the quadratic computational and memory complexity with respect to context length significantly slows data processing in softmax transformers. Linear transformers were proposed to address this issue by reducing the complexity to linear dependence on context length, but the design and understanding of the feature mapping in linear attention, from a theoretical viewpoint, remain unclear. In this paper, we investigate the approximation and generalization abilities of linear transformers under a two-staged sampling process from domain generalization. We show that linear transformers perform in-context learning as learning a mapping from context distributions to response functions. A dimension-independent convergence rate is obtained for our generalization analysis, which also exhibits the tradeoff between the regularities of data distributions and latent features. Guided by our theoretical framework, we propose a new perspective on activation and loss design for linearizing pretrained softmax large language models. 
\end{abstract}

\begin{keywords}
   in-context learning, operator learning, efficient transformer, linear attention, generalization analysis
\end{keywords}

\section{Introduction}

Transformer-based neural networks have become the foundation of modern deep learning frameworks for natural language processing \citep{devlinBERTPretrainingDeep2018,brownLanguageModelsAre2020a} and computer vision \citep{heMaskedAutoencodersAre2021,peebles2023scalable}. Especially when trained with large and diverse corpora, transformers exhibit remarkable few/zero-shot generalization capabilities across various downstream tasks \citep{brownLanguageModelsAre2020a}.
An underlying mechanism for this behavior is in-context learning, in which pretrained large language models (LLMs) condition on instructions or a few input-output pairs (both referred to as \textit{prompts}) and make predictions on test examples without parameter updates. Many theoretical and empirical studies \citep{xie2022explanation,garg2023what,akyurek2023what,zhang2023trained} have demonstrated that the emergence of in-context learning capability is closely related with the context-aware structure of the attention mechanism \citep[see][]{vaswaniAttentionAllYou2017} which enables each token in a sequence to adaptively weight information from all other tokens and to produce a representation conditioned on the sequence context.

However, the original softmax attention mechanism in \citet{vaswaniAttentionAllYou2017} is a double-edged sword: while it is beneficial for context-based representation learning, it suffers from quadratic computational complexity with respect to the context length \citep[see][]{zaheer2020big}. As context length grows dramatically, the quadratic computational and memory costs of the standard attention increasingly hinder autoregressive training and inference, undermining LLM performance in long-context modeling scenarios such as processing entire codebases, preserving coherence in long conversations and performing in-depth reasoning across several documents. Therefore, it's crucial for the design of LLMs to alleviate the curse of quadratic complexity and improve long-context processing capabilities. Numerous works have been recently proposed with this motivation and show performance comparable to the standard attention, including RetNet \citep{sun2023retentive}, Mamba \citep{guMambaLinearTimeSequence2023}, and Gated Linear Attention \citep{yang2024gated}. One branch of these works is known as linear attention \citep[see][]{katharopoulos2020transformers,choromanskiRethinkingAttentionPerformers2021, yang2024gated,zhang2024hedgehog}, which replaces the exponential similarity function with a dot product of key/query functions and yields a linear computational complexity with respect to the context length. This reduction in time and memory cost enables much longer contexts and lower latency during the inference. Meanwhile, the introduction of the hidden-state memory matrix and the forgetting gate improves algorithm stability over utra-long contexts \citep[see][]{yang2024gated}. Although linear attention models have outperformed the softmax attention on some long-context modeling tasks, the theoretical understanding and design principles of these models, especially for in-context learning, remain limited and unexplored. 

In this work, we investigate the approximation and generalization abilities of linear transformers with context-augmented inputs to reveal the advantage of the linear attention mechanism for the in-context learning scenario. We establish a connection between in-context learning and domain generalization frameworks and show that transformer-based neural networks perform in-context learning as domain generalization \citep{blanchard2011generalizing,blanchard2021domain}, and this connection demonstrates the essence of LLMs' remarkable few/zero-shot generalization capabilities without parameter updates during testing. We work with the formulation in \citet{liu2025generalization} by representing the context information as a kernel embedding from context probability distributions to vector-valued functions, and exhibit how each word token interacts with the context embedding through an inner product of a tensor product Hilbert space. Based on this framework, we construct a linear transformer to perform in-context learning via a two-staged sampling process, which shows the internal mechanism of the robust generalization capabilities of LLMs. Our main contributions are as follows.

\begin{itemize}
  \item We present a theoretical analysis framework for the family of linear transformers, one of the most compelling alternatives to the softmax attention in practice. By connecting domain generalization framework with in-context learning, we rigorously prove that linear transformers perform a robust generalization ability under distribution shifts, which builds the theoretical foundation for understanding the generalization abilities of linear transformers in long-context modeling.
  \item We observe a fast eigendecay phenomenon in the softmax attention weight matrix products of LLMs. We prove that this phenomenon helps linear transformers alleviate the negative effects of distribution shifts, achieve dimension-independent convergence rates in approximation and generalization analysis and efficiently mimic the behavior of the standard attention. 
  \item We investigate the application of likelihood ratio moments to control distribution shifts in domain generalization. We apply a relaxed condition for unbounded likelihood ratios of probability distributions defined on the noncompact space $\mathbb{R}^d$ and obtain a distribution-dependent concentration inequality for a second stage estimation by the connection between subgaussian norm and finite Rényi divergence.
  \item We propose a new perspective for linear conversion of LLMs with the softmax attentions based on our theoretical analysis framework. This conversion scheme captures information from data distributions and parameter matrices in pretrained softmax LLMs. It provides a new perspective on designing new activation functions and training loss for linear conversion of softmax LLMs. 
\end{itemize}

In the following part of this paper, we first introduce the motivation and definition of linear transformers and a two-staged sampling process as our learning framework. Section \ref{main_re} presents the main results on approximation and generalization, with a proof sketch in Subsection \ref{proof_sketch}. Section \ref{discussion} provides further discussion, and Appendix \ref{proof} contains the full proofs.

\section{Linear Transformers and Formulations for In-Context Learning}

In this Section, we first define the structure of linear transformers whose inputs are pairs $(\hat{\rho},x)$, where $\hat{\rho}$ the empirical version of distribution $\rho$ from which $x$ is sampled. We refer to learning with samples $(\hat{\rho},x)$ as \textit{in-context learning}, and these samples are generated from the two-staged sampling process defined in Subsection \ref{two-stage}.  

\subsection{Linear Transformers}

The standard Transformer \citep{vaswaniAttentionAllYou2017} consists of blocks of attention mechanisms and shallow networks to process sequential inputs. Let the input sequence $Q = [x_1, \cdots, x_n]^T$ with token vectors $x_i \in \mathbb{R}^d$ for $1 \leq i \leq n$. Then $Q$ is an input sequence of length $n$ with feature dimension $d$. The softmax attention is defined as, for $1 \leq i \leq n$, 
\begin{align} \label{eq:attention}
    \operatorname{SoftmaxAttn}(x_i|Q) = \frac{\sum_{j=1}^n \operatorname{sim}(x_i, x_j)(W_v x_j)}{\sum_{j=1}^n \operatorname{sim}(x_i, x_j)} \in \mathbb{R}^d \text{ with } \operatorname{sim}(x_i,x_j) = \exp \left( \frac{\langle W_q x_i, W_k x_j\rangle}{\sqrt{d'}} \right) 
\end{align}
where $W_v \in \mathbb{R}^{d \times d}, W_{q} \in \mathbb{R}^{d' \times d}, W_{k} \in \mathbb{R}^{d' \times d}$ are parameter matrices for \textit{value}, \textit{query}, and \textit{key} token vectors respectively. Intuitively, the attention mechanism $\operatorname{SoftmaxAttn}$ takes the input sequence $Q$ as context and produces a refined context-aware representation $\operatorname{SoftmaxAttn}(x_i |Q)$ for each query token $x_i$ in $Q$. 

To establish connections between each token and their context and to demonstrate the benefits of context-aware representation produced by the attention mechanism, we follow \citet{liu2025generalization,furuyatransformers} to assume that $Q$ is a realization with $n$ samples drawn i.i.d. from a Borel probability measure $\rho$ on $\mathbb{R}^d$, with $\rho$ regarded as \textbf{\textit{the ground truth context}}. Then the RHS of the expression (\ref{eq:attention}) can be written as 
\begin{align} \label{eq:context_attn}
    \operatorname{ SoftmaxAttn }(\hat{\rho},x_i)=  \frac{{\int \operatorname{ sim }(x_{i}, x) (W_{v}x) \, d \hat{\rho}(x)}}{\int \operatorname{ sim }(x_{i},x) \, d \hat{\rho}(x)} \in \mathbb{R}^d
\end{align}
where $(\hat{\rho},x_i)$ is a context-augmented input and $\hat{\rho}= \delta([x_1, \cdots, x_n])$ is \textbf{\textit{the accessible context}} with $\delta(\mathcal{S})$ defined as the empirical distribution generated by the dataset $\mathcal{S}$. With a richer accessible context $\hat{\rho}$ by more and more samplings from $\rho$, the empirical distribution $\hat{\rho}$ can recover the information of the population distribution $\rho$ and produce more refined context-aware representation for each token $x_i$, which is consistent with the empirical practice of increasing the context window length $n$. 

However, it's easy to observe that for each query token $x_i$ ($1\leq i \leq n$), we must evaluate $\operatorname{sim}(x_i, x_j)$ for all $1 \leq j \leq n$ and then normalize by $\sum_{j=1}^n \operatorname{sim}(x_i,x_j)$, which creates an $n \times n$ attention score matrix and causes both time and memory cost to scale quadratically in the context length $n$ \citep[see][]{zaheer2020big}. Such quadratic growth poses significant challenges for efficient algorithm design in long-context modeling scenarios. The key idea of linear transformers is to reduce the quadratic time and memory cost to linear dependence on the context length $n$ by decoupling queries and keys  in $\operatorname{sim}(x_i,x_j)$. By replacing the similarity function $\operatorname{sim}(x_i,x_j)$ with $\phi(x_i)^T\phi(x_j)$ where $\phi:\mathbb{R}^d \to \mathbb{R}^{d'}$ is a feature mapping, a simple linear attention module can be written as 
\begin{align} \label{eq:linear-attention}
    \operatorname{LinearAttn}(\hat{\rho},x_i) = \frac{{ \phi(x_{i})^{T}\int \phi (x) (W_{v}x)  \, d \hat{\rho}(x) }}{\phi(x_{i})^{T} \int  \phi(x) \, d \hat{\rho}(x)} = \frac{{ \big[\int  (W_{v}x) \phi(x)^T \, d \hat{\rho}(x) }\big]\phi(x_{i})}{\big[\int  \phi(x)^T \, d \hat{\rho}(x)\big]\phi(x_i)}.
\end{align}
It's easy to observe that a universal memory matrix $\int(W_v x) \phi(x)^T d\hat{\rho}(x) \in \mathbb{R}^{d \times d'}$ can be shared across all query tokens, thus eliminating the need to compute and store the quadratic attention score matrix in \eqref{eq:attention}. Beyond this computational efficiency, recent empirical studies \citep{qin2022devil,yang2024gated,zhang2024hedgehog} have achieved performances comparable to the softmax attention using linear attentions. Motivated by normalization-free linear attentions in \citet{qin2022devil} and the design of shallow neural network feature mapping $\phi$ in \citet{zhang2024hedgehog}, we define \textit{Linear Transformers} as follows. Let $\sigma:\mathbb{R} \to \mathbb{R} $ denote the ReLU activation function $\sigma(u)=\max\{u,0\}$, $\sigma_{\tanh}:\mathbb{R} \to \mathbb{R}$ denote the tanh activation function $\sigma_{\tanh}(u) = \frac{\exp(u)-\exp(-u)}{\exp(u)+\exp(-u)}$. 

\begin{definition} \label{def:transformer}
    A Linear Transformer $\mathrm{T}_{n}$ with the structure of $\mathrm{T}_{n,m,\tm}$ and $m=m(n)$, $\tm=\tm(n)$ is defined as 
    \begin{align} \label{eq:lin_tran}
        \mathrm{T}_{n}(\rho,x)= \sum_{j=1}^{n} \alpha _{j} \sigma\left( \sum_{q=1}^{m(n)} {{\phi _{q,\tm(n)}(x)}} {{\left( \sum_{p=1}^{m(n)}\mathcal{T}_{v}\left[ \int \phi _{p,\tm(n)}(y) A_{p,q}^{(j)} y \, d \rho(y) \right]  \right)}}+b_{j}\right)+b_{0} 
    \end{align}
    for context-augmented input $(\rho,x )$ with $A_{p,q}^{(j)} \in \mathbb{R}^{d\times d}$, $b_j,b_0 \in \mathbb{R}^d$ and $\alpha_j \in \mathbb{R}$ for $1\leq j \leq n$, and two-hidden-layer tanh neural networks $\{ \phi _{q,\tm(n)}  \}_{q=1}^{m(n)}$ with the product gate, in the form of $$\phi _{q,\tm(n)}= \mathcal{T}_{1,\odot}\Big(\mathcal{NN} _{q,\tm(n)}\Big) \text{ and } \mathcal{NN} _{q,\tm(n)}(x)=W_{q,2}\sigma_{\tanh}(W_{q,1}\sigma _{\tanh}(W_{q,0}x+b_{q,0})+b_{q,1})$$ with $W_{q,0}\in \mathbb{R}^{8d\tm(n)\times d}, b_{q,0} \in \mathbb{R}^{8d\tm(n)}$, $W_{q,1} \in \mathbb{R}^{8d\tm(n)\times 8d\tm(n)}$, $b_{q,1} \in \mathbb{R}^{8d\tm(n)}$ and $W_{q,2} \in \mathbb{R}^{d \times 8d\tm(n)},$ where $\mathcal{T}_{1,\odot}(z)= \prod_{l} (\mathcal{T}_1(z))^{(l)}$ denotes the product of all entries $(\mathcal{T}_1(z))^{(l)}$ of a truncated vector $\mathcal{T}_1(z)$ and truncation operators $\mathcal{T}_1$ and $\mathcal{T}_{v}$ apply element-wise on input vectors such that $(\mathcal{T}_v(z))^{(l)}= \operatorname{sgn}(z^{(l)}) \min (v, |z^{(l)}|)$ with truncation level $v>0$. 
\end{definition}

\begin{remark}

    Linear transformers in (\ref{eq:lin_tran}) show that for each query $x$, there's a universal memory unit compressing information from the context on each attention head $(p,q)$, in the form of $\mathcal{T}_{v}\left[ \int \phi _{p,\tm(n)}(y) A_{p,q}^{(j)} y \, d \rho(y) \right]$. We note that the truncation operator $\mathcal{T}_{v}$ is actually not required to derive the final generalization bound, but is necessary to obtain an oracle inequality for each linear transformers in hypothesis space, since we consider Borel probability measure $\rho$ defined on the entire $\mathbb{R}^d$ in this paper. To construct the accessible context $\hat{\rho}$, we collect unbounded samples from $\mathbb{R}^d$ and create the memory unit $\int \phi _{p,\tm(n)}(y) A_{p,q}^{(j)} y \, d \hat{\rho}(y)$ that is unbounded and will effect sampling estimation if the outer shallow neural network does not have a special structure. Thus, for the algorithm stability, we introduce the truncation operator $\mathcal{T}_{v}$ to keep the memory unit stable, similar as the idea in \citet{yang2024gated}.

    Linear transformers defined by \eqref{eq:lin_tran} do not have a normalization factor as equation \eqref{eq:linear-attention} does in the form of ${\phi(x_{i})^{T} \int  \phi(x) \, d \hat{\rho}(x)}$. One benefit of normalization-free linear transformer is that it allows a more flexible design of the feature mapping $\phi$. In practice, to ensure the normalization factor in \eqref{eq:linear-attention} is nonzero, $\phi$ is often constrained to be nonnegative. It is not required any more with normalization-free linear transformers. \citet{qin2022devil} also shows that RMSNorm \citep{zhang2019root} can play a better role than normalization factor for stabilizing the optimization of linear transformers. For more details, we refer readers to Subsection \ref{sec:RMS}. 

    We apply two activation functions (ReLU and tanh) and a product gate in the construction of linear Transformer architecture, which is actually components of the modern activation function SwiGLU \citep{shazeer2020glu} for LLMs. The tanh activation is chosen in linear attention to approximate functions in a reproducing kernel Hilbert space (RKHS) induced by a smooth kernel. We include more discussion on this point in Subsection \ref{sec:acf}. 
\end{remark}

\subsection{Two-Staged Sampling Framework for In-Context Learning} \label{two-stage}


Our first novelty is to formulate the sequential modeling of transformers in in-context learning as the processing context-augmented inputs $(\hat\rho,x)$ as in \eqref{eq:context_attn}\eqref{eq:linear-attention}\eqref{eq:lin_tran} where samples like $(\hat{\rho},x)$ are generated by a \textit{\textbf{two-staged sampling framework}} from domain generalization \citep{blanchard2011generalizing,blanchard2021domain}.

Let $\mathcal{X} = \mathbb{R}^{d}$ denote the input space and $\mathcal{Y} = \big\{ y \in \mathbb{R}^{d}: \|y\|_2 \leq M \big\} $ a closed ball with radius $M >0$ be the output space. Let $\mathcal{B}_2({\mathcal{X}})$ and $\mathcal{B}_2({\mathcal{X} \times \mathcal{Y}})$ denote the set of all Borel probability measures with finite second moments on $\mathcal{X}$ and $\mathcal{X} \times \mathcal{Y}$, respectively. We equip $\bx$ and $\bz$ with Wasserstein-2 distances denoted by $W_2$ which metrize the weak topology on $\bx$ and $\bz$. Then for context-augment inputs $(\rho,x)$, we define a complete separable metric space $\Omega = \bx \times \mathcal{X}$ equipped with the metric $d_{\Omega}$ as $$d_{\Omega}\big((\rho, x), (\rho',x') \big)= \sqrt{W_2(\rho,\rho')^2+\|x-x\|_2^2}.$$

\begin{definition} \label{def:two_stage}
    A \textbf{two-staged sampling process} by meta probability measure $\mathcal{P}_{\mathcal{G}}$ is defined as follows: in the first stage sampling with $N \in \mathbb{N}$, $(\rho_{XY}^{(i)})_{i=1}^N$ are independently sampled from a meta Borel probability measure $\mathcal{P}_{\mathcal{G}}$ on $\mathcal{B}_2({\mathcal{X} \times \mathcal{Y}})$; in the second stage sampling with $n_i \in  \mathbb{N}$ for $1\leq i \leq N$, a dataset $\mathbb{S}=\{(\hat{\rho}_X^{(i)}, X_{ij}, Y_{ij})_{j=1}^{n_i}\}_{i=1}^N$ is created by $(X_{ij},Y_{ij})$ sampled independently from $\rho_{XY}$ and $\hat{\rho}_X^{(i)}=\delta([X_{i1},..., X_{in_i}])$. 
\end{definition}

With the two-staged sampling process induced by $\mathcal{P}_{\mathcal{G}}$, for a prediction function $\Phi: \Omega \to \mathbb{R}^d $, we define the population risk for in-context learning as 
\begin{align} \label{eq:loss}
    \mathcal{E}(\Phi)=\mathbb{E}_{\rho_{XY} \sim \mathcal{P}_{\mathcal{G}}} \mathbb{E}_{(X,Y) \sim \rho_{XY}} \| \Phi(\rho_{X},X)-Y \|_{2}^{2}
\end{align}
and the empirical risk with a dataset $\mathbb{S}=\{(\hat{\rho}_X^{(i)}, X_{ij}, Y_{ij})_{j=1}^{n_i}\}_{i=1}^N$ as $${\mathcal{E}}_{\mathbb{S}}(\Phi)=\frac{1}{N} \sum_{i=1}^{N} \frac{1}{n_{i}} \sum_{j=1}^{n_{i}} \| \Phi(\hat{\rho}_{X}^{(i)},X_{ij})- Y_{ij} \|_{2}^{2}.$$

Let $\htt$ be the hypothesis space of a collection of linear transformers in Definition \ref{def:transformer} and $\mathrm{T}_{\mathbb{S},n} \in \htt$ be the function learned from the empirical risk minimization (ERM) algorithm by 
\begin{align} \label{eq:minimizer}
   \mathrm{T}_{\mathbb{S},n}=\arg\min_{\mathrm{T}_{n} \in\mathcal{H}_{\mathrm{T}_{n}}}{\mathcal{E}}_{\mathbb{S}}(\mathrm{T}_{n}). 
\end{align}
Because of the boundedness of the output space $\mathcal{Y}$, our estimator is given by $\ts$ where $\mathscr{T}_{M}$ is a truncation operator defined on $\mathbb{R}^{d}$ such that $\mathscr{T}_{M}(y)=y \text{ if } \| y \|_{2} \leq M$ otherwise  $M \frac{y}{\| y \|_{2}}.$

\begin{remark}
For an in-context learning dataset $\mathbb{S}$, each sample consists of a context-augmented input $(\hat{\rho}_X,X)$ and a label $Y$. Removing the accessible context $\hat{\rho}_X$ from an input reduces the learning problem to classical regression. On the other side, if we drop the query token $X$, the problem will degenerate to distribution regression as considered in our previous work \citep{liu2025generalization}.     
\end{remark}


\subsection{Latent Feature Space for Context-Augmented Inputs}

Our second purpose is to investigate how context-augmented samples interact within the attention mechanism and to formulate this interaction into an inner product of a tensor-product Hilbert space (\textit{\textbf{the latent feature space}}).    

To mimic normalization-free attention for $(\rho,x)$ inspired by \eqref{eq:context_attn} as $\int_{\mathcal{X}} \operatorname{sim}(x, x')x' d\rho(x')$, we first introduce an anisotropic Gaussian kernel $\kr$ on $\mathcal{X} \times \mathcal{X}$ as $$\kr(x,x')= \exp(-(x-x')^{T}\Sigma _{\boldsymbol{\lambda}}(x-x'))$$ with shape parameter vector $\boldsymbol{\lambda}=[\lambda_1, \cdots, \lambda_d]^T \in \mathbb{R}^d$ and $\Sigma_{\boldsymbol{\lambda}}= \operatorname{diag}(\lambda_1^2, \cdots, \lambda_d^2)$. (For more details about the choice of the anisotropic Gaussian kernel, see Subsection \ref{sec:lin_con}.)
We use kernel $\kr$ to measure the similarity between different tokens and then the attention mechanism induced by $\kr$ outputs $\int_{\mathcal{X}} \kr(x,x') x' d\rho(x')$ for the context-augmented input $(\rho,x)$. 

To better understand how the attention mechanism processes context information for context-augmented inputs, we introduce the following definition for context embedding.

\begin{definition} \label{eq:emb_con}
    Let $\h$ be the reproducing kernel Hilbert space (RKHS) induced by $\kr$. For each context $\rho \in \bx$, we define $\Kr(\rho):\mathcal{X} \to \mathbb{R}^d$ as  
    \begin{align*}
   K_{\boldsymbol{\lambda}}(\rho)= \int  _{\mathcal{X}} \kr(\, \cdot \,,x)x \, d\rho(x) \in \h \otimes \mathbb{R}^{d}. 
   \end{align*}
\end{definition}


Intuitively, $\Kr$ embeds context information $\rho$ into a dictionary function mapping each query $x_{\text{query}}$ to a context-aware representation $$\Kr(\rho)(x_{\text{query}}) = \int_{\mathcal{X}}\kr(x_{\text{query}},x) x \, d\rho(x) \in \mathbb{R}^d.$$ 

Next, we define a feature mapping $\il$ and a latent feature space $\hf$ for context-augmented inputs $(\rho,x)\in \Omega$. In the latent feature space $\hf$, the similarity measure between context inputs not only depends on context information, but also has the properties of the attention mechanism.

\begin{definition}
    Define a latent feature space $\hf=(\h \otimes \mathbb{R}^d)\otimes \h$. Define $\il: \bx \times \mathcal{X} \to \hf$ by 
\begin{align} \label{eq:il}
    \il(\rho,x) = \Kr(\rho) \otimes \kr(x,\cdot) \in \hf,
\end{align}
which introduces an inner product for similarity measure between context-augmented inputs $(\rho,x),(\rho', x') \in \Omega$ as 
\begin{equation} \label{eq:Isim}
    \begin{aligned}
\langle  \il(\rho,x),\il(\rho',x') \rangle _{\mathcal{H}_{\mathcal{F}}}&=\langle \Kr(\rho),\Kr(\rho') \rangle _{\h \otimes \mathbb{R}^{d}} \kr(x,x').
\end{aligned}
\end{equation}
\end{definition}



 The similarity between context-augmented inputs $(\rho,x)$ and $(\rho,x')$ defined in (\ref{eq:Isim}), depends not only on the token values but also on the contexts $\rho$ and $\rho'$. This is consistent with a basic observation in natural language processing: a word may have different meanings in different contexts.

\begin{remark}
    The definition of $\il$ is inspired by the similarity measure in the domain generalization literature \citep{blanchard2011generalizing,blanchard2021domain} where the mapping $(\rho,x) \mapsto \Phi_{k_{\mathcal{B}}} (\rho) \otimes \Phi_{k_{\mathcal{X}}}(x)$ is considered with $\Phi_{k_{\mathcal{B}}}$,$\Phi_{k_{\mathcal{X}}}$ the canonical feature maps of kernels $k_{\mathcal{B}}$ \citep[see][]{christmann2010universal}, $k_{\mathcal{X}}$ respectively. In Appendix \ref{app:inj}, we show that both $\Kr$ and $\il$ are injective and continuous mappings. The injection of $\Kr$ is one of the key features of attention modules: compress context distributions into elements in $\h \otimes \mathbb{R}^d$ and fight against the negative effects of distribution shifts, while preserving the ability to distinguish different context distributions. It would be interesting to extend the results in \citet{sriperumbudurHilbertSpaceEmbeddings} to a quantitative analysis on dissimilar distribution with a small distance in $\h \otimes \mathbb{R}^d$ to investigate this tradeoff created by $\Kr$. 
\end{remark}

\section{Main Results} \label{main_re}

This section states the main results of approximation and generalization analysis for in-context learning with linear transformers. We first introduce some assumptions on $\mathcal{P}_{\mathcal{G}}$ for \textit{the two-staged sampling process} and $\kr$ for \textit{the latent feature space}.

The two-staged sampling process induced by $\mathcal{P}_{\mathcal{G}}$ in Definition \ref{def:two_stage} introduces a probability measure $\pg$ on $\bx$ for context information by 
\begin{align} \label{eq:context_measure}
\pg(E) = \mathcal{P}_{\mathcal{G}}\big( \{\mu \in \bz: \mu \circ \pi_{\mathcal{X}}^{-1} \in E \}   \big)    
\end{align}
for any Borel set $E \in \bx$ with the coordinate map $\pi_\mathcal{X}: \mathcal{X} \times  \mathcal{Y}  \to \mathcal{X}$, and also a probability measure $\nx$ for context-augmented inputs on the product $\sigma$-algebra of $\Omega$ by $\nx(B\times A)=\int_B \rho(A) d \pg(\rho)$ for any Borel set $B \in \bx, A\in \mathcal{X}$.

\begin{assumption} \label{ap:1}
    $\pg$ is supported on a subset $\Bb$ of $\bx$ defined as $$\Bb= \big\{ \rho \in\bx: \mathbb{E}_{X \sim \rho} \| X \|_{2}^{4} \leq \bc^2 \big\}$$ with a constant $\bc>1$. We also denote $\ob = \big\{(\rho,x) \in \Omega: \rho \in \Bb \big\}.$
\end{assumption}

\vskip 0.05in 

\begin{assumption} \label{ap:2}
   There exists $\gamma>1$ and $\ka, C_{\mathcal{G}}>0$ such that $$\int_{\bx} \| \w \|^{2} _{L^{\gamma}(\rk)}\, d \mathcal{P}_{\mathcal{G}}^{\mathcal{X}}(\rho) \leq C_{\mathcal{G}}$$ where $\w$ is the likelihood ratio defined as $\frac{d\rho}{d\rk}$ and $\rk$ is Gaussian probability measure with zero mean and covariance matrix $\ka^2\mathrm{I}_d$.  
\end{assumption}

We provide two examples of meta probability measure $\pg$ satisfying Assumptions \ref{ap:1} and \ref{ap:2} in Appendix \ref{sec:mmp}.

\begin{assumption} \label{ap:3}
    For the anisotropic Gaussian kernel $k_{\boldsymbol{\lambda}}$, we assume $\boldsymbol{\lambda}=\left( \lambda _{l} \right)_{l=1}^{d}$ is a sequence of shape parameters such that $\lambda _{(l)} \leq C_{\theta}l^{-\theta}$ for $1\leq l \leq d$ with the order $\lambda _{(1)}\geq \lambda_{(2)} \geq \dots \geq \lambda _{(d)}>0$ where $C_{\theta}, \theta>0$ are two constants independent of $d$.
\end{assumption}

\begin{remark}
    

By the Portmanteau theorem, $\Bb$ is closed in the $W_2$-topology, which allows probability measures supported on $\Bb$ can be extended to probability measures on the entire $\bx$, matching the framework of the two-staged sampling process. The fourth moment condition here is used only to control the second-stage sampling error with accessible contexts. For approximation and all other sampling errors, we only need the condition that $\mathbb{E}_{X\sim\rho}\|X\|_2^2 \leq \bc$ for any $\rho \in \Bb$.

    The likelihood ratio is a popular tool to control distribution shifts in both theoretical and empirical studies \citep{ma2023optimally, shao2024deepseekmath} and actually the quantity $\|\w\|_{L^{\gamma}(\rk)}$ in Assumption \ref{ap:2} is closely related to the Rényi divergence \citep{renyi1961measures,erven2014rényi} of distribution $\rho$ with respect to the reference distribution $\rk$. 
    
    The fast decay of shape parameters in $\boldsymbol{\lambda}$ is often observed in practice. More details about Assumption \ref{ap:3} can be found in Subsection \ref{sec:lin_con}.

\end{remark}

\subsection{Approximation of Variation Normed Functions}

Our third contribution is to propose an explicit hypothesis space under which linear transformers achieve dimension-independent convergence rates for operator approximation without assuming access to the structure of a latent feature space $\hf$, enabling further study on generalization error analysis.

First we impose a regularity condition on the true predictor for \eqref{eq:loss} using the latent feature mapping $\il$ in \eqref{eq:il} and a \textit{variation normed space}. Let $\hp=\mathcal{H}_{\mathcal{F}} \oplus \mathbb{R}$ (the extra dimension is left for bias weights) and 
\begin{align*}
    \mathrm{B}(\hp;\mathbb{R}^{d})= \Bigl\{ W \in \mathcal{L}(\hp;\mathbb{R}^{d}) \,\Big| \, \| W \|_{\mathrm{op}} \leq 1 \Bigr\}
\end{align*}
the closed unit ball with the operator norm in the space $\mathcal{L}(\hp;\mathbb{R}^{d})$ of all bounded linear operators from $\hp$ to $\mathbb{R}^{d}$. Note that 
the finite dimension of the output space makes $\mathcal{L}(\hp;\mathbb{R}^{d})$ identical with the Hilbert space of Hilbert-Schmidt operators, and hence $\bo$ is weakly compact in $\mathcal{L}(\hp;R^{d})$.
Let $\mathcal{M}(\bo)$ denote the space of all signed Radon measures on $\bo$. 

\begin{definition} \label{def:vs}
    For $\mu \in \mathcal{M}(\bo)$, let $F_{\mu}(h)= \int _{\bo} \sigma_{}( Wh ) \, d \mu(W)$ for $h \in \hp$. The variation normed space $\mathcal{F}_1$ of $\mathbb{R}^d$-valued functions on $\hp$ is defined as $$\mathcal{F}_{1}(\hp; \mathbb{R}^{d})=\Bigl\{ F: \hp \to \mathbb{R}^{d} \, \, \Big|\,\, \| F \|_{\mathcal{F}_{1}}:=\inf _{\mu:F=F_{\mu}}\| \mu \|_{\mathcal{M}}  < \infty  \Bigr\}.$$
\end{definition}

Then we give a dimension-independent approximation result with the hypothesis space 
\begin{equation} \label{eq:hyp}
    \begin{aligned}
    &\htt=\Biggl\{ \mathrm{T}_{n}: \| \alpha \|_{1} \leq 2C_F, \sum_{p,q =1}^{m(n)}\left\| A_{p,q}^{(j)} \right\|_{F}^{2}  \leq d,\| b_{j} \|_{2}  \leq \sqrt{ 2d\bc }  \text{ for each } 1\leq j \leq n, \\ &\| b_{0} \|_{2} \leq C_F \sqrt{ 2d\bc }, \,\left\| \Theta _{\tanh} \right\|_{\infty} \leq c_{1}(c_{2}\log(n))^{c_{3}(\log n)^{2}}    \text{ and truncation level }v=\bc         \Biggr\}.
\end{aligned}
\end{equation}
where $C_F >0$ is a constant, $c_1,c_2,c_3$ are constants depending on $\theta,\gamma$ and $\Theta_{\operatorname{tanh}}$ denotes the parameters in two-layered tanh neural networks satisfying a sparse structure such that 
\begin{align*}
    W_{q,j} = \operatorname{diag}(W_{q,j}^{(1)} , ..., W_{q,j}^{(d)}) \text{ for }j=0,1,2
\end{align*}
where for $1 \leq l \leq d$, $W_{q,0}^{(l)} \in \mathbb{R}^{8\tm(n)\times 1}, W_{q,1}^{(l)}\in\mathbb{R}^{8\tm(n)\times 8\tm(n)}$ and $W_{q,2}^{(l)}\in \mathbb{R}^{1 \times 8\tm(n)}$.

Let $\tilde{X}=(\rho_X,X)$ be the context-augmented input with the ground truth context, and $\tilde{X}_{ij}=(\hat{\rho}_{X}^{(i)},X_{ij})$ with the accessible context. We rewrite $\E(\Phi)=\mathbb{E}_{(\tilde{X},Y)\sim \mathbb{P}_{\mathcal{G}}} \|\Phi(\tilde{X}) - Y\|_2 ^2$ where $\mathbb{P}_{\mathcal{G}}$ is a probability measure induced by the two-staged sampling process with $\mathcal{P}_{\mathcal{G}}$ \citep[see][Page 9]{blanchard2021domain}. Then the regression function for the population risk $\E$ is defined as 
\begin{align} \label{eq:reg}
    \Pg(\tilde{X})= \int_{\mathcal{Y}}y \, d\,  \mathbb{P}_{\mathcal{G}}(\cdot| \tilde{X}).
\end{align}

\begin{theorem} \label{thm:app}
    Let $\Pg=F(\il(\cdot),1)$ with $F \in \mathcal{F}_1(\hp;\mathbb{R}^d)$ and $\|F\|_{\mathcal{F}_1}\leq C_F$. For $0<\xi < \theta$ and $n >C'_{\ka,\theta,\gamma}$, there exists a $\T \in \mathcal{H}_{\T_{2n}}$ such that 
    \begin{align*}
         \| \T - \Pg \|^2 _{L^2(\ngg)} \leq C_*^2 n^{-1} \text{ and } \|\T\|_{C(\Omega)}\leq 2 C_{F}\sqrt{ d(1+\bc) }
    \end{align*}
    with $m= \left\lceil{n^{\frac{\gamma}{2(\gamma-1)\xi}}}\right\rceil$, $\tm=\left\lceil{\left( \frac{1}{2}+\frac{\gamma}{4(\gamma-1)\xi} \right) \log n}\right\rceil$ in (\ref{eq:lin_tran}), where $C_*$ is a constant depending on $\ka, \xi,\gamma,\boldsymbol{\lambda},C_{\mathcal{G}}, \bc, C_{F}$ and $\operatorname{poly}(d)$.
\end{theorem}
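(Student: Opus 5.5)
The plan has three steps: discretize the outer measure, build a finite-rank surrogate for the feature map $\il$, and realize that surrogate with the tanh product-gate networks.

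\textbf{Step 1: discretize the outer measure.} Since $\|F\|_{\mathcal{F}_1}\le C_F$, we have $F=F_\mu$ with $\|\mu\|_{\mathcal{M}}\le 2C_F$ (up to an arbitrarily small slack in the infimum). We apply a Maurey-type empirical sampling argument in $L^2(\ngg)$. Normalize $|\mu|$ to a probability measure and draw $n$ i.i.d.\ operators $W_j\in\bo$ with signs $s_j$. Set
\[
F_n(h)=\frac{\|\mu\|}{n}\sum_{j=1}^n s_j\,\sigma(W_jh), \qquad h=(\il(\rho,x),1).
\]
The expected squared $L^2(\ngg)$ error is at most $\|\mu\|^2 n^{-1}\,\mathbb{E}_{\ngg}\|h\|^2_{\hp}$. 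We then bound $\|\il(\rho,x)\|^2_{\hf}=\|\Kr(\rho)\|^2\kr(x,x)\le \mathbb{E}_\rho\|X\|_2^2\le \bc$, using $\kr\le 1$. This gives $\|h\|^2\le 1+\bc$, hence an error of order $C_F^2(1+\bc)n^{-1}$, and we fix a realization achieving it. Each $W_j$ splits into $d$ rows, i.e.\ elements $w_{j,r}\in\hf$ with norm at most $1$, together with biases $b_j$ satisfying $|b_j|\le1$. This step uses $n$ of the $2n$ neurons allowed in $\mathcal{H}_{\T_{2n}}$. The remaining budget absorbs the auxiliary approximations below and explains the slack in the bias constraint ($\sqrt{2d\bc}$, and $b_0$).

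\textbf{Step 2: finite-rank surrogate for $\il$.} Each $\langle w_{j,r},\il(\rho,x)\rangle$ must be written as $\sum_{p,q}\phi_q(x)\,\phi_p\text{-moments of }\rho$. The plan is to expand $\kr$ in a Mercer/eigen basis with respect to the Gaussian reference measure $\rk$. Because $\kr$ factorizes over coordinates, we take tensor products of the one-dimensional Hermite-type eigenfunctions $\pj$ of the Gaussian kernel, whose eigenvalues decay geometrically. We then truncate to the $m$ leading multi-indices. The anisotropy of Assumption \ref{ap:3}, $\lambda_{(l)}\le C_\theta l^{-\theta}$, makes the eigenvalue sequence decay like $i^{-\xi}$ for every $\xi<\theta$, with constants independent of $d$.

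Projecting both tensor factors of $\hf$ onto the span of $\{\p_p\}_{p\le m}$ gives an error in $\langle w,\il(\rho,x)\rangle$ controlled by the tail eigenvalues. To measure that error in $L^2(\rho)$ rather than in $L^2(\rk)$, we use Hölder's inequality with the likelihood ratio $\w$, exponent $\gamma$, and Assumption \ref{ap:2}. The conjugate exponent loss explains the choice $m=\lceil n^{\gamma/(2(\gamma-1)\xi)}\rceil$: it is what makes the squared truncation error $O(n^{-1})$. The ReLU is $1$-Lipschitz, so these errors propagate through $\sigma$ with a factor of at most $\|\alpha\|_1\le 2C_F$. The coefficient matrices $A^{(j)}_{p,q}$ are obtained from the expansion coefficients of $w_{j,r}$, and their Frobenius bound $\le d$ follows from $\|w_{j,r}\|\le1$ and orthonormality.

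\textbf{Step 3: realize the eigenfunctions with tanh networks.} Each $\p_p$ is a product over coordinates of one-dimensional functions (Gaussian times polynomial). We approximate each one-dimensional factor by a two-hidden-layer tanh network of width $8\tm$, using known tanh approximation rates that are exponential in $\tm$ for analytic functions. The product gate $\pt$ combines the coordinates, and truncation $\mathcal{T}_1$ keeps factors bounded, so the product is Lipschitz. The choice $\tm\sim(\tfrac12+\tfrac{\gamma}{4(\gamma-1)\xi})\log n$ makes the per-feature error about $n^{-1/2}/m$, so that the total over $m^2$ heads stays $O(n^{-1/2})$. This step also produces the stated weight bound $c_1(c_2\log n)^{c_3(\log n)^2}$.

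The truncation $\mathcal{T}_v$ with $v=\bc$ is inactive on the population context: Cauchy–Schwarz together with the second-moment bound keeps the inner integrals below $\bc$. The sup bound $\|\T\|_{C(\Omega)}\le 2C_F\sqrt{d(1+\bc)}$ then follows from $\|\alpha\|_1\le 2C_F$ and the boundedness of the ReLU arguments.

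\textbf{Main obstacle.} I expect the hardest part to be Step 2: bounding eigenfunction-truncation errors uniformly over $\rho$ under only the moment assumption, which requires transferring from $\rk$ to $\rho$ via $\w$. The eigenfunctions are unbounded, which I would handle by putting the Gaussian weight into the tanh-approximated factors. Tracking the $d$-dependence so that it stays polynomial is the other delicate point.
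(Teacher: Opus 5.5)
Your three steps follow the paper's route: the Korolev/Maurey sampling bound with $\mathbb{E}\|h\|^2_{\hp}\le 1+\bc$, then truncation of both the query index and the memory index to the $m$ leading eigenfunctions with a Hölder transfer from $\rk$ to $\rho$ through $\w$, then tanh networks with the product gate. The $L^2(\ngg)$ part is right in outline, apart from the Step~3 remark below. Your check that $\mathcal{T}_{\bc}$ is inactive on $\Bb$ (each memory coordinate is at most $\sqrt{\bc}\le\bc$) fills a step the paper skips.

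\textbf{The gap: the sup-norm bound.} The second conclusion, $\|\T\|_{C(\Omega)}\le 2C_F\sqrt{d(1+\bc)}$, does not follow from your argument. The bound $\sqrt{1+\bc}$ on ReLU arguments holds for $W_jh$ with $h$ in the support of $\tmg$, i.e.\ for the idealized network. It does not hold for the realized transformer on $\Omega$, for two reasons. First, the features $\phi_{q,\tm}$ track $\p_q$ (for which $\sum_q\p_q(x)^2\le\kr(x,x)=1$) only on the box $[-B,B]^d$; outside it they are arbitrary values in $[-1,1]$. Second, for $\rho\notin\Bb$ the truncation only caps each memory unit at $\bc$ per coordinate. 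The parameter constraints therefore give ReLU arguments of size $O(m\sqrt{d\bc})$ on $\ob$ and $O(m^{2}\sqrt{d}\,\bc)$ on $\Omega\setminus\ob$. Both grow polynomially in $n$, so $\|\alpha\|_1\le 2C_F$ alone gives no $n$-independent sup bound.

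\textbf{The fix: use the spare $n$ neurons to clip.} You assign the spare $n$ neurons to ``absorbing auxiliary approximations''; their actual role is to pair each neuron with a shifted copy so its output is clipped. Take $c=\sqrt{1+\bc}$ and use $z\mapsto\sigma(z)-\sigma(z-c\mathbf{1}_d)$. This map equals $\sigma(z)$ on $[-c,c]^d$, so it reproduces the Step~1 network on the support. It takes values in $[0,c]^d$ for every $z$ and is $1$-Lipschitz, so your error propagation is unchanged. It gives $\|\T\|_{C(\Omega)}\le\|\alpha\|_1c\sqrt{d}$, with $\alpha$ the original coefficients.

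The paper writes the doubling as $\sigma(z+c\mathbf{1}_d)-\sigma(z-c\mathbf{1}_d)-c\mathbf{1}_d$. That clips to $[-c,c]^d$ but equals $z$, not $\sigma(z)$, on the support, so use the form above instead.

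For the doubled coefficient vector to respect $\|\alpha\|_1\le 2C_F$, you must start from $\|\alpha\|_1\le\|F\|_{\mathcal{F}_1}\le C_F$ as in Lemma~\ref{lem1}, not from $\|\mu\|_{\mathcal{M}}\le 2C_F$.

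\textbf{A smaller point on Step~3.} The eigenfunctions are not unbounded: $|\p_q(x)|\le\|\p_q\|_{\h}\sqrt{\kr(x,x)}=1$. The real difficulty is that tanh approximation is only available on a box. The paper handles this in two moves. It chooses $B=\tm^{3/4}/(6C_\theta)$ to balance the box size against the analyticity radius $(4BC_\theta)^{-1}$ of the rescaled factors. It then bounds $\rho(\|x\|_\infty>B)$ by Hölder with $\w$ and Gaussian tails. Putting the Gaussian weight inside the tanh factors does not replace this balance.
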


\begin{remark}
    Variation normed spaces for neural network approximation have been well studied in \citet{barron1993universal,bachbreaking,korolev2022twolayer,siegel2022sharp,yang2023optimal,siegel2025optimal}. Roughly speaking, a variation normed space can be viewed as the collection of shallow neural networks with infinity width and thus can mimic the function class of target functions arising in practice. Definition \ref{def:vs} is a special case of \citet{korolev2022twolayer} where the authors consider neural networks with values in a Banach space, extending the original result in \citet[Theorem 4]{barron1993universal}. 
\end{remark}

\subsection{Generalization Analysis of In-Context Learning}

Our final contribution is to address unbounded sampling (without domain restrictions) in the two-staged sampling process and establish an oracle inequality in Appendix \ref{sec:oracle}. Combining Theorem \ref{thm:app} with this oracle inequality, we obtain a dimension-independent generalization rate.

\begin{theorem} \label{thm:gen}
    Let $d \geq2$ and $n \geq \max\{3,C'_{\ka,\theta,\gamma},\frac{C_*'^2}{64MC_{d,F,\mathcal{B}}}\}$ and $\Pg=F(\il(\cdot),1)$ for some $F\in \mathcal{F}_1(\hp;\mathbb{R}^d)$. If the parameter $n$ of the hypothesis space $\htt$ and the second-stage sample size $\vartheta$ are chosen as 
    $$n= \lfloor \mathcal{K}_{1} N^{\frac{1}{2+ \gamma/[(\gamma-1)\xi]}} \rfloor \text{ and }\vartheta= N^{3}$$ and $m,\tm$ chosen as in Theorem \ref{thm:app}, then for the estimator generated by the ERM framework for the two-staged sampling process, we have 
    \begin{align} \label{ineq:gen}
        \mathbb{E}\{ \| \ts - \Pg \|_{L^2(\nx)}^2 \}\leq \mathcal{K}_{3}N^{- \frac{\xi}{2\xi+ \frac{\gamma}{\gamma-1}}} (\log N)^{3}
    \end{align}
    where $\mathcal{K}_{3}$ is a constant depending on $\ka, \xi,\gamma,\boldsymbol{\lambda},C_{\mathcal{G}}, \bc, C_{F}$ and $\operatorname{poly}(d)$.
\end{theorem}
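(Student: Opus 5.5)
The plan is to combine the approximation result of Theorem \ref{thm:app} with an oracle inequality for ERM under the two-staged sampling process, and then balance the resulting terms by the choice of $n$ and $\vartheta$.

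\textbf{Step 1: error decomposition.} Write $f=\ts$ and $\Pg$ for the regression function \eqref{eq:reg}, so that $\E(\Phi)-\E(\Pg)=\|\Phi-\Pg\|_{L^2(\nx)}^2$. I would split the excess risk into three parts:
\begin{align*}
\E(f)-\E(\Pg) &= \big[\E(f)-\E_{\mathbb{S}}(f)\big]+\big[\E_{\mathbb{S}}(f)-\E_{\mathbb{S}}(\T)\big]+\big[\E_{\mathbb{S}}(\T)-\E(\T)\big]+\big[\E(\T)-\E(\Pg)\big],
\end{align*}
where $\T\in\mathcal{H}_{\T_{2n}}$ is the approximant from Theorem \ref{thm:app}. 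After reindexing $2n\to n$, the second bracket is nonpositive up to the truncation $\mathscr{T}_M$, which can only reduce the loss because $\mathcal{Y}$ is the ball of radius $M$. The last bracket is at most $C_*^2 n^{-1}$. The remaining empirical terms are controlled in two stages.

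\textbf{First stage:} the $N$ i.i.d. distributions $\rho_{XY}^{(i)}$. I would use a ratio-type concentration inequality over the class $\mathscr{T}_M(\htt)$ with the ground-truth contexts $\rho_X^{(i)}$. This needs a covering-number bound for $\htt$ in $C(\ob)$. Since $\htt$ is parametrized by $O(n\, m^2 d^2 + m\,\tm^2 d)$ bounded parameters, with $\|\Theta_{\tanh}\|_\infty\le c_1(c_2\log n)^{c_3(\log n)^2}$, and the maps are Lipschitz in the parameters (the truncations $\mathcal{T}_1,\mathcal{T}_v$ keep everything bounded, using $\mathbb{E}_\rho\|X\|_2^2\le\bc$), the log covering number should be $\lesssim n\,m^2 d^2 (\log n)^3\log(1/\epsilon)$. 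This yields an estimation term of order $\frac{\log\mathcal{N}}{N}\approx \frac{n\, m^2 (\log N)^3}{N}$.

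\textbf{Second stage:} replacing $\rho_X^{(i)}$ by $\hat\rho_X^{(i)}$. This needs $\mathbb{E}\,\sup_{\T}\big|\T(\hat\rho,x)-\T(\rho,x)\big|$. Because $\T$ is ReLU-Lipschitz in the memory units $\mathcal{T}_v\big[\int \phi_p A y\, d\rho\big]$, this reduces to bounding $\mathbb{E}\big\|\int \phi_p(y) A y\, d(\hat\rho-\rho)\big\|$. By the fourth-moment condition in Assumption \ref{ap:1} this is $\lesssim \sqrt{\bc/\vartheta}$ per head, summed over $m^2$ heads using the Frobenius constraint on $A^{(j)}_{p,q}$. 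The evaluation point $x=X_{ij}$ is itself drawn from $\rho$, which is where Assumption \ref{ap:2} (the likelihood-ratio moment) and the subgaussian/Rényi argument enter, to transfer bounds from $\rk$. The outcome is a term $\operatorname{poly}(m,d)\,\vartheta^{-1/2}$. With $\vartheta=N^3$ this is negligible compared with the other terms.

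\textbf{Step 2: balancing.} Collecting the terms gives
$$\mathbb{E}\|\ts-\Pg\|^2\lesssim n^{-1}+\frac{n\,m^2(\log N)^3}{N}+\operatorname{poly}(m)\,N^{-3/2}, \qquad m\asymp n^{\gamma/(2(\gamma-1)\xi)}.$$
The middle term is therefore $n^{1+\gamma/((\gamma-1)\xi)}(\log N)^3/N$. Equating it with $n^{-1}$ gives $n\asymp N^{1/(2+\gamma/((\gamma-1)\xi))}$, and hence the rate $N^{-\xi/(2\xi+\gamma/(\gamma-1))}(\log N)^3$. The conditions on $n$ in the statement ensure that Theorem \ref{thm:app} applies and that the variance constant is dominated.

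\textbf{Main obstacle.} I expect the main obstacle to be the oracle inequality with unbounded inputs on $\mathbb{R}^d$. The covering bound must hold uniformly over $\ob$, not just on a compact set, and the second-stage error must be controlled jointly with the query point $X_{ij}$, which is not independent of $\hat\rho^{(i)}$. My first attempt would be a leave-one-out comparison: replace $\hat\rho^{(i)}$ by the empirical measure without $X_{ij}$, at cost $O(1/\vartheta)$ thanks to the truncation $\mathcal{T}_v$. This restores conditional independence and leaves a clean Hoeffding-type bound.
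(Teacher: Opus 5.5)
Your overall architecture matches the paper's: the approximant from Theorem~\ref{thm:app}, then an oracle inequality, then balancing $n^{-1}$ against $n m^2(\log N)^3/N$. Your exponent bookkeeping is correct. The step that fails as written is the first-stage entropy bound in $C(\ob)$.

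Over unbounded queries, the map from tanh parameters to $\phi_{q,\tm}$ is not uniformly Lipschitz. A first-layer weight $0$ versus $\epsilon$ gives $\tanh(b)$ versus $\tanh(\epsilon t+b)$, and these differ by an $\epsilon$-independent amount as $t\to\infty$. Likewise the rescalings $w=2^{-k}$ give infinitely many members of $\htt$ that are pairwise separated in sup norm (choose the $A^{(j)}_{p,q}$ so that this $\phi_q$ enters only as a query feature against a nonzero memory unit). So $\htt$ is not totally bounded in $C(\ob)$. The truncations $\mathcal{T}_1,\mathcal{T}_v$ bound the outputs but do not cure this.

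The paper's parameter-perturbation bound is $\Xi(x,\epsilon)\propto(9+\|x\|_\infty)\epsilon$, which is why it covers instead in the pseudometric $d_{\Bb}(\Phi_1,\Phi_2)=\sup_{\rho\in\Bb}\mathbb{E}_{X\sim\rho}\|\Phi_1(\rho,X)-\Phi_2(\rho,X)\|_2$. This is enough because the ratio inequality is applied to $\rho_{XY}\mapsto\mathbb{E}[\|\mathscr{T}_M\Phi(\tx)-Y\|_2^2-\|\Pg(\tx)-Y\|_2^2\mid\rho_{XY}]$, which is $4M$-Lipschitz in $d_{\Bb}$. The bound $\mathbb{E}_\rho\|X\|_2^2\le\bc$ then absorbs the $\|x\|_\infty$ growth, and that is what yields the entropy you quote.

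The same difficulty returns in the within-task sampling of $(X_{ij},Y_{ij})$ under ground-truth contexts ($\eg$ versus $\ex$). Your two-stage list does not treat this term; if your first stage meant per-task empirical averages, you again need control uniform in unbounded $X_{ij}$. The paper handles it separately with McDiarmid, symmetrization, vector contraction and Dudley's integral in the empirical metric $d_{\hat P}$. The Lipschitz constant there carries $\|\hat X\|_2$, which Jensen removes, and the result is $\sqrt n\, m\tm^2/\sqrt{N\vartheta}$.

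In your second stage, the per-head rate $\sqrt{\bc/\vartheta}$ holds only for a fixed $\phi_p$. Since $\ts$ is data-dependent, you need $\mathbb{E}\sup_\phi\|\int\phi(y)y\,d(\hat\rho-\rho)(y)\|_2$. That requires another chaining bound over $\{\phi(x)u^Tx\}$, which is where the fourth moment enters; the paper obtains $\tm^2\sqrt{\log\tm}/\sqrt\vartheta$.

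Your stated main obstacle is not one. Since $|\phi_{q,\tm}|\le 1$ and $\sigma,\mathcal{T}_v$ are $1$-Lipschitz, you get $\|\Phi(\rho,x)-\Phi(\hat\rho,x)\|_2\le\|\alpha\|_1\max_j\sum_{p,q}\|\int\phi_{p,\tm}(y)A^{(j)}_{p,q}y\,d(\rho-\hat\rho)(y)\|_2$ uniformly in $x$. The dependence of $X_{ij}$ on $\hat\rho^{(i)}$ is therefore irrelevant, and no leave-one-out step is needed. Assumption~\ref{ap:2} enters the paper's second stage only to make $\|X_{ij}\|_2$ subgaussian, for a concentration bound on $\mathcal{V}^{(i)}$ that is then integrated using $C_{\mathcal{G}}$. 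The union bound over $i$ in that step is what forces $\vartheta=N^3$. If you keep your in-expectation treatment, the fixed-$\Phi$ terms $\E_1'$ and $\E_2'$ have mean zero, that concentration is unnecessary, and it appears $\vartheta$ of order $N$ up to logarithms would suffice.
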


\begin{remark}
Although controlling distribution shift via likelihood ratio moments with divergence order $\gamma >1$ provides $L^2(\rho)$ error bounds for every target distributions $\rho$ satisfying Assumptions \ref{ap:1} and \ref{ap:2}, the factor $\frac{\gamma-1}{\gamma}$ in RHS of \eqref{ineq:gen} significantly slows the convergence rate as $\gamma$ approaches $1$. This makes a tradeoff between convergence rate and the capacity of admissible target distributions: smaller $\gamma$ allows more distributions satisfying Assumption \ref{ap:2} as shown in Example \ref{exam:2} but greatly slows convergence. However, we observe the fast spectral decay phenomenon in LLMs (shown in Figure \ref{fig:eigendecay} of Subsection \ref{sec:lin_con}) that mitigates this slowdown for in-context learning. Indeed, under Assumption \ref{ap:3}, the exponent becomes $\frac{(\gamma-1)\xi}{\gamma}$ that slows the rate of tending to infinity as $\gamma \to 1$ when $\xi$ is large.  
 
\end{remark}

\subsection{Proof Sketch} \label{proof_sketch}

The proof of the approximation result in Theorem \ref{thm:app} relies on the following error decomposition
\begin{equation} \label{eq:apperror}
    \begin{aligned} 
    &\left\| \Pg- \mathrm{T}_{2n,m,\tm} \right\|_{L^{2}(\ngg)}\\ \leq & \| \Pg-\mathcal{N}_{2n} \|_{L^{2}(\ngg)}+ \| \mathcal{N}_{2n}- \Psi _{2n,m} \|_{L^{2}(\ngg)}+ \left\| \Psi _{2n,m}- \mathrm{T}_{2n,m,\tm} \right\|_{L^{2}(\ngg)} 
\end{aligned}
\end{equation} for $\mathrm{T}_{2n,m,\tm} \in \mathcal{H}_{\T_{2n}}$, where $\mathcal{N}_{2n}$ is a shallow neural network with operator-valued parameters in $\mathcal{L}(\hp;\mathbb{R}^d)$ and $\Psi _{2n,m}$ is a neural network with latent polynomial features depending on the parameterization of $\kr$, both explicitly constructed in Section \ref{sec:app}. 

The approximation error $\| \Pg-\mathcal{N}_{2n} \|_{L^{2}(\ngg)}$ is estimated by random approximation results from \cite{korolev2022twolayer} in Appendix \ref{sec:app}. The estimation of the last two terms in the RHS of (\ref{eq:apperror}) relies on Assumption \ref{ap:2} to bound $L^2$ errors under first-stage samples $\rho_X$ with $L^2$ error under the reference probability distribution $\rk$, as described in Appendix \ref{sec:app2} and \ref{sec:app3}. More specifically, for the second term $\| \mathcal{N}_{2n}- \Psi _{2n,m} \|_{L^{2}(\ngg)}$, $\mathcal{N}_{2n}$ can be regarded as a neural network with countably infinite feature-function parameters in an RKHS, and $\Psi _{2n,m}$ is a neural network constructed by the optimal choice of selecting only $m$ feature-function parameters based on the parameterization of $\kr$. For the last term $\left\| \Psi _{2n,m}- \mathrm{T}_{2n,m,\tm} \right\|_{L^{2}(\ngg)}$, the idea is to show the linear attentions in $\mathrm{T}_{2n,m,\tm}$ are fast universal approximators for any feature-function parameters in RKHS $\mathcal{H}_{\kr}$ without any prior knowledge on the parameterization of $\kr$. The approximation is considered with the supremum norm on the bounded domain $[-B, B]^d$, and the error outside this domain is controlled by a Gaussian tail decay (see Appendix \ref{appendix2}).


For the generalization analysis, we define the first-stage sampling error $\mathcal{E}_{N}$ as $$\mathcal{E}_{N} (\Phi)= \frac{1}{N} \sum_{i=1}^{N} \mathbb{E}\Big[\| \Phi (\tilde{X})-Y \|_{2}^{2} |\rho_{XY}^{(i)}\Big]$$ and the second-stage sampling error $\ex$ with ground truth context as  $$\mathcal{E}_{N,\mathcal{X}}(\Phi)= \frac{1}{N}\sum_{i=1}^{N} \frac{1}{n_{i}} \sum_{j=1}^{n_{i}} \| \Phi(\rho_{X}^{(i)}, X_{ij})- Y_{ij} \|_{2}^{2}.$$ 

Then for any $\Phi \in \ho$, we have the following error decomposition 
\begin{align*}
\mathcal{E}(\ts)- \mathcal{E}(\Phi _{\mathcal{G}}) &=  \mathcal{E}(\ts)- \eg(\ts)+ \eg(\ts)- \ex(\ts)\\&+ \ex(\ts)-\mathcal{E}_{\mathbb{S}}(\ts) + \mathcal{E}_{\mathbb{S}}(\ts)- \mathcal{E}_{\mathbb{S}}(\Phi) \\ &+\mathcal{E_{\mathbb{S}}}(\Phi) - \ex(\Phi) + \ex(\Phi)- \eg(\Phi)+\eg(\Phi)-\mathcal{E}(\Phi) \\ &+ \mathcal{E}(\Phi)- \mathcal{E}(\Phi _{\mathcal{G}})
\end{align*} with $\E(\ts)- \E(\Pg)= \| \ts - \Pg \|_{L^2(\nx)}^2$ and $\mathcal{E}_{\mathbb{S}}(\ts)- \mathcal{E}_{\mathbb{S}}(\Phi) \leq 0.$

Let 
\begin{align*}
& \mathcal{E}_{1}(\ts)= \Big(\mathcal{E}(\ts)- \mathcal{E}(\Phi _{\mathcal{G}})\Big) - \Big(\eg(\ts)-\eg(\Phi _{\mathcal{G}} )\Big), \\
& \mathcal{E}'_{1}(\Phi)=  \Big(\eg(\Phi)-\eg(\Phi _{\mathcal{G}}) \Big)-\Big(\mathcal{E}(\Phi)-\mathcal{E}(\Phi _{\mathcal{G}}) \Big) , \\
& \mathcal{E}_{2}(\ts) = \eg(\ts) - \ex (\ts), \\
& \mathcal{E}'_{2} (\Phi) =  \ex(\Phi ) -\eg(\Phi) ,  \\
& \mathcal{E}_{3}(\ts) = \ex(\ts) - \mathcal{E}_{\mathbb{S}}(\ts), \\
& \mathcal{E}'_{3}(\Phi) =  \mathcal{E}_{\mathbb{S}}(\Phi)-\ex(\Phi) \text{ and } \mathcal{E}_{4}(\Phi) = \mathcal{E}(\Phi)- \mathcal{E}(\Phi _{\mathcal{G}}).
\end{align*}
Then we have 
\begin{equation} \label{eq:error}
   \begin{aligned} 
\mathcal{E}(\ts)- \mathcal{E}(\Phi _{\mathcal{G}})\leq&  \mathcal{E}_{1}(\ts)+\mathcal{E}'_{1}(\Phi)+ \mathcal{E}_{2}(\ts)+ \mathcal{E}'_{2}(\Phi)\\&+\mathcal{E}_{3}(\ts)+\mathcal{E}'_{3}(\Phi)+\mathcal{E}_{4}(\Phi).
   \end{aligned}  
\end{equation}

In the first-stage sampling, we control the effect of unbounded sampling on $\mathcal{E}_{1}(\ts)$ by using the covering number under a pseudo-metric defined by the supremum of distribution expectations over $\Bb$ (Appendix \ref{error:first}). In the second-stage sampling, the situation becomes more complicated with the structure of linear transformers, so we decompose the second-stage sampling error into two parts: $\mathcal{E}_{2}(\ts),\mathcal{E}'_{2}(\Phi)$ defined using \textit{ground truth contexts} (called \textit{pseudo second-stage sampling} in Appendix \ref{error: gtc}) and $\mathcal{E}_{3}(\ts),\mathcal{E}'_{3}(\Phi)$ defined using \textit{accessible contexts} (Appendix \ref{error:acc}). 

For the case with \textit{ground truth contexts}, the similar idea with the first-stage sampling applies: after introducing Rademacher complexity for sampling estimation, we bound the empirical process by the Dudley integral. We then use concavity to move the expectations over second stage samples into the covering number expression, thereby eliminating the effect of unbounded samples. For the last sampling estimation with \textit{accessible contexts}, the problem becomes even more challenging in the presence of memory units in linear transformers, since both the input samples and the memory-unit outputs are unbounded. Here, we apply an extension of Azuma-McDiarmind’s inequality under subgaussian conditions to obtain a distribution-dependent probability concentration inequality where an observation on the relation between $\|\w\|_{L^{\gamma}(\rk)}$ and subgaussian norm plays an important role. To obtain the final generalization error, we apply the expectation identity for non-negative random variables to bring $\|\w\|_{L^{\gamma}(\rk)}$ out of the denominator and the exponential, so that we can take an expectation of $\|\w\|_{L^{\gamma}(\rk)}$ with respect to $\pg$ by Assumption \ref{ap:2}.

\section{Related Works and Discussions} \label{discussion}

\subsection{In-Context Learning}
Prior studies \citep{garg2023what,akyurek2023what,zhang2023trained,shen2025understanding} often formulate in-context learning as predicting the label of a given sample conditioned an input prompt containing other samples and labels. As noted in \citet{zhang2023trained}, a model $\Phi$ performs in-context learning as $$\Phi: \mathcal{S} \times \mathcal{X} \to \mathcal{Y}, \quad \mathcal{S}= \cup_{n \in \mathbb{N}} \big\{(x_1,y_1,...,x_n,y_n): x_i \in \mathcal{X}, y_i \in \mathcal{Y}\big\}$$ where $\mathcal{X}$ is the input space and $\mathcal{Y}$ the output space, and $\Phi$ is trained on prompts of the form $\mathscr{P}=(x_1,h(x_1), ..., x_{\vartheta},h(x_{\vartheta}),x_{\text{query}})$ with $h \sim \mathcal{P}$ a distribution defined on a function space $H$ to minimize the error $\mathbb{E}_{\mathscr{P}}l(\Phi(\mathscr{P}), h(x_{\text{query}}))$ with a loss function $l$. Previous theoretical work has focused on linear function spaces \citep{zhang2023trained} and Hölder spaces \citep{shen2025understanding}. These studies have demonstrated that transformers can perform well on structured prompts of input-output pairs, but this formulation has two limitations to bridge the gap between theory and application \citep[see][]{min2022rethinking}. First, in-context learning emerges as a property of LLMs after pretraining on tasks like autoregression or diffusion-based generation. A pretrained LLM can perform in-context learning without any parameter updates \citep{xie2022explanation}, which is not consistent with theoretical settings that require training on structured prompts.   
Second, prompts for in-context learning are often unstructured and may lack labels. For example, in machine translation from English to French, the input prompt may contain only instructions in English. Empirical studies \citep{min2022rethinking} also show that the correct mapping between inputs and true labels in prompts has little performance gains for in-context learning: model performance with random labels closely matches that with true labels.

 We address these problems by the domain generalization framework \citep{blanchard2011generalizing,blanchard2021domain} and formulate in-context learning as operator learning with the two-staged sampling process: 
\begin{align} \label{def:in-context}
    \Phi: \hat{\rho}_X^{(i)} \mapsto (h: \mathcal{X} \to \mathcal{Y}), \quad \hat{\rho}^{(i)}_X= \delta([x_{i1},..., x_{in_i}]) \text{ with }x_{ij} \in \mathcal{X}.
\end{align}
This formulation suggests that the operator $\Phi$ maps the context distribution $\hat{\rho}^{(i)}_X$ to a response function $h_{\hat{\rho}^{(i)}_X}$ that takes queries from $\mathcal{X}$ and outputs $h_{\hat{\rho}^{(i)}_X}(x_{\text{query}})$ for any $x_{\text{query}} \in \mathcal{X}$, which aligns with both the nature of transformers as context-based representation learning and also the parameter-freezing setting after pretraining for in-context learning. With a richer unstructured prompt $[x_{i1},...,x_{in}]$ by more and more samplings ($X_{ij} {\sim} \rho^{(i)}_X$) from the ground truth context distribution $\rho^{(i)}_X$, the empirical context distribution $\hat{\rho}^{(i)}_X$ can recover $\rho^{(i)}_X$ and then $\hat{\Phi}(\hat{\rho}^{(i)}_X)$ can well approximate $\Phi({\rho}^{(i)}_X)$ without parameter updates, where $\hat{\Phi}$ is a pretrained Transformer model to approximate the operator $\Phi$.

\subsection{Normalization Factor and RMSNorm} \label{sec:RMS}

Early linear attention models \citep{katharopoulos2020transformers,choromanskiRethinkingAttentionPerformers2021} have some softmax-inspired design features \eqref{eq:attention}, such as inserting a normalization denominator as in \eqref{eq:linear-attention}. However, \citet{qin2022devil} demonstrates both theoretically and empirically that linear transformers with \eqref{eq:linear-attention} make the gradients for attention matrices unbounded and lead to a less stable optimization and worse convergence. To alleviate this negative effect, \citet{qin2022devil} removes the normalization factor in \eqref{eq:linear-attention} and applies RMSNorm \citep{zhang2019root} to the linear attention output:  
\begin{align} \label{eq:dev_lin}
  O_{\text{norm}} = \operatorname{RMSNorm}(Q(K^TV))  
\end{align}
where 
\begin{align*}
    Q &= [\phi(Q_1), \cdots, \phi(Q_n)]^T \in \mathbb{R}^{n \times d'}, \\
    K &= [\phi(K_1), \cdots, \phi(K_n)]^T \in \mathbb{R}^{n \times d'}, \\
    V & = [V_1, \cdots, V_n]^T \in \mathbb{R}^{n \times d} 
\end{align*}
and for input $A=(a_{ij})_{i,j} \in \mathbb{R}^{n \times d}$, $A'=\operatorname{RMSNorm}(A) \in \mathbb{R}^{n \times d}$ is defined as
\begin{align*}
    A'=(a'_{ij})_{i,j} \text{ such that }a'_{ij}= \frac{a_{ij}}{\sqrt{\frac{1}{d}\sum_{j=1}^d a_{ij}^2 + \epsilon}} \cdot \beta_j 
\end{align*}
with learnable scaling factor $\boldsymbol{\beta}=(\beta_1, ...,\beta_d)^T \in \mathbb{R}^d.$ RMSNorm reduces the amount of computation and increases efficiency over LayerNorm, and it is widely used in the open-weight LLMs like Qwen3 \cite{yang2025qwen3a}. 

\subsection{Activation Functions in LLM} \label{sec:acf}

The design of activation functions has evolved with the development of LLMs. While the original transformer used ReLU activation by default, early LLMs such as BERT and GPT-2/3 employed the Gaussian Error Linear Unit (GeLU) activation \citep{hendrycks2023gaussian}, and it then became the standard choice. For $x \in \mathbb{R}^d$, GeLU is defined as $$\operatorname{GeLU}(x)= x \odot \operatorname{F}(x)$$ where $\odot$ denotes the Hadamard product and $\operatorname{F}$ denotes the cumulative distribution function for the standard gaussian and applies element-wise on $d$-dimensional vectors. In practice, GeLU activation function is often implemented via a tanh approximation \footnote{Refer to GeLU implementation in Pytorch 2.8 documentation: \url{https://docs.pytorch.org/docs/stable/generated/torch.nn.GELU.html}} as 
\begin{align*}
    \operatorname{GeLU}(x) \approx 0.5x\odot\left[1+\sigma_{\tanh}\left(\sqrt{\frac{2}{\pi}}
\left(x+0.044715x^{\odot3}\right)\right)\right].
\end{align*}
where $x^{\odot3}$ denotes the Hadamard power of order $3$. Similar activation functions such as Swish activation \citep{ramachandran2017searching} were introduced later. It's worth noting that Swish activation is defined as 
\begin{align*}
    \operatorname{Swish}_{\beta}(x) = x \odot \operatorname{Sigmoid}(\beta\, x)
\end{align*}
where $\beta \in \mathbb{R}$ is a constant or learnable parameter and $\operatorname{Sigmoid}$ applies element-wise on $x$ with 
\begin{align} \label{eq:tanh}
   \operatorname{Sigmoid}(a) = \frac{1}{1+ \exp(-a)}=\frac{1}{2}(1+ \sigma_{\tanh}(a/2)) \text{ for } a \in \mathbb{R}. 
\end{align}
SiLU is a special case of Swish with $\beta=1$. 

Finally, combining all tricks above, \citet{shazeer2020glu} proposed SwiGLU which is widely used in modern LLMs and defined as 
\begin{align*}
    \operatorname{SwiGLU}_{\beta}(x) = W_o((W_1x+b_1) \odot \operatorname{Swish}_{\beta}( W_2x+b_2)).
\end{align*}
In our definition of linear transformers \eqref{def:transformer}, we use three nonlinear components: Tanh activation, product gate, and ReLU activation. It is easy to observe the connection between tanh activation and SwiGLU by Equation \eqref{eq:tanh}. For ReLU activation, when $\beta$ is large enough, $\operatorname{Sigmoid}(\beta \, a)$ approximates the indicator function (except at zero) and $\operatorname{Swish}_{\beta}$ behaves like a ReLU activation function. For product gate, when $\beta=0$, $\operatorname{Swish}_{\beta}(x)=x/2$ and then we can obtain $x \odot x$ by SwiGLU activation with a suitable choice of parameters \citep[see][]{ramachandran2017searching}.

\subsection{Linear Conversion of Softmax LLMs} \label{sec:lin_con}

Distilling knowledge from pretrained softmax LLMs into subquadratic models \citep{zhang2024hedgehog} has recently attracted interest in the research community. Here, we present a perspective on linearizing pretrained softmax LLMs, derived from our theoretical analysis framework. 

\begin{figure}[!t]
  \centering
  \begin{subfigure}[t]{0.5\linewidth}
    \centering
    \includegraphics[width=\linewidth]{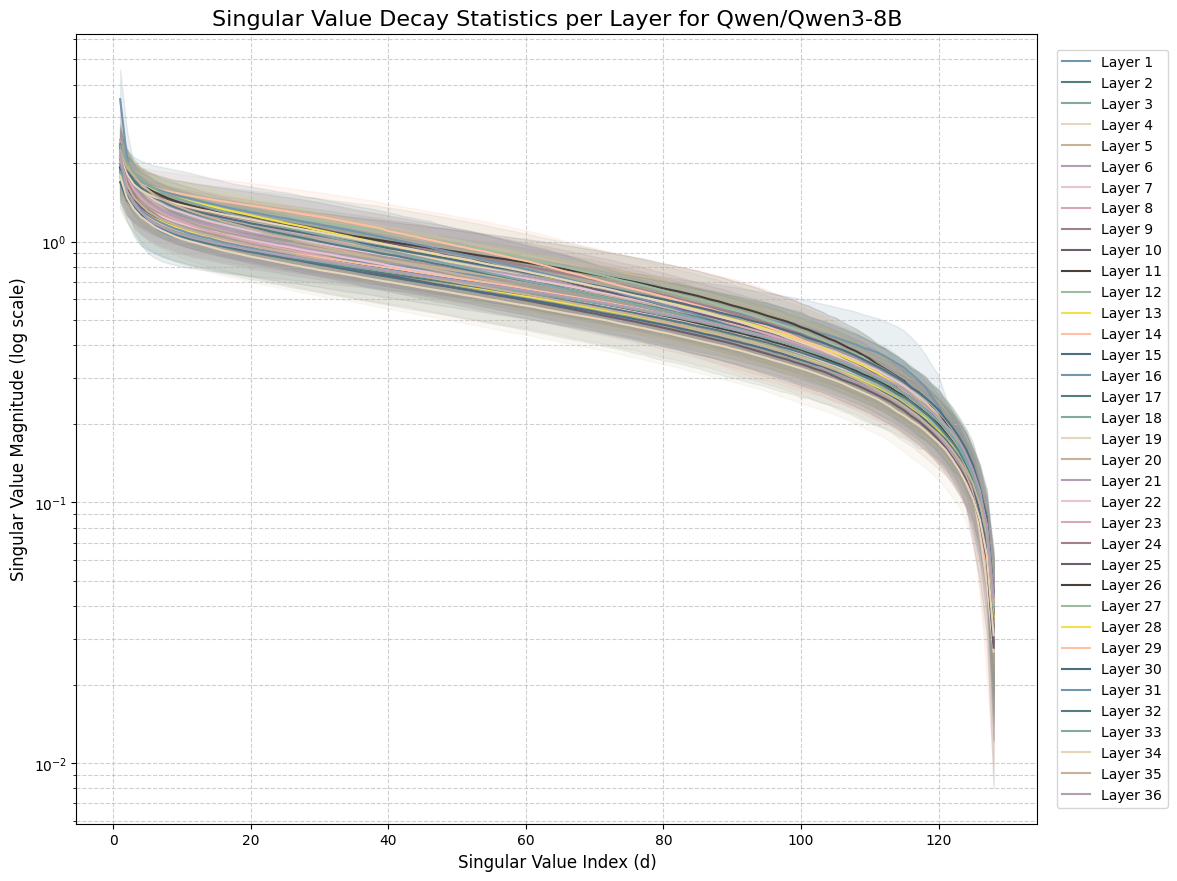}
    \caption{Fast Decay of Singular Values}
  \end{subfigure}\hfill
  \begin{subfigure}[t]{0.5\linewidth}
    \centering
    \includegraphics[width=\linewidth]{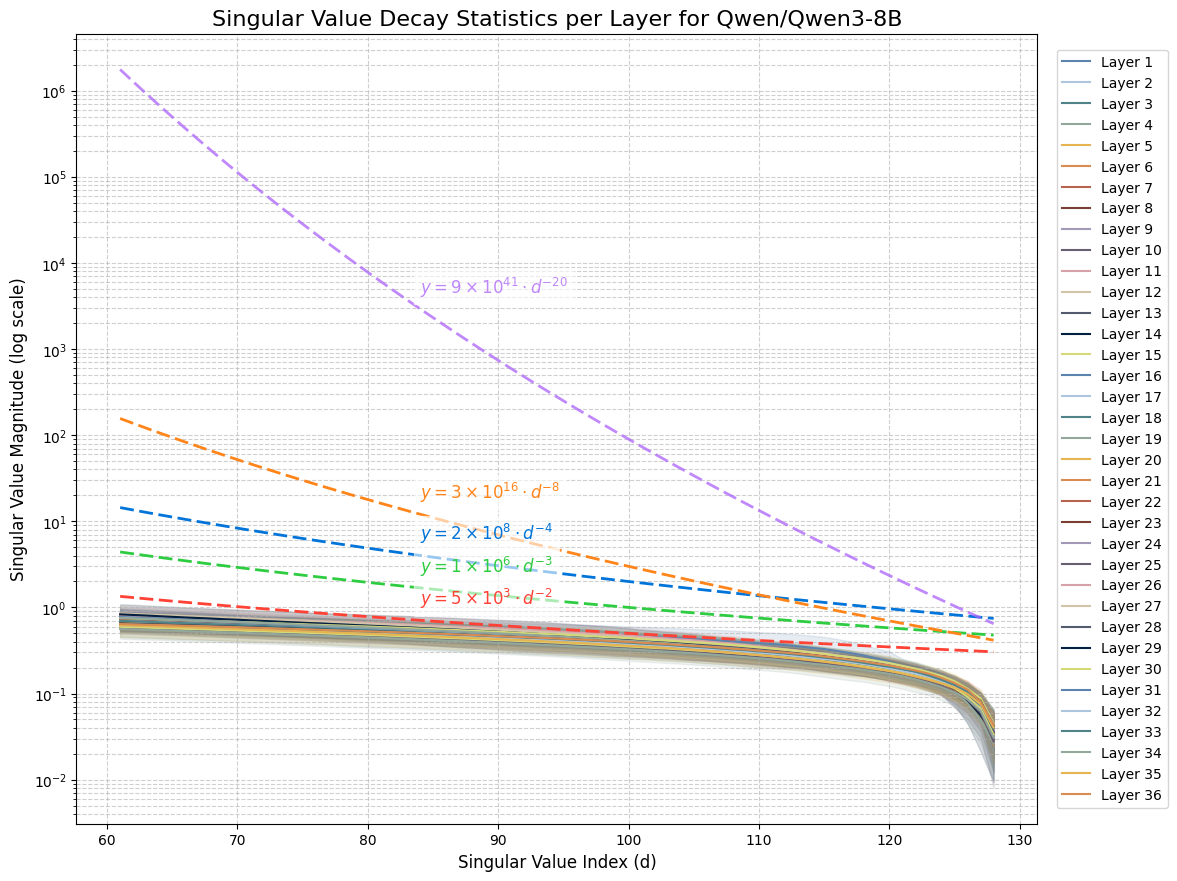}
    \caption{Polynomial Fit of Decay Trend}
  \end{subfigure}
  \caption{Fast Eigendecay of Qwen3-8B (Ghost in the Kernel)}
  \label{fig:eigendecay}
\end{figure}

Recall the softmax attention module \eqref{eq:attention}. Following \citet{tsai2019transformer,liu2025generalization}, we take a kernelized viewpoint of attention modules by letting similarity function $\operatorname{sim}(x_{i},x_{j})= \exp(\langle W_{q}x_{i},W_{k} x_{j}\rangle)$. Then \eqref{eq:attention} can be written as 
\begin{align} \label{eq:attn}
    \operatorname{SoftmaxAttn}(x_{i}|Q)= \frac{1}{\mathrm{Z}(x_{i})}\sum_{j=1}^{n}\operatorname{sim}(x_{i},x_{j})(W_{v}x_{j}) 
\end{align}
where $\mathrm{Z}(x_{i})= \sum_{j=1}^{n}\operatorname{sim}(x_{i},x_{j})$ is a normalization factor. For each pair of query and key weight matrices $(W_{q},W_{k})$, we perform singular value decomposition $W_{q}^{T}W_{k}=W_1^{T} \Sigma_{\boldsymbol{\lambda}} W_2$ where $W_1,W_2$ are $d \times d$ orthogonal matrices and $\Sigma_{\boldsymbol{\lambda}}$ is a positive diagonal matrix with rank $d_{hidden}$. It's obtained that

\begin{equation} \label{eq:sim}
   \begin{aligned} 
\operatorname{sim}(x_{i},x_{j})&=\exp(x_{i}^{T}W_1^{T}\Sigma_{\boldsymbol{\lambda}} W_2x_{j})= \exp(\tilde{x}_{i}^{T}\Sigma_{\boldsymbol{\lambda}}\hat{x}_{j})\\&=\exp\left( \frac{1}{2} \tilde{x}^{T}_{i}\Sigma_{\boldsymbol{\lambda}} \tilde{x}_{i} \right)\exp \left( \frac{1}{2} \hat{x}^{T}_{j} \Sigma_{\boldsymbol{\lambda}}\hat{x}_{j}\right) \exp\left( -\frac{1}{2} (\tilde{x}_{i}-\hat{x}_{j})^{T}\Sigma_{\boldsymbol{\lambda}} (\tilde{x}_{i}-\hat{x}_{j}) \right) 
   \end{aligned} 
\end{equation}
with query $\tilde{x}_{i}=W_1x_{i}$ and key $\hat{x}_{j}=W_2x_{j}$. From the above expression, we observe that the roles of the key and query matrices can be decomposed into kernel asymmetry between queries and keys, represented by orthogonal matrices $W_1,W_2$, and geometric information, represented by a diagonal matrix $\Sigma_{\boldsymbol{\lambda}}$.  In Figure \ref{fig:eigendecay}, we show a numerical demonstration of singular values in diagonal matrices $\Sigma_{\boldsymbol{\lambda}}$ in the attention modules of large language model Qwen3 \citep{yang2025qwen3a} , which exhibits a rapid decay of singular values across layers, nearly exponential for large $d$. The rapid decay of shape parameters enables us to design linear transformers that efficiently mimic softmax attention and alleviate the negative effects of distribution shifts in our analysis. 

For the construction of a linear attention, the key is to decouple the interaction in $\operatorname{sim}(x_i, x_j)$ between queries and keys as shown in (\ref{eq:linear-attention}). For similarity function in Equation \ref{eq:sim}, \textit{the target is to find a decoupling of query-key interaction between $\tilde{x}$ and $\hat{x}$ for the anisotropic gaussian kernel $\kr(\tilde{x},\hat{x})=\exp (- \frac{1}{2}(\tilde{x}-\hat{x})^T \Sigma_{\boldsymbol{\lambda}}(\tilde{x}-\hat{x}))$ to mimic context modeling in softmax attention.} In Appendix \ref{appendix1}, we know that there's an optimal linear approximation scheme for context embedding, denoted as $(\p_q)_{q=1}^m$ with explicit expressions, which only depends on geometric information of $\Sigma_{\boldsymbol{\lambda}}$ and underlying context distributions on $\mathbb{R}^d$. With a suitable choice of $m$, we have 
\begin{align} \label{sim:decom}
   \operatorname{sim}(x_i, x_j) \approx \sum_{q=1}^m \underbrace{ \exp\left( \frac{1}{2} \tilde{x}_{i}^{T}\Sigma _{\boldsymbol{\lambda}}\tilde{x}_{i}  \right) \p_{q}(\tilde{x}_{i}) }_{ query } \cdot \underbrace{ \exp\left( \frac{1}{2} \hat{x}_{i}^{T}\Sigma _{\boldsymbol{\lambda}}\hat{x}_{i}  \right) \p_{q}(\hat{x}_{i})  }_{ key }.
\end{align}

Compared with previous methods \citep{katharopoulos2020transformers, choromanskiRethinkingAttentionPerformers2021} for linear attention, the choice of $(\p_q)$ captures the latent data structures learned by the pretrained softmax LLM and makes use of its parameters. However, in practice, it may be difficult to directly apply the approximation (\ref{sim:decom}), because the semantic distributions of tokens are highly anisotropic and hard to estimate, while we use an isotropic Gaussian to roughly control the class of distributions in the two-staged sampling process. But it still provides us an idea to design new activation functions and regularity for feature mappings for linear conversion of pretrained softmax LLMs, just as in \citet{zhang2024hedgehog}. 

\subsection{Conclusion}

In this paper, we investigate the approximation and generalization ability of linear transformers under the two-staged sampling process from domain generalization. We demonstrate that the remarkable generalization and in-context learning capacities are closely related with context-aware structures in linear transformers, and we obtain a dimension-independent convergence rate in the generalization analysis that reveals a trade-off between the regularity of data distributions and the rate of spectral decay. It would be interesting to investigate the linearization algorithm proposed in Subsection \ref{sec:lin_con} through extensive empirical experiments with various open-weight LLMs and optimization methods from theoretical aspects in operator learning \citep{mucke2021stochastic,nguyen2024optimal}. An efficient, tunable conversion algorithm would be a breakthrough for adapting LLMs to long-context scenarios. On the other hand, hybrid models with both linear and softmax attention have shown their potentials to take advantage of both architectures, and theoretical analysis for such hybrid models would be important for understanding the mathematical foundations of Transformer variants.

\acks{The work described in this article was partially supported by a Discovery Project (DP240101919) of the Australian Research Council. The title ``Ghost in the Kernel" is inspired from the 1995 film \textit{Ghost in the Shell}, directed by Mamoru Oshii. On its 30th anniversary, this paper is dedicated as a tribute to that film, which has had a profound influence on the first author’s exploration of artificial intelligence.}

\newpage

\appendix

\section{Proof Details} \label{proof}

\subsection{Theorem \ref{thm:app}: Approximation Scheme by Linear Transformers} \label{sec:app}

By Proposition \ref{prop:il}, we know that $\il:\Omega \to \hf$ is continuous, which introduces a pushforward probability measure $\mg$ on $\hf$ by $\mg=\ngg \circ \il^{-1}$. Then we define a probability measure $\tmg=\mg \times \delta_1$ on $\hp$. For $h\in \hp$, let $h_{\mathcal{F}}= \operatorname{Proj}_{\hf}(h)$ and we have 
\begin{align*}\int  _{\hp} \| h \|_{\hp}^{2}  \, d \tmg(h)&= 1+\int _{\hf} \| h_{\mathcal{F}} \|_{\hf}^{2}  \, d \mg(h_{\mathcal{F}}) =1+ \int  _{\ob} \| \Kr(\rho) \otimes \kr(x,\cdot) \|_{\hf}^{2}  \, d \ngg(\rho,x) \\ & = 1+ \int _{\ob} \| \Kr(\rho) \|_{\mathcal{H}_{\kr}\otimes \mathbb{R}^{d}}^{2}\| \kr(x,\cdot) \|_{\mathcal{H}_{\kr}}^{2}   \, d \ngg(\rho,x) \leq 1+ \bc. \end{align*}

We also note that for $W\in \mathcal{L}(\hp;\mathbb{R}^d)$, $\|W\|_{\text{op}}\leq \|W\|_{\text{HS}}\leq \sqrt{d}\|W\|_{\text{op}}$ where $\|\cdot\|_{\text{HS}}$ denotes Hilbert-Schmidt norm of $\mathcal{HS}(\hp;\mathbb{R}^d)$. let $\{e_l\}_{l=1}^d$ denote an orthonormal basis of $\mathbb{R}^d$ and for any $h \in\hp$, we have $\langle Wh,  e _{l} \rangle _{\mathbb{R}^{d}}= \langle  h, W^{*} e _{l} \rangle _{\hp}$. Denote $w_{l}=W^{*}e _{l} \in \hp$ and then 
\begin{align} \label{eq:operator}
    Wh= \sum_{l=1}^{d} \langle h, w_{l} \rangle _{\hp} e _{l}= \sum_{l=1}^{d} (e _{l} \otimes w _{l})h.
\end{align}
It implies that the elements in $\bo$ share the form $W=\sum_{l=1}^{d} e _{l} \otimes w_{l}$ where $w_{l} \in \hp$ such that 
\begin{align*}
    \| W \|_{\mathrm{op}}= \sup_{\| h \|_{\hp}= 1} \left( \sum_{l=1}^{d} \langle w_{l},h \rangle _{\hp}^{2} \right)^{\frac{1}{2}}\leq 1.
\end{align*}

For the probability measure $\tmg$ and $F \in \mathcal{F}_1(\hp;\mathbb{R}^d)$, we have the following lemma from Theorem 3.24 in \cite{korolev2022twolayer} for the approximation by shallow nets with operator-valued parameters.

\begin{lemma} \label{lem1}
    For any probability measure $\mu$ on $\hp$ with a finite second moment and $F \in \mathcal{F}_{1}(\hp;\mathbb{R}^{d})$, there exists a shallow neural network $\mathcal{N}_{n}$ such that $$\| F-\mathcal{N}_{n} \|^{2}_{L^{2}_{\mu}(\hp;\mathbb{R}^{d})} \leq \frac{ \| F \|_{\mathcal{F}_{1}}^{2}\,\mathbb{E}_{h \sim\mu}\|h\|^2_{\hp}}{n},$$ and the shallow neural network $\mathcal{N}_{n}$ has the form   
    \begin{align*}
        \mathcal{N}_{n}(h)&=\sum_{j=1}^{n}\alpha _{j} \sigma(W_{j}h) \text{ with } \alpha_j \in \mathbb{R}, W_j \in \mathrm{B}(\hp;\mathbb{R}^d) \text{ for } 1 \leq j \leq n \text{ and } \|\alpha\|_1 \leq \|F\|_{\mathcal{F}_1}.
    \end{align*}
\end{lemma}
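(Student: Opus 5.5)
This is a Maurey-type sampling argument, as in Barron's original proof, adapted to operator-valued parameters.

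\emph{Integral representation.} Since $\|F\|_{\mathcal{F}_1}<\infty$, for any $\epsilon>0$ we pick a signed Radon measure $\mu_0$ on $\bo$ with $F=F_{\mu_0}$ and $\|\mu_0\|_{\mathcal{M}}\le \|F\|_{\mathcal{F}_1}+\epsilon$. We use the Jordan decomposition $\mu_0=\mu_0^+-\mu_0^-$ and set $|\mu_0|=\mu_0^++\mu_0^-$. Writing $s(W)=d\mu_0/d|\mu_0|\in\{\pm1\}$, we get
\[
F(h)=\|\mu_0\|_{\mathcal{M}}\,\mathbb{E}_{W\sim\pi}\bigl[s(W)\sigma(Wh)\bigr],
\]
where $\pi=|\mu_0|/\|\mu_0\|_{\mathcal{M}}$ is a probability measure on $\bo$.

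\emph{Random sampling.} Draw $W_1,\dots,W_n$ i.i.d.\ from $\pi$. Set $\alpha_j=\|\mu_0\|_{\mathcal{M}}s(W_j)/n$, so that $\|\alpha\|_1=\|\mu_0\|_{\mathcal{M}}$, and let $\mathcal{N}_n(h)=\sum_j\alpha_j\sigma(W_jh)$. By Fubini and independence,
\[
\mathbb{E}_{W}\|F-\mathcal{N}_n\|^2_{L^2_\mu}=\frac{1}{n}\int_{\hp}\mathrm{Var}_{W\sim\pi}\bigl(\|\mu_0\|_{\mathcal{M}}s(W)\sigma(Wh)\bigr)\,d\mu(h).
\]
For the variance we bound it by the second moment. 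Using that ReLU is $1$-Lipschitz with $\sigma(0)=0$ componentwise, we have $\|\sigma(Wh)\|_2\le\|Wh\|_2\le\|W\|_{\mathrm{op}}\|h\|_{\hp}\le\|h\|_{\hp}$. Hence the right-hand side is at most $\|\mu_0\|_{\mathcal{M}}^2\,\mathbb{E}_\mu\|h\|^2_{\hp}/n$. Because the expectation over the random draw satisfies this bound, some realization of $(W_j)$ does as well.

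\emph{Removing $\epsilon$.} The main obstacle is getting the constant to be exactly $\|F\|_{\mathcal{F}_1}$ rather than $\|F\|_{\mathcal{F}_1}+\epsilon$. We need both the $L^2$ bound and $\|\alpha\|_1\le\|F\|_{\mathcal{F}_1}$. I would first try to show that the infimum in Definition \ref{def:vs} is attained. The route is weak* compactness of bounded sets in $\mathcal{M}(\bo)$, which holds because $\bo$ is weakly compact and metrizable on bounded sets of the separable Hilbert space $\mathcal{L}(\hp;\mathbb{R}^d)$. Combined with continuity of $W\mapsto\sigma(Wh)$ in the weak topology for fixed $h$, this makes $F_\mu(h)$ continuous under weak* limits, so a minimizing sequence has a limit that represents $F$ with norm exactly $\|F\|_{\mathcal{F}_1}$. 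This argument has two technical points that need care: the compactness and continuity claims require separability of $\hp$, which I would assume, and the integrand must be measurable in $W$. If attainment fails, the fallback is a limiting argument. Take $\epsilon_k\to0$, which gives finitely many parameters in the compact set $[-C,C]^n\times\bo^n$. Extract a convergent subsequence, and pass to the limit in the $L^2_\mu$ error by dominated convergence, using the dominating function $(C\|h\|)^2\in L^1(\mu)$ and weak continuity of $W\mapsto \sigma(Wh)$. This yields the stated bound exactly.
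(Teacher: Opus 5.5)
Your argument is correct and is essentially the paper's route: the paper gives no proof of Lemma~\ref{lem1} and imports it from Theorem~3.24 of \citet{korolev2022twolayer}. That theorem is a Monte Carlo rate of the same Maurey type as your sampling from the normalized total variation of a representing measure, with $\|\sigma(Wh)\|_2\le\|W\|_{\mathrm{op}}\|h\|_{\hp}\le\|h\|_{\hp}$ giving the constant $1$. Your extra care in replacing $\|F\|_{\mathcal{F}_1}+\epsilon$ by $\|F\|_{\mathcal{F}_1}$ is sound either way: weak* compactness in $\mathcal{M}(\bo)$ plus weak continuity of $W\mapsto\sigma(Wh)$ yields a minimal representing measure, and the dominated-convergence fallback also works. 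The separability you assume in fact holds, since $\h$ is the RKHS of a continuous kernel on $\mathbb{R}^d$, which makes $\hp$ separable.
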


We apply the above lemma with $\mu=\tmg$. Then for $h =(h_{\mathcal{F}},1)$, $\mathcal{N}_n(h)$ can be further written by (\ref{eq:operator}) into 
\begin{align*}
    \mathcal{N}_{n}(h)&=\sum_{j=1}^{n}\alpha _{j} \sigma(W_{j}h)=\sum_{j=1}^{n}\alpha _{j}\sigma\left( \sum_{l=1}^{d}(e _{l} \otimes w_{j,l})h \right)\\&=\sum_{j=1}^{n}\alpha _{j}\sigma\left( \sum_{l=1}^{d} (\langle v _{j,l}, h_{\mathcal{F}}  \rangle  _{\hf} + b_{j,l})e _{l} \right) \\ &=\sum_{j=1}^{n} \alpha _{j} \sigma\left( \sum_{l=1} ^{d} \langle  v_{j,l}, h_{\mathcal{F}} \rangle _{\hf}e _{l} + b_{j} \right)
\end{align*}
where $b_{j} = \sum_{l=1}^{d}b_{j,l}e _{l} \in \mathbb{R}^{d}, w_{j,l}=(v_{j,l}, b_{j,l})$ with $v_{j,l} \in \hf$ and $b_{j,l} \in \mathbb{R}$ such that $$\| W_{j} \|_{\mathrm{op}}=\sup_{\| h \|_{\hp}=1} \left( \sum_{l=1}^{d}  \langle w_{j,l}, h \rangle _{\hp}^{2} \right)^{\frac{1}{2}} \leq 1.$$ 
We also know that for $h=(h_{\mathcal{F}},1)$ in the support of $\tmg$, $h_{\mathcal{F}}=\Kr(\rho) \otimes \kr(x, \cdot)$ for some $(\rho,x) \in \ob$. It follows that $\| h \|_{\hp} \leq \sqrt{ 1+C_{\mathcal{B}} }$ and $\| W_{j} h \|_{2} \leq \sqrt{ 1+\bc }$ for $h \in \operatorname{supp}(\tmg)$. Then we construct a double-width shallow neural network $\mathcal{N}_{2n}$ as 
 \begin{align*}
\mathcal{N}_{2n}(h)&=  \sum_{j=1}^{n} \alpha _{j} \Big(\sigma(W_{j}h+\sqrt{ 1+\bc } \mathbf{1}_{d})-\sigma(W_{j}h- \sqrt{ 1+\bc }\mathbf{1}_{d})-\sqrt{ 1+\bc }\mathbf{1}_{d}\Big) \\ &= \sum_{j=1}^{2n} \alpha' _{j} \sigma\left( W'_{j}h+ (-1)^{\left\lfloor  \frac{j-1}{n}  \right\rfloor}\sqrt{ 1+\bc } \mathbf{1}_{d} \right) - \sum_{j=1}^{n} \alpha _{j}\sqrt{ 1+\bc } \mathbf{1}_{d} \\ &= \sum_{j=1}^{2n} \alpha' _{j} \sigma\left( \sum_{l=1}^{d} \langle v'_{j,l}, h_{\mathcal{F}} \rangle _{\mathcal{H}_{\mathcal{F}}} e _{l} + b'_{j} \right) +b'_{0}
\end{align*}  to realize the same approximant in Lemma \ref{lem1} by adding additional bias vectors and letting $\alpha _{j}'= -\alpha' _{j+n}=\alpha _{j}$,  $v'_{j,l}=v'_{j+n,l}=v_{j,l}$, $b'_{j}=b_{j}+\sqrt{ 1+\bc }\mathbf{1}_{d}$, $b'_{j+n}= b_{j}- \sqrt{ 1+\bc }\mathbf{1}_{d}$ for $1 \leq j \leq n$, and $b_{0}= - \sum_{j=1}^{n} \alpha _{j} \sqrt{ 1+ \bc } \mathbf{1}_{d}$. In the following content, we still use $(\alpha _{j}, v_{j,l},b_{j},b_{0})$ as notations for parameters in $\mathcal{N}_{2n}$ with no confusion.

\subsubsection{Neural Network with Latent Polynomial Features} \label{sec:app2}

Recall that for $(\rho,x)$ sampled from the probability measure $\ngg$ on $\ob$, we take the feature $\il(\rho ,x)=\Kr(\rho) \otimes \kr(x,\cdot) \in \hf$ as the model input. Let $(\p_{q})_{q \in \mathbb{N}}$ be the orthonormal basis of $\h$ and $(\rb_q)_{q \in \mathbb{N}}$ the eigenvalue sequence as defined in Appendix \ref{appendix1}. Then for each pair $(j,l)$, $v_{j,l}$ can be written as 
\begin{align*}
  v_{j,l}=\sum_{p,q \in\mathbb{N}} \sum_{1\leq s\leq d} a_{p,q,s}^{(j,l)}(\p_{p} \otimes e _{s}) \otimes \p_{q} \text{ with } \sum_{p,q \in\mathbb{N}} \sum_{1 \leq s \leq d} (a_{p,q,s}^{(j,l)})^{2} \leq 1 . 
\end{align*}
It follows that 
\begin{align*}\langle v_{j,l},\il(\rho,x) \rangle _{\hf }&= \sum_{p,q \in\mathbb{N}} \sum_{1 \leq s\leq d}a_{p,q,s}^{(j,l)}\langle  \p_{p}\otimes e _{s} \otimes \p_{q}, \Kr(\rho) \otimes \kr(x,\cdot) \rangle _{\hf}\\ &=\sum_{p,q \in\mathbb{N}} \sum_{1 \leq s \leq d} a _{p,q,s}^{(j,l)}\p_{q}(x)\int \p_{p}(y) e _{s}^Ty \, d \rho(y). \end{align*}

The basic idea for constructing a neural network with latent polynomial features is to estimate the truncation error for the query index $q$ and the context memory index $p$.

\vskip 0.1in 

First we consider to estimate the truncation error on index $q$. We make one step further by writing $$ \langle  v_{j,l}, \il(\rho,x) \rangle _{\hf} = \sum_{q \in\mathbb{N}}\left( \sum_{p \in\mathbb{N}} \sum_{1 \leq s \leq d} a_{p,q,s}^{(j,l)} \int  \p_{p}(y) e _{ s}^{T}y \, d \rho(y)  \right)\p_{q}(x)=: \sum_{q \in \mathbb{N}}b_{q}^{(j,l)}(\rho)\p_{q}(x).$$ Define 
\begin{align*}
    A^{(j,l)}:l^{2}(\mathbb{N}\times[d]) \to l^{2}(\mathbb{N}) \text{ such that }(A^{(j,l)}z)_{q}=\sum_{p \in\mathbb{N}} \sum_{s \in[d]}a_{p,q,s}^{(j,l)}z_{p,s}
\end{align*}
where $[d]:=\{ 1,\dots,d \}$ and $q \in\mathbb{N}$. It is easy to see that $A^{(j,l)}$ is a Hilbert-Schmidt operator with $\| A^{(j,l)} \|_{\mathrm{HS}}^{2}=\sum _{p,q \in\mathbb{N},s \in[d]}(a_{p,q,s}^{(j,l)})^{2} \leq 1$. Also note that by dominated convergence theorem, $$\sum_{p \in\mathbb{N}} \sum_{s \in[d]}\left( \int \p_{p}(y)\es ^{T}y \, d\rho (y)  \right)^{2}\leq \int \sum_{p \in \mathbb{N}}(\p_{p}(y))^{2} \| y \|^{2} _{2}  \, d\rho(y)=\int  \kr(y,y) \| y \|^{2} _{2} \, d \rho(y) \leq C_{\mathcal{B}},$$ which shows that $\int \p(y) y  \, d\rho(y):=\left( \int \p_{p}(y)\es ^{T}y \, d\rho (y)  \right)_{p \in \mathbb{N},s \in[d]} \in l^{2}(\mathbb{N} \times[d])$. It follows that \begin{align*}
C_{\mathcal{B}}^{\frac{1}{2}} &\geq \| A^{(j,l)} \|_{\mathrm{HS}} \left\|  \int  \p (y)y\, d\rho(y)  \right\|_{l^{2}(\mathbb{N} \times[d])} \\&\geq \left\| A^{(j,l)} \left( \int  \p (y)y\, d\rho(y) \right) \right\|_{l^{2}(\mathbb{N})}=\left( \sum_{q \in\mathbb{N}} (b_{q}^{(j,l)}(\rho))^{2} \right)^{\frac{1}{2}}
\end{align*} and $\sum_{q \in \mathbb{N}} b_{q}^{(j,l)}(\rho)\p_{q} \in\h$ for each $(j,l)$.
Let $$\Psi _{2n,m_{1}}(\rho,x):= \sum_{j=1}^{2n}\alpha _{j}\sigma\left( \sum _{l=1}^{d}\sum_{q=1}^{m_{1}} b_{q}^{(j,l)}(\rho)\p_{q}(x)e _{l} +b_{j} \right) +b_{0}.$$ Then we have 
\begin{align}&\| \mathcal{N}_{2n}(\il(\cdot),1)-\Psi _{2n,m_{1}} \|^{2}_{L^{2}(\ngg)}\notag\\&= \int  _{\ob}\left\| \mathcal{N}_{2n}(\il(\rho,x),1)-\Psi _{2n,m_{1}}(\rho,x) \right\|^{2}  _{2}\, d \ngg(\rho,x)\notag\\&\leq \int  _{\ob} \left\| \sum_{j=1}^{2n} \left|  \alpha _{j} \right| \sum_{l=1}^{d} \left| \sum_{q\geq m_{1}+1}b_{q}^{(j,l)}(\rho)\p_{q}(x)  \right| e _{l}   \right\|^{2} _{2} \, d \ngg(\rho,x) \notag\\&= \int _{\ob} \sum_{l=1}^{d} \left( \sum_{j=1}^{2n}\left| \alpha _{j} \right|\left| \sum_{q\geq m_{1}+1}b_{q}^{(j,l)}(\rho)\p_{q}(x) \right|   \right)^{2} \, d\ngg(\rho,x) \notag\\ & \leq \int _{\ob} \sum_{l=1}^{d} \| \alpha \|_{1} \left[ \sum_{j=1}^{2n} \left| \alpha _{j} \right|\left( \sum_{q\geq m_{1}+1} b_{q}^{(j,l)}(\rho) \p_{q}(x) \right)^{2}   \right] d\ngg(\rho,x) \notag\\ &= \int  _{\Bb} \| \alpha \|_{1}  \sum_{l=1}^{d} \left[ \sum_{j=1}^{2n}   \left| \alpha _{j} \right| \int  _{\mathcal{X}} \left( \sum_{q \geq m_{1}+1} b_{q}^{(j,l)}(\rho) \p_{q}(x) \right)^{2} \, d \rho(x)  \right]  \, d\pg(\rho) \notag\\ & \leq \int  _{\Bb} \| \alpha \|_{1}^{2}  \sum_{l=1}^{d} \left[  \max_{1 \leq j\leq 2n} \int  _{\mathcal{X}} \left( \sum_{q \geq m_{1}+1} b_{q}^{(j,l)}(\rho) \p_{q}(x) \right)^{2} \, d \rho(x)  \right]\, d \pg(\rho) \notag\\ &=: \int _{\Bb} \| \alpha \|_{1}^{2} \sum_{l=1}^{d} \Big(\max_{1\leq j\leq 2n} \mathcal{E}_{j,l}(\rho)\Big) \, d\pg(\rho). \label{place1} 
\end{align}
If $1< \gamma<\infty$, for each pair $(j,l) \in \{ 1,\dots,n \}\times \{ 1,\dots,d \}$, we obtain that 
\begin{align*}\mathcal{E}_{j,l}(\rho)&= \int  _{\mathcal{X}} \left( \sum_{q \geq m_{1}+1} b_{q}^{(j,l)}(\rho)\p_{q}(x) \right)^{2} \, d \rho(x) \\&=\int  _{\mathcal{X}} \left( \sum_{q \geq m_{1}+1} b_{q}^{(j,l)}(\rho)\p_{q}(x) \right)^{2} (\w(x)) d\rk(x) \\ &\leq \| \w \|_{L^{\gamma}(\rk)} \left[ \int  _{\mathcal{X}} \left( \sum_{q \geq m_{1}+1} b_{q}^{(j,l)}(\rho)\p_{q}(x) \right)^{\frac{2\gamma}{\gamma-1}}  \, d \rk(x) \right]^{\frac{\gamma-1}{\gamma}}  \\ & \leq \| \w \|_{L^{\gamma}(\rk)} \left[ \int  _{\mathcal{X}} \left( \sum_{q\geq m_{1}+1} b_{q}^{(j,l)}(\rho) \p_{q}(x)  \right)^{2} \, d\rk(x)  \right]^{\frac{\gamma-1}{\gamma}} \left\| \sum_{q\geq m_{1}+1} b_{q}^{(j,l)}(\rho)\p_{q} \right\|_{\infty}^{\frac{2}{\gamma}} 
\end{align*}
\begin{align*}
& \leq \| \w \|_{L^{\gamma}(\rk)} \left\| \sum_{q \geq m_{1}+1} b_{q}^{(j,l)}(\rho) \p_{q} \right\|_{L^{2}(\rk)}^{\frac{2(\gamma-1)}{\gamma}} \left\| \sum_{q \geq m_{1}+1} b_{q}^{(j,l)}(\rho)\p_{q} \right\|_{\h} ^{\frac{2}{\gamma}} \\ & =  \| \w \|_{L^{\gamma}(\rk)} \left\| \sum_{q \geq m_{1}+1} b_{q}^{(j,l)}(\rho) \p_{q} \right\|_{L^{2}(\rk)} ^{\frac{2(\gamma-1)}{\gamma}} \left( \sum_{q \geq m_{1}+1} (b_{q}^{(j,l)}(\rho))^{2} \right)^{\frac{1}{\gamma}}.   
\end{align*} 
Insert the above estimation back into (\ref{place1}). It can be derived by the approximation result in Appendix \ref{appendix1} that \begin{align*}&\left\| \mathcal{N}_{2n} (\il(\cdot),1) - \Psi _{2n,m_{1}} \right\|^{2} _{L^{2}(\ngg)} \\ \leq & \| \alpha \|_{1}^{2} \int  _{\Bb} \| \w \|_{L^{\gamma}(\rk)} \sum_{l=1}^{d} \left[ \max_{1 \leq j \leq 2n} \left\| \sum_{q \geq m_{1}+1} b_{q}^{(j,l)}(\rho) \p_{q} \right\|_{L^{2}(\rk)} ^{\frac{2(\gamma-1)}{\gamma}} \left( \sum_{q \geq m_{1}+1} (b_{q}^{(j,l)}(\rho))^{2} \right)^{\frac{1}{\gamma}}  \right]\, d \pg(\rho) \\ \leq & 4C_{F}^{2} \int _{\Bb} \| \w \|_{L^{\gamma}(\rk)}  \sum_{l=1}^{d} \left[ \max_{1 \leq j \leq 2n} \left( \left( \sum_{q \in \mathbb{N}} (b_{q}^{(j,l)}(\rho))^{2} \right)^{\frac{1}{2}}C_{\kappa,\xi}m_{1}^{-\xi} \right)^{\frac{2(\gamma-1)}{\gamma}} \left( \sum_{q \in \mathbb{N}} (b_{q}^{(j,l)}(\rho))^{2} \right)^{\frac{1}{\gamma}}   \right]\, d \pg(\rho) \\=& 4C_{F}^{2} \int  _{\Bb} \| \w \|_{L^{\gamma}(\rk)} \sum_{l=1}^{d} \left[ \max_{1\leq j \leq 2n} \left(  \sum_{q \in\mathbb{N}} (b_{q}^{(j,l)}(\rho)) ^{2} \right)  \right]C_{\kappa,\xi,\gamma} m_{1} ^{-\xi \cdot\frac{ 2(\gamma-1)}{\gamma}}  \, d\pg(\rho) \\ \leq & 4C_{F}^{2} \int _{\Bb} \| \w \|_{L^{\gamma}(\rk)} d C_{\mathcal{B}}C_{\kappa,\xi,\gamma} m_{1}^{-\xi \cdot \frac{2(\gamma-1)}{\gamma}}  \, d\pg(\rho) \leq  C_{1} m_{1} ^{- \xi \cdot \frac{2(\gamma-1)}{\gamma}} 
\end{align*} 
where $C_{F}=\| F \|_{\mathcal{F}_{1}}$ and $C_{1}= 4dC_{F}^{2}C_{\mathcal{B}}C_{\kappa,\xi,\gamma}C_{\mathcal{G}}^{\frac{1}{2}}$.

\vskip 0.1in 

For $\gamma=\infty$, it's easy to obtain that $$\mathcal{E}_{j,l}(\rho)\leq \| \w \|_{L^{\infty}(\rk)} \| \sum_{q \geq m_{1}+1} b_{q}^{(j,l)}(\rho) \p_{q}\|_{L^{2}(\rk)}^{2}$$ and then $$\| \mathcal{N}_{2n} -\Psi _{2n,m_{1}} \|_{L^{2}(\pi)}^{2} \leq C_{1}m_{1}^{-2\xi}.$$

Next we perform a truncation on the index $p$: we let $$\Psi _{2n,m_{1},m_{2}}(\rho,x):= \sum_{j=1}^{2n}\alpha _{j}\sigma\left( \sum _{l=1}^{d}\sum_{q=1}^{m_{1}}\sum_{p=1}^{m_{2}}\sum_{s=1}^{d} a _{p,q,s}^{(j,l)}\p_{q}(x)\int \p_{p}(y) (e _{l}e _{s}^T)yd\rho(y) +b_{j} \right)+b_{0}.$$ and $a_{p,s}^{(j,l)}(x):= \sum_{q=1}^{m_{1}}a_{p,q,s}^{(j,l)}\p_{q}(x)$. Then we have 

\begin{align}& \left\| \Psi _{2n,m_{1}} - \Psi _{2n,m_{1}m_{2}} \right\|_{L^{2}(\ngg)}^{2} =  \int _{\ob} \| \Psi _{2n,m_{1}}(\rho,x)-\Psi _{2n,m_{1},m_{2}}(\rho,x) \|_{2}^{2}  \, d \ngg(\rho,x) \notag \\ \leq & \int  _{\ob} \left\| \sum_{j=1}^{2n} \left|  \alpha _{j} \right| \sum_{l=1}^{d} \left|  \sum_{q=1}^{m_{1}} \sum_{p\geq m_{2}+1} \sum_{s=1}^{d} a_{p,q,s}^{(j,l)}\p_{q}(x)\int  \p _{p}(y) e _{s}^{T}y \, d\rho(y)  \right|e  _{l}   \right\|_{2}^{2}  \, d \ngg (\rho ,x)\notag  \\ =& \int  _{\ob} \sum_{l=1}^{d} \left( \sum_{j=1}^{2n} \left|  \alpha _{j} \right|  \left|  \sum_{q=1}^{m_{1}} \sum_{p\geq m_{2}+1} \sum_{s=1}^{d} a_{p,q,s}^{(j,l)}\p_{q}(x)\int  \p _{p}(y) e _{s}^{T}y \, d\rho(y)  \right|   \right)^{2}  \, d \ngg (\rho ,x) \notag \\=: & \int  _{\ob} \sum_{l=1}^{d} \left( \sum_{j=1}^{2n} \left|  \alpha _{j} \right|  \left|  \sum_{p\geq m_{2}+1} \sum_{s=1}^{d} a_{p,s}^{(j,l)}(x)\int  \p _{p}(y) e _{s}^{T}y \, d\rho(y)  \right|   \right)^{2}  \, d \ngg (\rho ,x)  \, \notag  \\ \leq & \int  _{\ob} \| \alpha \|_{1} \sum_{l=1}^{d} \sum_{j=1}^{2n} \left|  \alpha _{j} \right|  \left( \sum_{p\geq m_{2}+1} \sum_{s=1}^{d} a_{p,s}^{(j,l)}(x)\int  \p _{p}(y) e _{s}^{T}y \, d\rho(y)   \right)^{2} \, d\ngg(\rho,x) \notag \\ =: & \int  _{\ob} \| \alpha \|_{1} \sum_{l=1}^{d} \sum_{j=1}^{2n} \left| \alpha _{j} \right|   \mathcal{E}'_{j,l}(\rho,x) \, d\ngg(\rho,x) . \label{place2}
\end{align}
Similarly, by dominated convergence theorem and the integral shift to the gaussian distribution $\rk$, we obtain 
\begin{align*}
\E_{j,l}'(\rho,x)&=\left( \int  \sum_{s=1}^{d} (\es^{T}y) \left( \sum_{p\geq m_{2}+1} a_{p,s}^{(j,l)}(x) \p_{p}(y) \right) \, d \rho(y)  \right)^{2} \\ & = \left( \int  \sum_{s=1}^{d} (\es^{T}y) \at \, d\rho(y)  \right)^{2} \leq \left( \int  \|y\|_{2} \left( \sum_{s=1}^{d} \at^{2}\right)^{\frac{1}{2}} \, d \rho(y)  \right)^{2} \\& \leq \mathbb{E}_{\rho}\|Y\|_{2}^{2} \int  \sum_{s=1}^{d} \at^{2} \, d\rho(y) \leq \bc \sum_{s=1}^{d}\int \at^{2}  \w(y)\, d\rk(y) \\&\leq \bc \| \w \|_{L^{\gamma}(\rk)}  \sum_{s=1}^{d} \left[ \int  \big(\at\big)^{\frac{2\gamma}{\gamma-1}} \, d\rk(y)  \right]^{\frac{{\gamma-1}}{\gamma}} \\& \leq \bc \|\w\|_{L^{\gamma}(\rk)} \sum_{s=1}^{d} \underbrace{ \|\ad\|_{L^{2}(\rk)}^{\frac{{2(\gamma-1)}}{\gamma}} }_{\mathrm{I}_{1}^{(j,l,s)}(x)} \underbrace{ \|\ad\|_{\infty}^{\frac{2}{\gamma}} }_{ \mathrm{I}_{2}^{(j,l,s)}(x) }
\end{align*}
Then (\ref{place2}) can be further bounded by 
\begin{align*}&\bc \int _{\Bb} \|\alpha\|_{1} \|\w\|_{L^{\gamma}(\rk)}  \sum_{j=1}^{2n} \left|  \alpha _{j}  \right| \sum_{s,l=1}^{d}\left( \int _{\mathcal{X}} \mathrm{I}_{1}^{(j,l,s)}(x)\mathrm{I}_{2}^{(j,l,s)}(x) \, d \rho(x)  \right)   \,  \, d\pg(\rho)\\ \leq &\,  \bc \int  _{\Bb} \|\alpha\|_{1}^{2}\|\w\|_{L^{\gamma}(\rk)}  \max_{1\leq j\leq 2n}\sum_{s,l=1}^{d}\left( \int  _{\mathcal{X}} \mathrm{I}_{1}^{(j,l,s)}(x)\mathrm{I}_{2}^{(j,l,s)}(x) \, d \rho(x)  \right) \, d\pg(\rho)=:\Delta_{0}  \end{align*}
where the integral of  $\mathrm{I}_{1}^{(j,l,s)}(x)\mathrm{I}_{2}^{(j,l,s)}(x)$ with respect to $\rho$ can be bounded by $$\int_{\mathcal{X}} \mathrm{I}_{1}^{(j,l,s)}(x) \mathrm{I}_{2}^{(j,l,s)}(x) \, d \rho(x) \leq \left\| \mathrm{I}_{1}^{(j,l,s)} \right\|_{L^{\frac{\gamma}{\gamma-1}}(\rho)} \left\| \mathrm{I}_{2}^{(j,l,s)} \right\|_{L^{\gamma}(\rho)}.   $$
For the first term $\mathrm{I}_{1}^{(j,l,s)}$, 
\begin{align*}
\left\| \mathrm{I}_{1}^{(j,l,s)} \right\|_{L^{\frac{\gamma}{\gamma-1}}(\rho)}& = \left( \int_{\mathcal{X}} \left\| \ad \right\|_{L^{2}(\rk)}^{2}    \, d\rho(x)  \right)^{\frac{{\gamma-1}}{\gamma}} \\ &= \left( \int  \int  \left( \sum_{p\geq m_{2}+1}a_{p,s}^{(j,l)}(x) \p_{p}(y) \right)^{2} \, d\rk(y)  \, d \rho(x)  \right)^{\frac{{\gamma-1}}{\gamma}} \\ & = \left( \int \int \sum_{p\geq m_{2}+1} (a_{p,s}^{(j,l)}(x))^{2} (\p_{p}(y))^{2} \, d\rk(y)   \, d\rho(x)  \right)^{\frac{{\gamma-1}}{\gamma}} \\ &= \left( \int  \sum_{p \geq m_{2}+1}\left( \int ( a_{p,s}^{(j,l)}(x))^{2} \, d \rho(x)  \right) (\p_{p}(y) )^{2}\, d\rk(y)  \right)^{\frac{{\gamma-1}}{\gamma}} \\ &= \left\|  \sum_{p \geq m_{2}+1} \left\| a_{p,s}^{(j,l)} \right\|_{L^{2}(\rho)} \p_{p}   \right\|_{L^{2}(\rk)}^{\frac{2(\gamma-1)}{\gamma}} .
\end{align*}

Perform the domain shift to each coefficient of $\p_{p}$ again and we can obtain 
\begin{align*}
\left\| a_{p,s}^{(j,l)} \right\|_{L^{2}(\rho)} &= \int \left( \sum_{q=1}^{m_{1}}a_{p,q,s}^{(j,l)}\p_{q}(x) \right)^{2} \w(x)  \, d\rk(x)   \\ & \leq \| \w \|_{L^{\gamma}(\rk)} \left\|  \sum_{q=1}^{m_{1}}a_{p,q,s}^{(j,l)}\p_{q} \right\|_{L^{2}(\rk)}^{\frac{2(\gamma-1)}{\gamma}} \left\|  \sum_{q=1}^{m_{1}}a_{p,q,s}^{(j,l)}\p_{q}\right\|_{\h}^{\frac{2}{\gamma}}   \\ &= \| \w \|_{L^{\gamma}(\rk)} \left( \sum_{q=1}^{m_{1}} (a_{p,q,s}^{(j,l)})^{2}\rb_{q} \right)^{\frac{\gamma-1}{\gamma}} \left( \sum_{q=1}^{m_{1}} (a_{p,q,s}^{(j,l)})^{2} \right)^{\frac{1}{\gamma}} \\ &\leq \| \w \|_{L^{\gamma}(\rk)} (\rb_{1})^{\frac{\gamma-1}{\gamma}} \sum_{q=1}^{m_{1}} (a_{p,q,s}^{(j,l)})^{2}  ,
\end{align*}
which implies that $\sum_{p\geq m_{2}+1} \left\| a_{p,s}^{(j,l)} \right\|_{L^{2}(\rho)} \p_{p} \in \h$ with the RKHS norm $$\left( \sum_{p\geq m_{2}+1} \left\| a_{p,s}^{(j,l)} \right\|_{L^{2}(\rho)}^{2} \right)^{\frac{1}{2}} \leq\left(  \|\w\|_{L^{\gamma}(\rk)}(\rb_{1})^{\frac{{\gamma-1}}{\gamma}} \sum_{p\geq m_{2}+1} \sum_{q=1}^{m_{1}} (a_{p,q,s}^{(j,l)})^{2} \right)^{\frac{1}{2}}.$$

For the second term $\mathrm{I}_{2}^{(j,l,s)}$, 
\begin{align*}
\left\| \mathrm{I}_{2}^{(j,l,s)} \right\|_{L^{\gamma}(\rho)}&= \left( \int  \left\| a_{\boldsymbol{\lambda}}^{(j,l,s)}(x, \cdot) \right\|_{\infty} ^{2} \, d \rho(x)  \right)^{\frac{1}{\gamma}} \leq \left( \int  \left\| a_{\boldsymbol{\lambda}}^{(j,l,s)}(x, \cdot) \right\|_{\h}^{2}  \, d \rho(x)  \right)^{\frac{1}{\gamma}} \\ & \leq \left( \int  \left\| \sum_{p\geq m_{2}+1} a_{p,s}^{(j,l)}(x) \p_{p} \right\|_{\h}^{2}  \, d\rho(x)  \right)^{\frac{1}{\gamma}}= \left( \int  \sum_{p\geq m_{2}+1}(a_{p,s}^{(j,l)}(x))^{2} \, d\rho(x)  \right)^{\frac{1}{\gamma}} \\ & \leq \left( \int \sum_{p\geq m_{2}+1}\left( \sum_{q=1}^{m_{1}} a_{p,q,s}^{(j,l)}\p_{q}(x) \right)^{2} \,  d\rho(x)  \right)^{\frac{1}{\gamma}} \\ & \leq \left( \int  \left( \sum _{p \geq m_{2}+1}\sum_{q=1}^{m_{1}}(a_{p,q,s}^{(j,l)})^{2} \right)\left( \sum_{q=1}^{m_{1}} \p_{q}(x)^{2} \right) \, d \rho(x)  \right)^{\frac{1}{\gamma}} \\& \leq \left( \sum _{p \geq m_{2}+1}\sum_{q=1}^{m_{1}}(a_{p,q,s}^{(j,l)})^{2} \right)^{\frac{1}{\gamma}} \left( \int  \sum_{q=1}^{\infty} \p_{q}(x)^{2} \, d\rho(x)  \right)^{\frac{1}{\gamma}} \\ & = \left( \sum _{p \geq m_{2}+1}\sum_{q=1}^{m_{1}}(a_{p,q,s}^{(j,l)})^{2} \right)^{\frac{1}{\gamma}}.
\end{align*}
Combine the estimations for $\mathrm{I}_{1}^{(j,l,s)}$ and $\mathrm{I}_{2}^{(j,l,s)}$ and we get an upper bound that 

\begin{align*}
\Delta _{0} &\leq 4C_{\mathcal{B}}C_{F}^{2}\int  _{\Bb} \| \w \|_{L^{\gamma}(\rk)}  \left( \max_{1 \leq j \leq 2n} \sum_{s,l =1}^{d}\left\| \mathrm{I}_{1}^{(j,l)} \right\|_{L^{\frac{\gamma}{\gamma-1}}(\rho)} \left\| \mathrm{I}_{2}^{(j,l)} \right\|_{L^{\gamma}(\rho)}   \right)  \, d\pg(\rho) \\ \leq 4C_{\mathcal{B}}C_{F}^{2} & \int  _{\Bb} \| \w \|_{L^{\gamma}(\rk)}  \Biggl\{\max_{1 \leq j \leq 2n} \sum_{s,l =1}^{d} \left[  \left( \| \w \|_{L^{\gamma}(\rk)} (\rb_{1})^{\frac{\gamma-1}{\gamma}} \sum_{p \in \mathbb{N}} \sum_{ q=1}^{m_1} (a_{p,q,s}^{(j,l)})^{2}  \right)^{\frac{1}{2}}C_{\kappa,\xi}m_{2}^{-\xi}  \right]^{\frac{2(\gamma-1)}{\gamma}}  \\&    \quad \quad\quad\quad\quad\quad\quad\quad\quad\quad\quad\quad\quad\quad\quad\quad\quad\quad\quad\quad \cdot \left( \sum_{p \in\mathbb{N}} \sum_{q=1}^{m_1} (a_{p,q,s}^{(j,l)})^{2} \right)^{\frac{1}{\gamma}}  \Biggr\} \, d \pg(\rho) \\ & \leq 4dC_{\mathcal{B}} C_{F}^{2} (\rb_{1})^{\frac{(\gamma-1)^{2}}{\gamma^{2}}} \int _{\Bb} \| \w \|_{L^{\gamma}(\rk)} ^{\frac{2\gamma-1}{\gamma}}  C_{\kappa,\xi,\gamma} m_{2} ^{-\xi \cdot \frac{2(\gamma-1)}{\gamma}} \, d \pg(\rho) 
\end{align*}
\begin{align*}
    & \leq 4dC_{\mathcal{B}}C_{F}^{2}C_{\kappa,\xi,\gamma}(\rb_{1})^{\frac{(\gamma-1)^{2}}{\gamma^{2}}} m_{2}^{-\xi  \cdot \frac{2(\gamma-1)}{\gamma}} \int  _{\Bb} \| \w \|_{L^{\gamma}(\rk)}^{ \frac{2\gamma-1}{\gamma}}  \, d \pg(\rho) \\ & \leq C_{2}m_{2}^{- \xi \cdot \frac{2(\gamma-1)}{\gamma}}   
\end{align*}
where $C_{2}=4d C_{\mathcal{B}}C_{F}^{2}C_{\kappa,\xi,\gamma}(\rb_{1})^{\frac{(\gamma-1)^{2}}{\gamma^{2}}}C_{\mathcal{G}}^{\frac{2\gamma-1}{2\gamma}}$.

\subsubsection{Linear Transformer with Adaptive Attention Heads} \label{sec:app3}

Recall that 
\begin{align*}\Psi _{2n,m}(\rho,x)&:= \sum_{j =1}^{2n}\alpha _{j}\sigma\left( \sum _{l =1}^{d}\sum_{p,q=1}^{m}\sum_{s=1}^{d} a _{p,q,s}^{(j,l)}\p_{q}(x)\int \p_{p}(y) (e _{l}e _{s}^T)yd\rho(y) +b_{j} \right)+b_{0} \\&= \sum_{j=1 }^{2n} \alpha _{j} \sigma \left( \sum_{p,q=1}^{m} \p_{q}(x) \int  \p_{p}(y) A_{p,q}^{(j)}y \, d \rho(y) +b_{j}  \right)+b_{0}
\end{align*} with $A_{p,q}^{(j)}= \sum_{l=1}^{d}\sum_{s=1}^{d} a_{p,q,s}^{(j,l)}e _{l}\es ^{T}=[a_{p,q,s}^{(j,l)}]_{1\leq l \leq d,1 \leq s \leq d}$.

\vskip 0.1in 

Now we approximate each $\p_{q}$ with a neural network $\phi _{q}$, and let 
\begin{align*}
    \mathrm{T}_{2n,m}(\rho,x):= \sum_{j=1}^{2n}\alpha _{j} \sigma\left( \sum_{q =1}^{m} \phi _{q}(x) \left( \sum_{p=1}^{m} \int  \phi _{p}(y) A_{p,q}^{(j)}y \, d \rho(y)  \right) +b_{j} \right)+b_{0}.
\end{align*}

Then we have the error decomposition: $$\left\| \Psi _{2n,m} - \mathrm{T}_{2n,m} \right\|_{L^{2}(\ngg)} \leq \| \Psi _{2n,m}- \widetilde{\mathrm{T}}_{2n,m} \|_{L^{2}(\ngg)} + \| \widetilde{\mathrm{T}}_{2n,m} - \mathrm{T}_{2n,m} \|_{L^{2}(\ngg)}   $$ where $$\widetilde{\mathrm{T}}_{2n,m}(\rho,x)= \sum_{j=1}^{2n} \alpha _{j} \sigma\left( \sum_{q=1}^{m} \phi _{q}(x) \left( \sum_{p=1}^{m} \int  \p_{p}(y) A_{p,q}^{(j)}y \, d \rho(y) \right) +b_{j}\right)+b_{0}.$$

\vskip 0.1in 

Similar with error estimations for the truncated error, let $$\bq:= \sum_{1\leq p \leq m,1\leq s \leq d} a_{p,q,s}^{(j,l)}\int  \p_{p}(y) \es ^{T}y \, d \rho(y) $$ and we have 
\begin{align*}&\left\| \Psi _{2n,m}- \widetilde{\mathrm{T}}_{2n,m} \right\|_{L^{2}(\ngg)}^{2} \\ & \leq \int  _{\ob} \left\| \sum_{j =1}^{2n} |\alpha _{j}| \sum_{l=1}^{d} \left|  \sum_{q =1}^{m} \bq \big(\p_{q}(x)-\phi _{q}(x)\big) \right| e _{l}  \right\|_{2}^{2}  \, d \ngg(\rho,x) \\ & = \int  _{\ob} \sum_{l =1}^{d}\left( \sum_{j=1}^{2n} \left|  \alpha _{j} \right| \left|  \sum_{q =1}^{m} \bq \big(\p_{q}(x)-\phi _{q}(x)\big)\right|  \right)^{2} \, d \ngg(\rho,x) \\ & \leq \int  _{\Bb} \| \alpha \|_{1} \sum_{l=1}^{d} \sum_{j=1}^{2n} \left| \alpha _{j} \right| \int_{\mathcal{X}}  \left( \sum_{q \in[m]}\bq \big(\p_{q}(x)-\phi _{q}(x)\big) \right)^{2}  \, d \rho(x) d \pg(\rho) \\ & \leq \int  _{\Bb} \| \alpha \|_{1}^{2} \sum_{l=1}^{d} \max_{1\leq j \leq 2n} \left( \sum_{q=1}^{m}(\bq)^{2}  \right) \int_{\mathcal{X}} \sum_{q=1}^{m} \big( \p_{q}(x)-\phi _{q}(x) \big)^{2}\, d \rho(x) d \pg(\rho) \\ & \leq d \| \alpha \|_{1}^{2} C_{\mathcal{B}} \int  _{\Bb}  \sum_{q=1}^{m}\int_{\mathcal{X}} \big( \p_{q}(x)-\phi _{q}(x) \big)^{2}\,    \, d \rho(x)   \, d \pg(\rho) \\ &=: dC_{\mathcal{B}} \| \alpha \|_{1}^{2} \sum_{q=1}^{m}\int  _{\Bb} \mathcal{\widetilde{E}}_{q}(\rho) \, d \pg(\rho).       
\end{align*}
Take $\phi _{q}= \phi_{q,\tm}$ to be the two-hidden-layer tanh neural network with a product gate in Appendix \ref{appendix2}. Then we have $$\widetilde{\mathcal{E}}_{q}(\rho) \leq (4d^{2}+C_{\ka,\gamma}\| \w \|_{L^{\gamma}(\rk)} ) \exp (-2\tm \log \tm).$$ Take the estimation back and it follows that $$\left\| \Psi _{2n,m} - \widetilde{\mathrm{T}}_{2n,m,\tm} \right\|_{L^{2}(\ngg)}^{2} \leq C_{3}m \exp(-2\tm\log\tm)$$with $C_{3}=4dC_{\mathcal{B}}C_{F}^{2}\left( 4d^{2}+C_{\ka,\gamma}C_{\mathcal{G}}^{\frac{1}{2}} \right)$ for $\tm > C_{\ka,\theta,\gamma}$.

Then we use the same group of two-hidden-layer neural networks $\{ \phi _{q,\tm} \}_{1\leq q \le m}$ to approximate $\{ \p_{q} \}_{1\leq q\leq m}$ in the context memory part. Now Let a linear Transformer $$\mathrm{T}_{2n,m,\tm}(\rho,x)=\sum_{j=1}^{2n} \alpha _{j} \sigma\left( \sum_{q=1}^{m} \phi _{q, \tm}(x) \left( \sum_{p=1}^{m}\int \phi _{p,\tm}(y)A_{p,q}^{(j)}y   \, d \rho(y) \right)+b_{j} \right)+b_{0}.$$
Then the approximation error for context functions can be measured as \begin{align*}
& \, \,  \, \,  \, \,  \, \,  \, \,  \, \, \left\| \widetilde{\mathrm{T}}_{2n,m,\tm}- \mathrm{T}_{2n,m,\tm} \right\|_{L^{2}(\ngg)}^{2} \\ &\leq \int _{\Bb} \| \alpha \|_{1}^{2} \sum_{l=1}^{d} \max_{1\leq j\leq 2n} \int_{\mathcal{X}} \left( \sum_{1 \leq p,q \leq m,1 \leq s \leq d} a_{p,q,s}^{(j,l)} \phi _{q,\tm}(x) \int  (\p_{p}(y)-\phi _{p,\tm}(y)) \es ^{T}y \, d \rho(y)  \right)^{2}   \, d\rho(x)d\pg(\rho) \\ &\leq \int  _{\Bb}\| \alpha \|_{1}^{2} \sum_{l=1}^{d} \max_{1\leq j \leq 2n}\left( \int_{\mathcal{X}} \sum_{1 \leq p \leq m, 1 \leq s \leq d} (\tilde{a}_{p,s}^{(j,l)}(x))^{2} d\rho(x) \right) \cdot\\& \, \,  \, \,   \,  \, \, \,  \, \, \,  \, \, \,  \, \, \,  \, \, \,  \, \, \,  \, \, \,  \, \, \,  \, \, \,  \, \, \,  \, \, \,  \, \, \,  \, \, \,  \, \, \,  \, \, \,  \, \, \,  \, \,\, \,  \left( \sum_{1\leq p \leq m, 1\leq s \leq d} \left(  \int  (\p_{p}(y)-\phi _{p, \tm}(y))\es ^{T}y\, d \rho(y)  \right)^{2}  \right) \, d \pg(\rho)\, 
\end{align*}
where $$\tilde{a}_{p,s}^{(j,l)}(x):= \sum_{q=1}^{m} a_{p,q,s}^{(j,l)}\phi _{q,\tm}(x).$$

For the first factor, let $$\mathrm{I}^{(j,l)}_{3}(\rho)= \int  _{\mathcal{X}} \sum_{1\leq p\leq m,1\leq s\leq d} (\tilde{a}_{p,s}^{(j,l)}(x))^{2} \, d \rho(x),$$ and we can obtain 
\begin{align*}
\sum_{l =1} ^{d} \max_{1\leq j \leq 2n} \mathrm{I}^{(j,l)} _{3}(\rho) &\leq d \sum_{q =1}^{m}\int_{\mathcal{X}} (\phi _{q,\tm}(x)  )^{2}\, d \rho(x)   \leq d\sum_{q =1}^{m} (2\| \p_{q} \|_{L^{2}(\rho)}^{2} + 2\| \p_{q}-\phi _{q,\tm} \|_{L^{2}(\rho)} ^{2})\\ &\leq 2d + 2dm (4d^{2}+C_{\ka,\gamma}\| \w \|_{L^{\gamma}(\rk)} ) \exp (-2\tm \log \tm).
\end{align*}
For the second factor, it's easy to observe that 
\begin{align*}
&\sum_{1\leq p \leq m, 1\leq s \leq d} \left(  \int  (\p_{p}(y)-\phi _{p, \tm}(y))\es ^{T}y\, d \rho(y)  \right)^{2} \leq \sum_{p=1}^{m} \| \p_{p}-\phi _{q,\tm} \|^{2}_{L^{2}(\rho)} \mathbb{E}_{X \sim\rho}\| X \|_{2}^{2} \\ & \leq C_{\mathcal{B}} m  (4d^{2}+C_{\ka,\gamma}\| \w \|_{L^{\gamma}(\rk)} ) \exp (-2\tm \log \tm).
\end{align*}
Choose $\tm$ such that $m\exp(-2\tm\log\tm)<1$. Combine two estimations and it is obtained that 
\begin{align*}
&\, \,\, \,\, \,\, \,\, \,\left\| \tilde{\mathrm{T}}_{n,m,\tm} - \mathrm{T}_{n,m, \tm} \right\|_{L^{2}(\ngg)} ^{2} \\&\leq 8C_{F}^{2}C_{\mathcal{B}}d\int  _{\bx} (20d^{4}+9d^{2}C_{\ka,\gamma}\| \w \|_{L^{\gamma}(\rk)}+ C_{\ka,\gamma}^{2}\| \w \|_{L^{\gamma}(\rk)}^{2}  )m \exp(-2\tm\log\tm) \, d\pg(\rho) \\ &\leq  C_{4} m \exp(-2\tm \log\tm)
\end{align*}
with $C_{4}=8C_{F}^{2}C_{\mathcal{B}}d(20d^{4}+9d^{2}(1+C_{\ka,\gamma}C_{\mathcal{G}})^{2}).$

Combine all the estimations and we can achieve the following convergence rate for $n \geq C'_{\ka,\theta,\gamma}$ with $C'_{\ka,\theta,\gamma}=\exp\Big(\frac{4(\gamma-1)\xi C_{\ka,\theta,\gamma}}{2(\gamma-1)\xi+\gamma}\Big)$:
\begin{align*}
&\left\| F(\il(\cdot),1)- \mathrm{T}_{2n,m,\tm} \right\|_{L^{2}(\ngg)} \\ \leq &\, \| F(\il(\cdot),1)-\mathcal{N}_{2n}(\il(\cdot),1) \|_{L^{2}(\ngg)}+ \| \mathcal{N}_{2n}(\il(\cdot),1)- \Psi _{2n,m} \|_{L^{2}(\ngg)}+ \left\| \Psi _{2n,m}- \mathrm{T}_{2n,m,\tm} \right\|_{L^{2}(\ngg)} \\  \leq & \, ((1+\bc) C_{F}^{2})^{\frac{1}{2}} n^{-\frac{1}{2}} + \left( C_{1}^{\frac{1}{2}} + C_{2}^{\frac{1}{2}} \right) m^{-\frac{\xi (\gamma-1)}{\gamma}}+ \left( C_{3}^{\frac{1}{2}}+C_{4}^{\frac{1}{2}}  \right)m^{\frac{1}{2}} \exp(-\tm\log\tm) \leq C_{*} n^{-\frac{1}{2}} ,
\end{align*}
with $C_{*}=3 \max \left\{  (1+\bc)^{\frac{1}{2}}C_{F}, C_{1}^{\frac{1}{2}}+C_{2}^{\frac{1}{2}}, C_{3}^{\frac{1}{2}}+C_{4}^{\frac{1}{2}}  \right\}$, 
\begin{align*}
    m=\left\lceil{n^{\frac{\gamma}{2(\gamma-1)\xi}}}\right\rceil \text{ and } \tm=\left\lceil{\left( \frac{1}{2}+\frac{\gamma}{4(\gamma-1)\xi} \right) \log n}\right\rceil.
\end{align*} \qedblack

\subsection{Oracle Inequality: Sampling Error for Linear Transformers} \label{sec:oracle}

In this Subsection, we derive an oracle inequality for the two-stage sampling estimation.  We first prove the compactness of the hypothesis space in \ref{sec:compact}, which guarantees the existence of $\mathrm{T}_{\mathbb{S},n}$ in \eqref{eq:minimizer}. We then establish covering number estimates in \ref{sec:covering} to bound the empirical processes of the first-stage sampling \eqref{error:first}, the pseudo second-stage sampling \eqref{error: gtc} and the second-stage sampling \eqref{error:acc}.

\subsubsection[Compact subspaces in C(Omega)]%
        {Compact subspaces in \texorpdfstring{$C(\Omega)$}{C(Omega)}} \label{sec:compact}

Recall that $(\Omega, d_{\Omega})$ is a complete separable metric space. To prove that $\mathcal{H}_{\mathrm{T}_{n}}$ is compact in $\co$, it's sufficient to show that $\mathcal{H}_{\mathrm{T}_{n}}$ is sequentially compact in $\co$. By Arzelà-Ascoli Theorem, it's sufficient to check the equi-boundedness and equi-continuity of $\mathcal{H}_{\mathrm{T}_{n}}$. For any $\mathrm{T}_{n} \in \htt$, we can pick a group of parameters $\Big((\alpha _{j}), (b_{j}), (A_{p,q}^{(j)}), \Theta _{\tanh}\Big)$ satisfying the conditions of the hypothesis space $\htt$.
For the simplicity, we let $$\sigma _{j}(\rho,x)= \sigma\left( \sum_{q=1}^{m(n)} {{\phi _{q,\tm(n)}(x)}} {{\left( \sum_{p=1}^{m(n)}\mathcal{T}_{C_{\mathcal{B}}}\left[ \int \phi _{p,\tm(n)}(y) A_{p,q}^{(j)} y \, d \rho(y) \right]  \right)}}+b_{j}\right).$$ Then we have $$\begin{aligned}&\| \mathrm{T}_{n}(\rho,x)- \mathrm{T}_{n}(\rho',x') \|_{2} = \left\| \sum_{j=1}^{n} \alpha _{j}(\sigma _{j}(\rho,x)-\sigma _{j}(\rho',x')) \right\|_{2} \leq \| \alpha \|_{1} \max _{1\leq j \leq n} \| \sigma _{j}(\rho,x)-\sigma _{j}(\rho',x') \|_{2} \\ \leq & \, \| \alpha \|_{1} \max _{1 \leq j \leq n} \sum_{p,q=1}^{m(n)}\left\|  \phi _{q,\tm(n)}(x) \mathcal{T}_{C_{\mathcal{B}}}\left[ \int  \phi _{p,\tm(n)}(y) A_{p,q}^{(j)}y \, d \rho(y)  \right]- \phi _{q,\tm(n)}(x') \mathcal{T}_{C_{\mathcal{B}}}\left[ \int  \phi _{p,\tm(n)}(y) A_{p,q}^{(j)}y \, d \rho'(y)  \right] \right\|_{2} \\ \leq & \, \| \alpha \|_{1} \max _{1 \leq j \leq n} \sum_{p,q=1}^{m(n)} \Biggl\| \phi _{q,\tm(n)}(x) \left( \tc\left[ \int  \phi _{p,\tm(n)}(y) A_{p,q}^{(j)} y \, d \rho(y)  \right]-\tc\left[ \int  \phi _{p,\tm(n)}(y) A_{p,q}^{(j)} y \, d \rho'(y)  \right]   \right)\\ & \,\,\,\,\,\,\,\,\,\,\,\,\,\,\,\,\,\,\,\,\,\,\,\,\,\,\,\,\,\,\,\,\,\,\, \,\,\,\, + \tc\left[ \int  \phi _{p,\tm(n)}(y) A_{p,q}^{(j)} y \, d \rho'(y)  \right](\phi _{q,\tm(n)}(x)- \phi _{q,\tm(n)}(x')) \Biggr\|_{2} 
\end{aligned}$$

$$\begin{aligned}
\leq & \, \|  \alpha \| _{1} \max_{1\leq j\leq n} \sum_{p,q=1}^{m(n)} \Bigg( \underbrace{ \left\|   \int  (\phi _{p,\tm(n)}(y) A_{p,q}^{(j)}y - \phi _{p,\tm(n)}(y') A_{p,q}^{(j)}y') \, d \rho(y)d\rho'(y')   \right\|_{2} }_{ \Delta(\rho,\rho') } \\ &\quad \quad\quad\quad\quad\quad\quad\quad+ \sqrt{ d }C_{\mathcal{B}} \left|  \phi _{q,\tm(n)}(x)-\phi _{q,\tm(n)}(x') \right|  \Bigg)  . \end{aligned}$$
Recall that $\phi _{q,\tm(n)}(x)= \prod_{l=1}^{d} \mathcal{T}_{1}(\phi _{q,\tm(n)}^{(l)}(x))$, where $(\phi _{q,\tm(n)}^{(l)})_{l=1}^{d}$ is a group of two-hidden-layer tanh neural networks shown in Appendix \ref{appendix2}. It's easy to observe that $$\begin{aligned} &\left|  \phi _{q, \tm(n)}(x) - \phi _{q,\tm(n)}(x') \right| \\ = & \left|  \prod _{l=1}^{d} \mathcal{T}_{1}(\phi _{q, \tm(n)}^{(l)}(x)) - \prod _{l=1}^{d} \mathcal{T}_{1}(\phi _{q,\tm(n)}^{(l)}(x')) \right| \leq d \max_{1 \leq l \leq d} \left|  \phi^{(l)}_{q,\tm(n)}(x) - \phi _{q,\tm(n)}^{(l)}(x') \right|  \\ \leq &d \max_{1 \leq l \leq d}  \left|  c_{q,l} ^{T} [\sigma _{\tanh}(W_{q,1}^{(l)}\sigma _{\tanh}(W_{q,0}^{(l)}x+b_{q,0}^{(l)})+b_{q,1}^{(l)})- \sigma _{\tanh}(W_{q,1}^{(l)}\sigma _{\tanh}(W_{q,0}^{(l)}x'+b_{q,0}^{(l)})+b_{q,1}^{(l)})] \right|  \\ \leq & d \max _{ 1 \leq l \leq d} \| c_{q,l}^{T} \|_{\infty}  \left\| W_{q,1}^{(l)} (\sigma _{\tanh}(W_{q,0}^{(l)}x+b_{q,0}^{(l)})-\sigma _{\tanh}(W_{q,0}^{(l)}x'+b_{q,0}^{(l)})) \right\| _{\infty} \\ \leq & d\max _{ 1 \leq l \leq d} \| c_{q,l}^{T} \|_{\infty} \left\| W_{q,1}^{(l)} \right\|_{\infty}   \left\| W_{q,0}^{(l)} \right\|_{\infty} \| x-x' \|_{\infty}  \leq \Big(d\max_{1 \leq l \leq d} \| c_{q,l}^{T} \|_{\infty} \left\| W_{q,1}^{(l)} \right\|_{\infty}\left\| W_{q,0}^{(l)} \right\|_{\infty}    \Big) \| x-x' \|_{2} \\ \leq& C_{n,d} \| x-x' \|_{2}.\end{aligned}$$ Since the parameters in tanh neural networks are uniformly bounded by $\| \Theta _{\tanh} \|_{\infty}$, $C_{n,d}$ just depends on $n$ and $d$. It also follows that for any $\tau \in \prod(\rho, \rho')$, $$\begin{aligned}\Delta(\rho, \rho')&\leq \int  \left\| \phi _{p,\tm(n)}(y) A_{p,q}^{(j)} y - \phi _{p,\tm(n)}(y') A_{p,q}^{(j)}y'\right\|_{2}  \, d \tau(y,y') \\ &\leq \left\| A_{p,q}^{(j)} \right\|_{F} \int \| \phi _{p,\tm(n)}(y)y- \phi _{p,\tm(n)}(y')y' \|_{2}   \, d \tau(y,y')  \\ & \leq \left\| A_{p,q}^{(j)} \right\|_{F} \left( \int  \left\| \phi _{p,\tm(n)}(y) (y-y') \right\|_{2}  \, d \tau (y,y') + \int  \left\| y' (\phi _{p,\tm(n)}(y)- \phi _{p,\tm(n)}(y')) \right\|_{2}  \, d \tau(y,y')  \right)\\ & \leq \left\| A_{p,q}^{(j)} \right\|_{F} \left( \int  \| y-y' \|_{2}  \, d \tau(y,y') + C_{n,d}\sqrt{ \mathbb{E}_{\rho'}\left\| Y' \right\|_{2}^{2}  }  \sqrt{ \mathbb{E}_{\tau} \| Y-Y' \|_{2}^{2}  } \right) \\ & \leq \left\| A_{p,q}^{(j)} \right\|_{F} (1+ C_{n,d} \sqrt{ \mathbb{E}_{\rho'} \| Y' \|_{2}^{2}  }) \sqrt{ \mathbb{E}_{\tau} \| Y-Y' \|_{2}^{2}  } .  \end{aligned}$$
Take the above estimation back and it can be obtained that $$\begin{aligned}&\left\| \mathrm{T}_{n}(\rho,x)- \mathrm{T}_{n}(\rho',x') \right\|_{2} \\\leq & \,  \| \alpha \|_{1} \max_{1 \leq j \leq n} \sum_{p,q=1}^{m(n)} \Big(\left\| A_{p,q}^{(j)} \right\|_{F} (1+C_{n,d}\| Y \|_{L^{2}(\rho')} ) \| Y-Y' \|_{L^{2}(\tau)} + \sqrt{ d }C_{\mathcal{B}} C_{n,d} \| x-x' \|_{2}   \Big)\\ \leq &\, \| \alpha \|_{1} m\sqrt{ d } (1+ C_{n,d}\| Y \|_{L^{2}(\rho')} +\sqrt{ d }C_{\mathcal{B}}C_{n,d} )(\| Y-Y' \|_{L^{2}(\tau)}+ \| x-x' \|_{2}  )\end{aligned}$$ which holds for any $\tau \in \prod(\rho,\rho')$. It follows that $$\|  \mathrm{T}_{n}(\rho,x) - \mathrm{T}_{n}(\rho',x') \|_{2} \leq C_{F}m\sqrt{ d }(1+C_{n,d}\| Y \|_{L^{2}(\rho')}+\sqrt{ d }C_{\mathcal{B}}C_{n,d})d_{\Omega}((\rho,x), (\rho',x'))$$ and proves that $\htt$ is equi-continuous at each $(\rho',x') \in \Omega$.

For any $(\rho,x) \in \Omega$, we have $$\begin{aligned}
\| \mathrm{T}_{n}(\rho,x) \|_{2} &\leq \| b_{0} \|_{2}+ \| \alpha \|_{1} \max_{1 \leq j \leq n} \left( \| b_{j} \|_{2}+ \left\| \sum_{p,q=1}^{m(n)} \phi _{q, \tm(n)}(x) \tc \left[ \int \phi _{p,\tm(n)}(y) A_{p,q}^{(j) }y \, d \rho(y)  \right] \right\|_{2}   \right) \\ &\leq  C_{F}\sqrt{ d\bc }+C_{F}  \left(  \sqrt{ 2d\bc } + \sum_{p,q=1}^{m(n)} \left\| \tc\left[ \int  \phi _{p, \tm(n)}(y) A_{p,q}^{(j)}y \, d \rho(y)  \right] \right\|_{2}   \right) \\& \leq C_{F}(2\sqrt{ d\bc } + m^{2}\bc \sqrt{ d }).
\end{aligned}$$ which shows that $\{ \mathrm{T}_{n }(\rho,x): \mathrm{T}_{n} \in \htt \}$ is bounded for each $(\rho,x )\in \Omega$. Therefore, $\htt$ is compact in $C(\Omega)$ which guarantees the existence of $\mathrm{T}_{\mathbb{S},n}$ and the below covering number.

\subsubsection{Covering Number Estimations}\label{sec:covering}

Note that in the first stage and pseudo second stage sampling, we have access to the true distributions sampled from $\mathcal{P}_{\mathcal{G}}$ on $\Bb$ such that the second moments are uniformly bounded by $\bc$, which however doesn't hold true for the empirical distributions in the second stage sampling.

Recall that the hypothesis space is defined as $$\begin{aligned}\mathcal{H}_{\mathrm{T}_{n}}=\Biggl\{ \mathrm{T}_{n}: \| \alpha \|_{1} \leq 2C_{F} , \sum_{p,q =1}^{m(n)}\| A_{p,q}^{(j)} \|_{F}^{2} \leq d,  \| b_{j} \|_{2} \leq \sqrt{ 2d\bc } \text{ for each }1\leq j\leq n, \\ \| b_{0} \|_{2} \leq C_{F}\sqrt{ 2d\bc }, \text{ and }   \left\| \Theta _{\tanh} \right\|_{\infty} \leq c_{1}(c'_{2}\log(n))^{c'_{3}(\log n)^{2}}         \Biggr\}.\end{aligned}$$
For $\sigma _{j}(\rho,x)$ defined above, it's easy to observe that for any $(\rho,x) \in \Omega _{\mathcal{B}}$, 
$$\begin{aligned}\| \sigma _{j}(\rho,x) \|_{2}  &\leq \left\| b_{j} + \sum_{p,q=1}^{m(n)} \phi _{q,\tm(n)}(x) \tc \left[  \int  \phi _{p,\tm(n)} (y) A_{p,q}^{(j)}y \, d \rho(y)  \right]  \right\|_{2} \\&\leq \| b_{j} \|_{2}+ \sum_{p,q=1}^{m(n)} \int   \left\| A_{p,q}^{(j)}y \right\|_{2}  d \rho(y)  \\ & \leq \| b_{j} \|_{2} + \bc^{\frac{1}{2}}\sum_{p,q=1}^{m(n)} \left\| A_{p,q}^{(j)} \right\|_{F}\\ &\leq \sqrt{ 2d\bc } + \sqrt{ \bc }m\sqrt{ d }\leq m\sqrt{2 d \bc }. 
\end{aligned}$$ 
Choose $\mathrm{T}_{n}, \mathrm{ \bar{T}}_{n} \in \ho$ with $\| \alpha-\bar{\alpha} \|_{1} \leq \epsilon$, $\left( \sum_{p,q=1}^{m(n)}\left\| A_{p,q}^{(j)}- \bar{A}_{p,q}^{(j)} \right\|_{F}^{2} \right)^{\frac{1}{2}}\leq \epsilon$, $\| b_{j}-\bar{b}_{j} \|_{2}  \leq \epsilon$ for $1 \leq j \leq n$, and $\| b_{0}-\bar{b}_{0} \|_{2} \leq \epsilon$. Then we have 
$$\begin{aligned}\left\| \mathrm{T}_{n}(\rho,x)- \bar{\mathrm{T}}_{n}(\rho,x) \right\|_{2} &= \left\|(b_{0}-\bar{b}_{0})+ \sum_{j=1}^{n} (\alpha _{j} \sigma _{j}(\rho,x) - \bar{\alpha}_{j}\bar{\sigma}_{j}(\rho,x))  \right\| _{2} \\&\leq  \| b_{0} -\bar{b}_{0}\|_{2} +  \left\| \sum_{j=1}^{n} (\alpha _{j}-\bar{\alpha}_{j})\sigma _{j}(\rho,x)+\bar{\alpha}_{j}(\sigma _{j}(\rho,x)-\bar{\sigma}_{j}(\rho,x)) \right\|_{2}  \\ & \leq \epsilon +\| \alpha-\bar{\alpha} \|_{1} \max_{1 \leq j \leq n} \| \sigma _{j}(\rho,x) \|_{2} + \| \bar{\alpha} \|_{1} \max_{1\leq j \leq n} \| \sigma _{j}(\rho,x)- \bar{\sigma}_{j}(\rho,x) \|_{2} \\ & \leq 2\sqrt{ d\bc } m\epsilon + 2C_{F}\max_{1 \leq j\leq n} \| \sigma _{j}(\rho ,x)- \bar{\sigma}_{j}(\rho ,x) \|_{2}.   \end{aligned}$$ For the second term in the above inequality, it follows that 
$$\begin{aligned}&\| \sigma _{j}(\rho,x)- \bar{\sigma} _{j}(\rho ,x)\|_{2} \\\leq &\, \Biggl\|(b_{j}-\bar{b}_{j})+\sum_{p,q=1}^{m(n)} \Biggl(\phi _{q,\tm(n)}(x)\tc\left[ \int  \phi _{p,\tm(n)}(y)A_{p,q}^{(j)}y \, d \rho(y)  \right] \\& \,\,\,\,\,\,\,\,\,\,\,  - \bar{\phi}_{q,\tm(n)}(x) \tc\left[ \int \bar{\phi}_{p,\tm(n)}(y) \bar{A}_{p,q}^{(j)}y \, d \rho(y)  \right] \Biggr) \Biggr\|_{2}  \\\leq & \, \| b_{j} - \bar{b} _{j}\|_{2} + \sum_{p,q=1}^{m(n)}  \Biggl\|(\phi _{q,\tm(n)}(x) -\bar{\phi}_{q,\tm(n)}(x)) \tc\left[\int \phi _{p,\tm(n)}(y) A_{p,q}^{(j)}y  \, d \rho(y) \right]\Biggr\|_{2} \\ &+ \sum_{p,q=1}^{m(n)} \left\| \bar{\phi}_{q,\tm(n)}(x) \left( \tc\left[\int \phi _{p,\tm(n)}(y) A_{p,q}^{(j)}y \, d\rho(y) \right] -  \tc\left[\int \bar{\phi} _{p,\tm(n)}(y) \bar{A}_{p,q}^{(j)}y \, d\rho(y) \right]\right) \right\|_{2}  \\ \leq & \, \| b_{j}-\bar{b}_{j} \|_{2} + \sum_{p,q=1}^{m(n)} \left|  \phi _{q,\tm(n)}(x)- \bar{\phi}_{q,\tm(n)}(x) \right| \int \left\| A_{p,q}^{(j)} y\right\|_{2}  \, d \rho(y) \\ & \,\,\,\,\,\,\,\,\,\, + \sum_{p,q=1}^{m(n)}\left\| \int \Big(\phi _{p,\tm(n)}(y) A_{p,q}^{(j)}y- \bar{\phi}_{p,\tm(n)}(y)\bar{A}_{p,q}^{(j)}y \Big)\, d \rho(y)  \right\|_{2} \\ \leq&\| b_{j}- \bar{b}_{j} \|_{2} + \bc^{\frac{1}{2}}\sum_{p,q=1}^{m(n)} \left\| A_{p,q}^{(j)} \right\|_{F} \left| \phi _{q,\tm(n)}(x)-\bar{\phi} _{q,\tm(n)}(x)\right| + \sum_{p,q=1}^{m(n)} \left\| \int  \phi _{p,\tm(n)}(y) (A_{p,q}^{(j)}-\bar{A}_{p,q}^{(j)})y \, d \rho(y)  \right\|_{2}    \\ & \,\,\,\,\,\,\,\,\,\, +  \sum_{p,q=1}^{m(n)} \left\| \int (\phi _{p,\tm(n)}(y)-\bar{\phi}_{p,\tm(n)}(y)) \bar{A}_{p,q}^{(j)}y \, d \rho(y)  \right\|_{2} \\
\leq & \, \| b_{j}- \bar{b}_{j} \|_{2} + \bc^{\frac{1}{2}} \max_{1 \leq q \leq m(n)} \left|  \phi _{q,\tm(n)}(x)- \bar{\phi}_{q, \tm(n)}(x) \right| \sum_{p,q=1}^{m(n)} \left\| A_{p,q}^{(j)} \right\|_{F} + \bc^{\frac{1}{2}} \sum_{p,q=1}^{m(n)}\left\| A_{p,q}^{(j)}- \bar{A}_{p,q}^{(j)} \right\|_{F} \\ &  \,\,\,\,\,\,\,\,\,\, + \sum_{p,q=1}^{m(n)}  \int  \left|  \phi _{p,\tm(n)}(y)- \bar{\phi} _{p, \tm(n)}(y) \right| \left\| \bar{A}_{p,q}^{(j)}y \right\|_{2}   \, d \rho(y) \\ \leq \, &\epsilon + \sqrt{ \bc }m\epsilon+ \sqrt{d\bc }m \max_{1\leq q \leq m(n)} \left|  \phi _{q,\tm(n)}(x)-\bar{\phi}_{q,\tm(n)}(x) \right|\\ &\quad\quad+\sqrt{ d\bc }m \max_{1 \leq p \leq m(n)} \| \phi _{p,\tm(n)}-\bar{\phi}_{p,\tm(n)} \|_{L^{2}(\rho)}.  \end{aligned}$$
Recall that the above two-hidden-layer tanh neural networks have the form $\phi _{p,\tm(n)}=\pt \big(\phi^{(1)}_{p,\tm(n)}, \dots, \phi^{(d)}_{p, \tm(n)}\big)$ and then it can be obtained that for any $x \in \mathbb{R}^{d}$,  
$$\begin{aligned}& \left|  \phi _{p,\tm(n)}(x)- \bar{\phi}_{p,\tm(n)}(x) \right|   \leq d \max _{1 \leq l \leq d} \left| \phi^{(l)} _{p,\tm(n)}(x)- \bar{\phi}_{p,\tm(n)}^{(l)}(x)  \right|  \\ \leq & \,d \max _{1 \leq l \leq d}  \left|  c_{p,l} ^{T} \underbrace{ \left[\sigma _{\tanh}(W_{p,1}^{(l)}\sigma _{\tanh}(W_{p,0}^{(l)}x+b_{p,0}^{(l)})+b_{p,1}^{(l)})\right] }_{ =: f_{p}^{(l)}(x) } - \bar{c}_{p,l} ^{T} \underbrace{ \left[\sigma _{\tanh}(\overline{W}_{p,1}^{(l)}\sigma _{\tanh}(\overline{W}_{p,0}^{(l)}x+\bar{b}_{p,0}^{(l)})+\bar{b}_{p,1}^{(l)})\right] }_{ =: \bar{f}_{p}^{(l)}(x) } \right| \\ \leq & \, d \max _{ 1 \leq l \leq d}  \left|  c_{p,l } ^{T}f_{p}^{(l)}(x)- \bar{c}_{p,l}^{T} f_{p}^{(l)}(x)+ \bar{c}_{p,l}^{T}f_{p}^{(l)}(x)-\bar{c}_{p,l}^{T} \bar{f}_{p}^{(l)}(x)\right| \\ \leq & \,  d \max_{1 \leq l \leq d} \left(\| c_{p,l} - \bar{c}_{p,l} \|_{1} +\| \bar{c}_{p,l} \|_{1} \left\| f_{p}^{(l)}(x)- \bar{f}_{p}^{(l)}(x) \right\|_{\infty}  \right) \\\leq& \, 8d\,\tm(n)\max_{1\leq l \leq d} \left\{\| c_{p,l}- \bar{c}_{p,l} \|_{\infty}  + \| \bar{c}_{p,l} \|_{\infty} \left\| f_{p}^{(l)}(x)- \bar{f}_{p}^{(l)}(x) \right\|_{\infty}  \right\} ,\end{aligned}$$ 
where 
$$\begin{aligned}&\left\| f_{p}^{(l)}(x)- \bar{f}_{p}^{(l)}(x) \right\|_{\infty} \\\leq & \left\| (W_{p,1}^{(l)}\sigma _{\tanh}(W_{p,0}^{(l)}x+b_{p,0}^{(l)})+b_{p,1}^{(l)})-(\overline{W}_{p,1}^{(l)}\sigma _{\tanh}(\overline{W}_{p,0}^{(l)}x+\bar{b}_{p,0}^{(l)})+\bar{b}_{p,1}^{(l)}) \right\|_{\infty} \\  \leq & \left\| b_{p,1}^{(l)}- \bar{b}_{p,1}^{(l)} \right\|_{\infty} + \left\| W_{p,1}^{(l)} - \overline{W}_{p,1}^{(l)} \right\|_{\infty} + \left\| W_{p,1}^{(l)} \right\|_{\infty} \left\| ( W_{p,0}^{(l)}-\overline{W}_{p,0}^{(l)})x + b_{p,0}^{(l)}- \bar{b}_{p,0 }^{(l)} \right\|_{\infty} .      \end{aligned}$$
Choose that
\begin{equation} \label{ineq:cov}
    \begin{aligned}
    \| c_{p,l}-\bar{c}_{p,l} \|_{\infty} \leq \epsilon, &\left\| b_{p,1}^{(l)}- \bar{b}_{p,1}^{(l)} \right\|_{\infty} \leq \epsilon, \left\| W_{p,1}^{(l)}- \overline{W}_{p,1}^{(l)} \right\|_{\max} \leq \epsilon,\\ \left\| W_{p,0}^{(l)}- \overline{W}_{p,0}^{(l)} \right\|_{\max}&\leq \epsilon , \left\| b_{p,0}^{(l)}- \bar{b}_{p,0}^{(l)} \right\|_{\infty} \leq \epsilon.
\end{aligned}
\end{equation}
 Then it's easy to see that 
\begin{align*}
    \left\| f_{p}^{(l)}(x) - \bar{f}_{p}^{(l)}(x) \right\|_{\infty} \leq 24(1+\| x \|_{\infty})[c_{4}\tm(n)]^{c_{5}(\tm(n))^{2}}\epsilon,
\end{align*} which implies that 
\begin{align*}
    \left|  \phi _{p,\tm(n)}(x) - \bar{\phi}_{p, \tm(n)}(x) \right| \leq 192d \, (2+ \| x \|_{\infty})[c_{4}\tm(n)]^{3c_{5}(\tm(n))^{2}}\epsilon
\end{align*} and that 
$$\begin{aligned}
&\sqrt{ d\bc } m \Big(\max _{1\leq p,q\leq m(n)} \left| \phi _{q,\tm}(x) - \bar{\phi}_{q,\tm}(x) \right|+ \| \phi _{p,\tm}-\bar{\phi}_{p,\tm} \|_{L^{2}(\rho)}  \Big) \\ \leq \, & \sqrt{ d\bc } m \Big( 192d\left(4+\| x \|_{\infty} + \int_{\mathcal{X}} \| x \|_{\infty}   \, d\rho  \right)[c_{4}\tm(n)]^{3c_{5}(\tm(n))^{2}} \Big)\epsilon \\ \leq\, & 192C_{\mathcal{B}}d^{\frac{3}{2}}(5+\| x \|_{\infty})(m(n))[c_{4}\tm(n)]^{3c_{5}(\tm(n))^{2}}\epsilon.
\end{aligned}$$
 We can conclude that $$\left\| \mathrm{T}_{n}(\rho,x)-\bar{\mathrm{T}}_{n}(\rho,x) \right\|_{2} \leq 386 C_{F}C_{\mathcal{B}}d^{\frac{3}{2}} (9+\| x \|_{\infty} )(m(n)) [c_{4}\tm(n)]^{3c_{5}(\tm(n))^{2}}\epsilon=: \mathrm{\Xi}(x, \epsilon). $$

For any pseudometric space $(\mathcal{H},d_{\mathcal{H}})$ and $\epsilon >0$, the covering number of $(\mathcal{H}, d_{\mathcal{H}})$ with radius $\epsilon$ is defined as $\inf\{ \left|  \mathcal{D} \right|: \mathcal{D} \subset \mathcal{H}, \text{for any } \Psi \in \mathcal{H} \text{ there exists } \Phi \in \mathcal{D}  \text{ with } d_{\mathcal{H}}(\Psi,\Phi) \leq \epsilon  \}.$

We denote by $\mathrm{N}(\mathcal{H}_{\mathrm{T}_{n}}, \epsilon, d_{\Bb})$ the uniform $\epsilon$-covering number for the first stage sampling with the pseudometric 
\begin{align*}
    d_{\Bb}(\Phi _{1}, \Phi _{2})= \sup_{\rho \in \Bb} \mathbb{E} _{\rho}\| \Phi_{1}(\tilde{X})- \Phi _{2}(\tilde{X}) \|_{2}.
\end{align*}
It follows that for the above $\mathrm{T}_{n}, \bar{\mathrm{T}}_{n}$ and the parameter selections, 

\begin{align*}
    d_{\Bb}(\mathrm{T}_{n}, \bar{\mathrm{T}}_{n}) \leq \sup_{\rho \in \Bb}\mathbb{E}_{\rho} \Xi(X, \epsilon)\leq c_{6}C_{F}C_{\mathcal{B}}^{2}d^{\frac{3}{2}} (m(n))[c_{4}\tm(n)]^{3c_{5}(\tm(n))^{2}}\epsilon.
\end{align*}
Then the $\tilde{\epsilon}$-covering number of $\htt$ with respect to $d_{\Bb}$ can bounded as $$\begin{aligned}
&\quad\mathrm{N}(\htt, \tilde{\epsilon}, d_{\Bb}) \\&\leq \left(  1+ \frac{4C_{F}}{\epsilon} \right)^{n}\left( 1+ \frac{2C_{F}\sqrt{ 2d\bc }}{\epsilon} \right)^{n\Big(m^{2}d^{2}+d\Big)+1}  \left( 1+ \frac{(c'_{2}\tm)^{c'_{3}\tm^{2}}}{\epsilon} \right)^{[8\tm d+8\tm+(8\tm)^{2}+8\tm+8\tm]dm} \\ & \leq \left( 1+ \frac{4C_{F}\sqrt{ d\bc}}{\epsilon} \right)^{3nm^{2}d^{2}} \left(1+ \frac{{(c'_{2}\tm)^{c'_{3}\tm^{2}}}}{\epsilon} \right)^{96d^{2}\tm^{2} m} \\ & \leq \left( 1+ \frac{4c_{6}C_{F}C_{\mathcal{B}}^{3}d^{2}m(c_{4}\tm)^{3c_{5}\tm^{2}}}{\tilde{\epsilon}}  \right)^{3d^{2}nm^{2}} \left( 1+\frac{c_{6}C_{F}C_{\mathcal{B}}^{2}d^{\frac{3}{2}}m(c_{4}\tm)^{6c_{5}\tm^{2}}}{\tilde{\epsilon}} \right)^{96d^{2}\tm^{2}m} \\ &\leq \left(1+ \frac{8C_{d,F,\mathcal{B}}}{\tilde{\epsilon}} \right)^{100d^{2}nm^{2}} ((c_{4}m\tm)^{600c_{5}d^{2}nm^{2}\tm^{2}}),
\end{aligned}$$ 
with $C_{d,F,\mathcal{B}}=c_{6}C_{F}C_{\mathrm{\mathcal{B}}}^{3}d^{2}$, which is followed by $$\log \mathrm{N}(\htt, \tilde{\epsilon}, d_{\Bb}) \leq 100nd^{2}m^{2}\log\left( 1+\frac{8C_{d,F,\mathcal{B}}}{\te}  \right)+ 600c_{5}d^{2}nm^{2}\tm^{2}\log(c_{4}m\tm).$$
Conditioned on the pseudo second-stage samples $(\rho_{X}^{(i)}, X_{ij})_{1 \leq i \leq N, 1 \leq j \leq n_{i}}$, we can define the empirical $L^{2}_{\hat{P}}$ $\epsilon$-covering number $\mathrm{N}(\htt, \epsilon, d_{\hat{P}})$ with the pseudometric $$d _{\hat{P}}(\Phi_{1},\Phi_{2})=\left( \frac{1}{N}\sum_{i=1}^{N} \frac{1}{n_{i}} \sum_{j=1}^{n_{i}} \| \Phi _{1}(\rho_{X}^{(i)},X_{ij})- \Phi _{2}(\rho_{X}^{(i)},X_{ij}) \|_{2}^{2} \right)^{\frac{1}{2}}. $$
Then for the above $\mathrm{T}_{n}, \bar{\mathrm{T}}_{n}$ and the parameter selections, we have $$\begin{aligned}
d_{\hat{P}}( \mathrm{T}_{n}, \bar{\mathrm{T}}_{n})&\leq \left( \frac{1}{N}\sum_{i=1}^{N} \frac{1}{n_{i}} \sum_{j=1}^{n_{i}} \Xi(X_{ij}, \epsilon)^{2} \right)^{\frac{1}{2}}\\&\leq 386\sqrt{ 2 }C_{F}C_{\mathcal{B}}d^{\frac{3}{2}} \left( 9 + \sqrt{ \frac{1}{N}\sum_{i=1}^{N} \frac{1}{n_{i}} \sum_{j=1}^{n_{i}} \| X_{ij} \|_{\infty}^{2} } \right)m[c_{4}\tm]^{3c_{5}\tm^{2}}\epsilon \\ &\leq c_{6}C_{F}C_{\mathcal{B}}d^{\frac{3}{2}} \underbrace{ \left(9+ \sqrt{ \frac{1}{N}\sum_{i=1}^{N} \frac{1}{n_{i}} \sum_{j=1}^{n_{i}} \| X_{ij} \|_{2}^{2} }\right) }_{ =: \| \hat{X} \|_{2}  } m(c_{4}\tm)^{3c_{5}\tm^{2}} \epsilon.
\end{aligned}$$
Similarly, it's easy to see that $$\log \mathrm{N}(\htt, \tilde{\epsilon}, d_{\hat{P}}) \leq 100nd^{2}m^{2}\log\left(1+ \frac{8c_{6}C_{F}C_{\mathcal{B}}^{2}d^{2}\| \hat{X} \|_{2}}{\te} \right)+ 600 c_{5}nd^{2}m^{2}\tm^{2}\log(c_{4}m\tm).$$

\subsubsection{First-Stage Sampling Error Estimation} \label{error:first}

The following lemma is from Lemma 3.19 \cite{cuckerlearning}.

\begin{lemma}
    Let $\mathscr{J}$ be a set of functions on $Z$ and $\mathscr{B},c>0$ such that for each $\mathcal{J} \in \mathscr{J}$, $\left| \mathcal{J} - \mathbb{E} (\mathcal{J}) \right| \leq \mathscr{B}$ and $\mathbb{E} (\Gamma^{2}) \leq c \mathbb{E}(\Gamma)$ almost surely. Then for every $\epsilon >0$ and $0 < r \leq 1$, $$\mathbb{P}_{\mathbf{z}\in Z^{m}}\left\{ \sup_{\mathcal{J} \in \mathscr{J}}\frac{\mathbb{E}(\mathcal{J}) - \mathbb{E}_{\mathbf{z}}(\mathcal{J})}{\sqrt{ \mathbb{E}(\mathcal{J}) + \epsilon }} >4r\sqrt{ \epsilon } \right\} \leq \mathrm{N}(\mathscr{J}, r\epsilon, \| \cdot \|_{L^{\infty}(Z)} )\exp \left\{  - \frac{r^{2}m\epsilon}{2c+\frac{2}{3}\mathscr{B}}  \right\}.$$ 
\end{lemma}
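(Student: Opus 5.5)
The plan is the standard two-step argument: first prove a ratio-type Bernstein bound for a single function, then pass to the supremum over $\mathscr{J}$ with an $L^\infty$ net and a union bound. Throughout I read the hypothesis $\mathbb{E}(\Gamma^2)\le c\,\mathbb{E}(\Gamma)$ as $\mathbb{E}(\mathcal{J}^2)\le c\,\mathbb{E}(\mathcal{J})$ for every $\mathcal{J}\in\mathscr{J}$. In particular this gives $\mathbb{E}(\mathcal{J})\ge 0$, so every square root below is well defined.

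\emph{Step 1 (single function).} Fix $\mathcal{J}\in\mathscr{J}$ and write $\mu=\mathbb{E}(\mathcal{J})$. The variance is bounded by $\sigma^2\le \mathbb{E}(\mathcal{J}^2)\le c\mu$, and $|\mathcal{J}-\mu|\le\mathscr{B}$. The one-sided Bernstein inequality then gives
\begin{align*}
\mathbb{P}\{\mu-\mathbb{E}_{\mathbf{z}}(\mathcal{J})>t\}\le \exp\Big\{-\frac{m t^2}{2(\sigma^2+\mathscr{B}t/3)}\Big\}.
\end{align*}
I apply this with $t=r\sqrt{\epsilon}\sqrt{\mu+\epsilon}$.
\begin{itemize}
\item Since $r\le 1$ and $\sqrt{\epsilon}\le\sqrt{\mu+\epsilon}$, we have $t\le \mu+\epsilon$.
\item Hence $\sigma^2+\mathscr{B}t/3\le (c+\mathscr{B}/3)(\mu+\epsilon)$.
\item Since $t^2=r^2\epsilon(\mu+\epsilon)$, the factor $\mu+\epsilon$ cancels.
\end{itemize}
This yields
\begin{align*}
\mathbb{P}\Big\{\frac{\mu-\mathbb{E}_{\mathbf{z}}(\mathcal{J})}{\sqrt{\mu+\epsilon}}>r\sqrt{\epsilon}\Big\}\le \exp\Big\{-\frac{r^2 m\epsilon}{2c+\frac{2}{3}\mathscr{B}}\Big\}.
\end{align*}

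\emph{Step 2 (discretization).} Let $\{\mathcal{J}_k\}_{k=1}^{K}\subset\mathscr{J}$ be an $r\epsilon$-net in $\|\cdot\|_{L^\infty(Z)}$, with $K=\mathrm{N}(\mathscr{J},r\epsilon,\|\cdot\|_{L^\infty(Z)})$. The claim is that if some $\mathcal{J}$ satisfies $\frac{\mathbb{E}(\mathcal{J})-\mathbb{E}_{\mathbf{z}}(\mathcal{J})}{\sqrt{\mathbb{E}(\mathcal{J})+\epsilon}}>4r\sqrt{\epsilon}$, then the net element $\mathcal{J}_k$ with $\|\mathcal{J}-\mathcal{J}_k\|_\infty\le r\epsilon$ satisfies the Step 1 event with threshold $r\sqrt{\epsilon}$.

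To see this, write $\mu=\mathbb{E}(\mathcal{J})$ and $\mu_k=\mathbb{E}(\mathcal{J}_k)$.
\begin{itemize}
\item Both the true and the empirical means move by at most $r\epsilon$ when passing from $\mathcal{J}$ to $\mathcal{J}_k$.
\item So the numerator satisfies $\mu_k-\mathbb{E}_{\mathbf{z}}(\mathcal{J}_k)>4r\sqrt{\epsilon}\sqrt{\mu+\epsilon}-2r\epsilon$.
\item Using $\epsilon\le\sqrt{\epsilon}\sqrt{\mu+\epsilon}$, this is at least $2r\sqrt{\epsilon}\sqrt{\mu+\epsilon}$.
\item For the denominator, $\mu_k+\epsilon\le \mu+\epsilon+r\epsilon\le 2(\mu+\epsilon)$, so $\sqrt{\mu_k+\epsilon}\le\sqrt{2}\sqrt{\mu+\epsilon}$.
\item The ratio for $\mathcal{J}_k$ is therefore larger than $\frac{2}{\sqrt2}r\sqrt{\epsilon}\ge r\sqrt{\epsilon}$.
\end{itemize}

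\emph{Step 3.} By Step 2, the event in the lemma is contained in the union of the $K$ single-function events from Step 1. A union bound then gives the stated inequality.

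The only delicate point is the bookkeeping in Step 2: choosing the constant $4$ so that the $2r\epsilon$ perturbation and the $\sqrt2$ loss in the denominator are both absorbed, which requires the inequality $\epsilon\le\sqrt{\epsilon(\mu+\epsilon)}$. Everything else is Bernstein's inequality applied verbatim.
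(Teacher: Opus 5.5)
Your proof is correct and follows essentially the standard argument behind the result the paper cites without proof (Cucker--Zhou, Lemma 3.19, which rests on their single-variable Lemma 3.18): a one-sided Bernstein bound at level $t=r\sqrt{\epsilon(\mu+\epsilon)}$, then an $r\epsilon$-net inside the class in the sup norm, where the constant $4$ absorbs the $2r\epsilon$ shift and the $\sqrt{2}$ loss in the denominator, and finally a union bound. You also correctly observe that $\mathbb{E}(\mathcal{J}^2)\le c\,\mathbb{E}(\mathcal{J})$ forces $\mathbb{E}(\mathcal{J})\ge 0$, which supplies the nonnegativity hypothesis the paper's statement of the lemma leaves out.
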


\vskip 0.1in

We consider the class of functions $\gp:\mathcal{B}_{2,b}(\mathcal{X} \times \mathcal{Y}) \to \mathbb{R}$, denoted by $$\mathscr{J}(\htt):= \Big\{ \gp :\gp(\rho_{XY})= \mathbb{E}[\| \mathscr{T}_{M}\Phi(\tx)-Y \|_{2}^{2} | \rho_{XY} ] - \mathbb{E}[\| \Phi _{\mathcal{G}}(\tx)-{Y} \|_{2}^{2} |\rho_{XY} ], \Phi \in \htt \Big\}.$$
Then for each $\gp \in \mathscr{J}(\htt)$, we have $$\mathbb{E} (\gp)=\mathcal{E}(\mathscr{T}_{M}\Phi)-\mathcal{E}(\Phi _{\mathcal{G}}) \text{ and } \frac{1}{N} \sum_{i=1}^{N}\gp(\rho_{XY}^{(i)})= \mathcal{E}_{N}(\mathscr{T}_{M}\Phi)-\eg(\Phi _{\mathcal{G}}),$$ where $\begin{aligned}\E(\mathscr{T}_{M}\Phi)-\mathcal{E}(\Phi _{\mathcal{G}})= \| \mathscr{T}_{M}\Phi - \Pg \|_{L^{2}_{\ngg}}^{2}. \end{aligned}$
Also notice that $$\begin{aligned}\left| \gp(\rho_{XY})  \right|&= \left|  \mathbb{E}\big[\| \mathscr{T}_{M}\Phi(\tx)-Y \|_{2}^{2}-\| \Pg(\tx)-Y \|^{2}_{2}\big|\rho_{XY}\big] \right|\\& \leq  \left|  \mathbb{E}\big[(\| \mathscr{T}_{M}\Phi(\tx)-Y \|_{2}+\| \Pg(\tx)-Y \|_{2})(\| \mathscr{T}_{M}\Phi(\tx)- \Pg \|_{2} )\big|\rho_{XY}\big] \right|\\ & \leq 8M^{2},\end{aligned}$$ which implies that $\left|  \gp(\rho_{XY})- \mathbb{E}(\gp) \right|\leq 16M^{2}$ and $$\begin{aligned}\mathbb{E}(\gp^{2})& \leq \mathbb{E}_{\rho_{XY} \sim \mathcal{P}_{\mathcal{G}}}\Big[ \mathbb{E}\big[(\| \mathscr{T}_{M} \Phi(\tx)-Y \|_{2}^{2}-\| \Pg(\tx)-Y \|_{2}^{2} )^{2}|\rho_{XY} \big]\Big]\\&\leq 16M^{2}\mathbb{E}_{\rho_{X} \sim \pg}(\mathbb{E}[\| \mathscr{T}_{M}\Phi(\tx)-\Pg \|_{2}|\rho_{X}])^{2} \leq 16M^{2}\|  \mathscr{T}_{M} \Phi- \Pg \|_{L_{\nu}^{2}}^{2}=16M^{2}\mathbb{E}(\gp). \end{aligned}$$
Then for any $\Phi _{1}, \Phi _{2} \in \htt$, it follows that $$\begin{aligned}\left| \mathcal{J} _{\Phi _{1}}(\rho_{XY})- \mathcal{J} _{\Phi_{2}}(\rho_{XY}) \right|&= \left| \mathbb{E}[\| \mathscr{T}_{M}\Phi_{1}(\tx)-Y \|_{2}^{2} | \rho_{XY} ] -\mathbb{E}[\| \mathscr{T}_{M}\Phi_{2}(\tx)-Y \|_{2}^{2} | \rho_{XY} ] \right|  \\ & \leq 4M \left|  \mathbb{E} \big[\| \mathscr{T}_{M} \Phi_{1}(\tx)-\mathscr{T}_{M}\Phi_{2}(\tx) \|_{2} | \rho_{X}\big]  \right| \\ &\leq 4M \left|  \mathbb{E} \big[\|  \Phi_{1}(\tx)-\Phi_{2}(\tx) \|_{2} | \rho_{X}\big]  \right|  \leq 4M d_{\Bb}(\Phi_{1}, \Phi_{2}),\end{aligned}$$which shows that $$\mathrm{N}(\mathscr{J}(\htt), \epsilon, \| \cdot \|_{L^{\infty}(\mathcal{B}_{2,b}(\mathcal{X}\times \mathcal{Y}))} ) \leq \mathrm{N}\left( \htt, \frac{\epsilon}{4M}, d_{\Bb} \right).$$
Then with the uniform ratio inequality, we have that $$\begin{aligned}
&\mathbb{P} \left\{ \sup_{\Phi \in \htt} \frac{(\E(\mathscr{T}_{M}\Phi)-\E(\Pg)) -(\eg(\mathscr{T}_{M}\Phi) -\eg(\Pg))}{\sqrt{ \E(\mathscr{T}_{M}\Phi)-\E(\Pg) +\epsilon }} > \sqrt{ \epsilon } \right\} \\ &\leq \mathrm{N}\left( \mathscr{J}(\htt), \frac{\epsilon}{4}, \| \cdot \|_{L^{\infty}(\mathcal{B}_{2,b}(\mathcal{X}\times \mathcal{Y}))}  \right) \exp \left\{   - \frac{3N\epsilon}{2048M^{2}}   \right\} \leq \mathrm{N}\left( \htt, \frac{\epsilon}{16M}, d_{\Bb} \right) \exp \left\{  - \frac{3N\epsilon}{2048M^{2}}  \right\}.
\end{aligned}$$
It's easy to see that $\sqrt{( \E(\mathscr{T}_{M}\Phi)-\E(\Pg)+\epsilon) \epsilon } \leq \frac{1}{2}(\E(\mathscr{T}_{M}\Phi)-\E(\Pg))+\epsilon$, which follows by taking $\Phi= \mathrm{T}_{\mathbb{S},n}$ that 
\begin{align} \label{ineq:e1}
    \mathbb{P}\left\{  \E_{1}(\ts) > \frac{1}{2}\big(\E(\ts)-\E(\Pg)\big) +\epsilon  \right\} \leq \mathrm{N}\left( \htt, \frac{\epsilon}{16M}, d _{\Bb} \right)\exp \left\{  - \frac{3N\epsilon}{2048M^{2}}  \right\}.
\end{align}

Similarly, 
For each $\Phi \in \htt$, note that $\| \Phi \|_{C(\ob)}$ is uniformly bounded. We define 
\begin{align*}
    \wg_{\Phi}(\rho_{XY})= \mathbb{E}[\| \Phi(\tx)-Y \|_{2}^{2}|\rho_{XY}] - \mathbb{E}[\| \Pg(\tx)- Y \|_{2}^{2} | \rho_{XY}].
\end{align*} $\wg_{\Phi}$ can be considered as a random variable with $\left|  \wg_{\Phi}(\rho_{XY}) \right| \leq (3M+ \| \Phi \|_{C(\ob)})^{2}$ and it follows that $$\left|  \wg_{\Phi}(\rho_{XY})- \mathbb{E}(\wg_{\Phi}) \right| \leq 2(3M+\| \Phi \|_{C(\ob)})^{2}$$ and $$\mathbb{E}(\wg_{\Phi}^{2}) \leq (3M+\| \Phi \|_{C(\ob)})^{2}\E_{4}(\Phi).$$ Then with the Bernstein inequality, we have 
\begin{align} \label{ineq:e11}
   \mathbb{P}\{ \E'_{1}(\Phi) > \epsilon \} \leq \exp \left\{  -  \frac{N\epsilon^{2}}{2(3M+\| \Phi \|_{C(\ob)} )^{2}\left( \E_{4}(\Phi)+\frac{2}{3}\epsilon \right)}  \right\}. 
\end{align}

\subsubsection{Second-Stage Sampling Error with Ground Truth Context} \label{error: gtc}

Assume that $n_{1}=\dots=n_{N}=\vartheta$. Conditioned on the given first-stage samples $(\rho_{XY}^{(i)})_{1 \leq i \leq N}$, the random variables $(X_{ij},Y_{ij})_{1\leq i\leq N,1 \leq j\leq \vartheta}$ are independent but not identically distributed: for each $1 \leq i \leq N$, $(X_{ij}, Y_{ij}) \sim \rho_{XY}^{(i)}$. Let $\st=(\rho_{X}^{(i)}, X_{ij},Y_{ij})_{1\leq i \leq N,1 \leq j \leq \vartheta}$. We introduce a random variable $$\Lambda(\st)= \sup_{\bar{\Phi} \in \mathscr{T}_{M}(\htt)} \frac{1}{N}\sum_{i=1}^N\frac{1}{\vartheta}\sum_{j=1}^{\vartheta} \Big(\mathbb{E}\big[\| \bar{\Phi}(\tx)-Y \|_{2}^{2} \big|\rho_{XY}^{(i)} \big] - \| \bar{\Phi}(\rho_{X}^{(i)}, X_{ij}) - Y_{ij}  \|_{2}^{2} \Big),$$ which follows that $$\left|  \Lambda(\st) - \Lambda (\st^{\backslash(i,j)}) \right| \leq \sup_{\bar{\Phi} \in \mathscr{T}_{M}(\htt)} \frac{1}{N\vartheta}\left|  \| \bar{\Phi}(\rho_{X}^{(i)},X_{ij})-Y_{ij} \|_{2}^{2}- \| \bar{\Phi}(\rho_{X}^{(i)},X_{ij}')-Y_{ij}' \|_{2}^{2} \right| \leq \frac{8M^{2}}{N\vartheta}$$ where $\st^{\backslash(i,j)}$ denotes the sample $\st$ with a change on $(i,j)$-th variable with $(X_{ij},Y_{ij})\overset{i.i.d}{\sim}(X_{ij}',Y_{ij}')$ while all others fixed.
Then by Azuma-McDiarmind's inequality, it can be derived that $$\mathbb{P} \left\{\left|  \Lambda(\st)- \mathbb{E}\big[\Lambda(\st)|(\rho_{XY}^{(i)})_{i}\big]\right| > \epsilon \right\} \leq 2 \exp \left\{  - \frac{\left( N\vartheta \right)\epsilon^{2}}{32 M^{4}}  \right\},\text{ for any }\epsilon>0.$$
Let $(\zeta_{ij})_{1 \leq i\leq N, 1\leq j \leq \vartheta}$ be i.i.d Rademacher random variables. Then by the symmetrization \citep{ledoux1991probability}, we have $$\begin{aligned} \mathbb{E}\big[\Lambda(\st)\big|(\rho_{XY}^{(i)})_{i}\big]&\leq\mathbb{E}_{\big[(X_{ij},Y_{ij})\sim \rho_{XY}^{(i)}\big]_{i}} \mathbb{E}_{(\zeta _{ij})} \sup_{\bar{\Phi} \in \mathscr{T}_{M}(\htt)} \frac{2}{N}\sum_{i=1}^{N} \frac{1}{\vartheta}\sum_{j=1}^{\vartheta} \zeta _{ij}\| \bar{\Phi}(\rho_{X}^{(i)},X_{ij})-Y_{ij} \|_{2}^{2} . \end{aligned}$$
Since $\left|  l_{y}(u)-l_{y}(u') \right| \leq 4M \| u-u' \|_{2}$ where $l_{y}(u):= \| u-y \|_{2}^{2}$ with $\| u \|_{2}$ and $\| y \|_{2}$ less than $M$,
it follows by the vector-contraction inequality \citep{maurer2016vectorcontraction} that $$\begin{aligned}
\mathbb{E} [ \Lambda(\st) | (\rho_{XY}^{(i)})_{i}] &\leq 8\sqrt{ 2 }M\mathbb{E}_{\big[X_{ij} \sim \rho_{X}^{(i)}\big]_{i}} \mathbb{E}_{(\boldsymbol{\zeta}_{ij})}\sup_{\bar{\Phi} \in \mathscr{T}_{M}(\htt)} \frac{1}{N}\sum_{i=1}^{N} \frac{1}{\vartheta} \sum_{j=1}^{\vartheta} \langle \boldsymbol{\zeta}_{ij}, \bar{\Phi}(\rho_{X}^{(i)},X_{ij}) \rangle
\end{aligned}$$ where $\boldsymbol{\zeta}_{ij}$ is a random vector in $\mathbb{R}^{d}$ with each component being i.i.d Rademacher random variable. 

We define a family of zero-mean random variables indexed by $\bar{\Phi} \in \mathscr{T}_{M}(\htt)$ as $$Z_{\bar{\Phi}}:= \frac{1}{\sqrt{ N\vartheta }} \sum_{i=1}^{N} \sum_{j=1}^{\vartheta} \langle  \boldsymbol{\zeta}_{ij}, \bar{\Phi}(\rho_{X}^{(i)},X_{ij}) \rangle,$$ which implies that $$\mathbb{E}[\Lambda(\st) |(\rho_{XY}^{(i)})_{i}] \leq 8\sqrt{ 2 }M \mathbb{E}_{\left[ X_{ij} \sim \rho_{X}^{(i)} \right]_{i}} \mathbb{E}\left[ \sup_{\bar{\Phi}\in \mathscr{T}_{M}(\htt)} \frac{1}{\sqrt{ N\vartheta }}Z_{\pb} \Bigg| (X_{ij})_{i,j}\right].$$
For any $\bar{\Phi}, \bar{\Phi}' \in \mathscr{T}_{M}(\htt)$, $$\mathbb{E} [ \exp(v (Z_{\bar{\Phi}}-Z_{\bar{\Phi}'})) |(X_{ij})_{i,j}] \leq \exp(v^{2} d _{\vartheta}(\bar{\Phi},\bar{\Phi}')^{2}/2), \forall v \in \mathbb{R}$$ where $$d_{\vartheta}(\pb, \pb'):= \| \pb-\pb' \|_{L^{2}({P}_{\vartheta})}=\left( \frac{1}{N\vartheta} \sum_{i=1}^{N}\sum_{j=1}^{\vartheta}\| \pb(\rho_{X}^{(i)},X_{ij})-\pb'(\rho_{X}^{(i)},X_{ij}) \|_{2}^{2} \right)^{\frac{1}{2}}.$$
Then, conditioned on $(X_{ij})_{i,j}$, $Z_{\bar{\Phi}}$ is a subgaussian process indexed by $\bar{\Phi} \in \mathscr{T}_{M}(\htt)$ with respect to $d_{\vartheta}$. It follows by the Dudley Integral \citep{wainwright_high-dimensional_2019} that 
$$\mathbb{E} \left[\sup_{\pb \in \mathscr{T}_{M}(\htt)}Z_{\pb}\Bigg|(X_{ij})_{i,j}\right] \leq 32\int  ^{2M}_{0} \sqrt{ \log \mathrm{N}(u, \mathscr{T}_{M}(\htt),d_{\vartheta}) } \, du.$$ It follows that $$\begin{aligned}&\mathbb{E}[\Lambda(\st)|(\rho_{XY}^{(i)})_{i}] \leq \frac{256\sqrt{ 2 }M}{\sqrt{ N\vartheta }} \mathbb{E}_{[X_{ij} \sim \rho_{X}^{(i)}]_{i}}\left( \int  _{0}^{2M} \sqrt{ \log \mathrm{N}(u, \htt, d_{\hat{\rho}_{\vartheta}}) } \, du  \right) \\ & \leq \frac{256\sqrt{ 2 }M}{\sqrt{ N\vartheta }} \mathbb{E}_{[ X_{ij} \sim \rho_{X}^{(i)}]_{i}} \left( \int  _{0}^{2M}  \sqrt{ \underbrace{ 100nd^{2}m^{2} }_{ \vr_{1} }\log\left(1+ \frac{8c_{6}C_{F}C_{\mathcal{B}}^{2}d^{2}\| \hat{X} \|_{2}}{u} \right)+ \underbrace{ 600 c_{5}nd^{2}m^{2}\tm^{2}\log(c_{4}m\tm) }_{ \vr_{2} } }  \, du  \right) \\  &\leq \frac{256\sqrt{ 2 }M}{\sqrt{ N\vartheta }} \mathbb{E}_{[X_{ij} \sim \rho_{X}^{(i)}]_{i}} \left( 2M\sqrt{ \vr_{2} }+ \sqrt{ \vr_{1} }\int  _{0}^{2M} \sqrt{ \log\left( 1+ \frac{8c_{6}C_{F}\bc ^2d^{2}\| \hat{X} \|_{2}}{u}  \right) }\, du  \right) \\ & \leq \frac{256\sqrt{ 2 }M}{\sqrt{ N \vartheta }} \mathbb{E}_{[X_{ij} \sim \rho _{X}^{(i)}]_{i}} \left( 2M \sqrt{ \vr_{2} }+ \sqrt{ \vr_{1} }\sqrt{ 8c_{6}C_{F}\bc^2d^{2}\| \hat{X} \|_{2} } \int  _{0}^{2M}  u^{-\frac{1}{2}}\, du  \right) \\ &= \frac{256\sqrt{ 2 }M}{\sqrt{ N\vartheta }}\left( 2M\sqrt{ \vr_{2} }+ 8\sqrt{ \vr_{1}c_{6}C_{F}\bc^2d^{2}M }\,\, \mathbb{E}_{[X_{ij} \sim \rho_{X}^{(i)}]_{i}}\sqrt{ \| \hat{X} \|_{2}  }  \right) \end{aligned}$$ where $$\mathbb{E}_{[X_{ij} \sim \rho _{X}^{(i)}]_{i}}\sqrt{ \| \hat{X} \|_{2} }\leq \sqrt{ \mathbb{E}_{[X_{ij}\sim \rho _{X}^{(i)}]_{i}} \| \hat{X} \|_{2} }= \sqrt{  9+\mathbb{E} \sqrt{ \frac{1}{N\vartheta} \sum_{i=1}^{N} \sum_{j=1}^{\vartheta} \| X_{ij} \|_{2}^{2}} }\leq 9+ C_{\mathcal{B}}^{\frac{1}{2}}.$$ Then we can obtain that $$\mathbb{E}[\Lambda(\st)|(\rho_{XY}^{(i)})_{i}] \leq \frac{C_{M,\mathcal{B},d}}{\sqrt{ N\vartheta }} \sqrt{ n }m\tm^{2}$$ with $C_{M,\mathcal{B},d}=256\sqrt{ 2 }M\max\left\{  20dM\sqrt{ 6c_{5} }, 80d^2(9+\bc^{1/2})\sqrt{c_6C_F\bc^2M} \right\}$.

Then we have 
\begin{align} \label{ineq:e2}
    \mathbb{P} \left\{  \sup_{\Phi \in \htt} \mathcal{E}_{2}(\mathscr{T}_{M}(\Phi)) > \epsilon + C_{M,\mathcal{B},d} \frac{\sqrt{ n }m\tm^{2}}{\sqrt{ N\vartheta }} \middle| \rho_{XY}^{(1)},\dots, \rho_{XY}^{(N)} \right\} \leq \exp\left( - \frac{(N\vartheta)\epsilon^{2}}{32M^{4}} \right).
\end{align}

Similarly, for each $\Phi \in \htt$, we define $$\lt_{\Phi}(\st)=\frac{1}{N\vartheta}\sum_{i=1}^{N}\sum_{j=1}^{\vartheta} \Big(\| \Phi(\rho_{X}^{(i)},X_{ij})-Y_{ij} \|_{2}^{2}- \mathbb{E}\big[\| \Phi(\tx)-Y \|_{2}^{2}| \rho_{XY}^{(i)} \big] \Big).$$ Similarly, we have $$\left| \lt_{\Phi}(\st) - \lt_{\Phi}(\st^{\backslash (i,j)}) \right| \leq \frac{1}{N\vartheta} \left|  \| \Phi(\rho_{X}^{(i)},X_{ij})-Y_{ij} \|_{2}^{2}- \| \Phi(\rho_{X}^{(i)}, X'_{ij})-Y'_{ij} \|_{2}^{2}   \right|\leq \frac{4(M+\| \Phi \|_{C(\ob)} )^{2}}{N\vartheta} . $$
Then by Azuma-McDiarmind's inequality, we can derive that 
\begin{align} \label{ineq:e22}
    \mathbb{P}\{ \mathcal{E}'_{2}(\Phi)  > \epsilon | \rho_{XY}^{(1)},\dots, \rho_{XY}^{(N)} \} \leq  \exp \left\{  -\frac{(N\vartheta)\epsilon^{2}}{8(M+\| \Phi \|_{C(\ob)} )^{4}}  \right\}. 
\end{align}

\subsubsection{Second-Stage Sampling Error with Accessible Context} \label{error:acc}

Now, we estimate the sampling error with accessible context information, i.e., the empirical distributions. We have $$\begin{aligned}&\quad\sup_{\Phi \in \htt}\left|  \mathcal{E}_{3}(\mathscr{T}_{M}(\Phi)) \right|\\&= \sup_{\Phi \in\htt}\left |  \frac{1}{N\vartheta} \sum_{i=1}^{N} \sum_{j=1}^{\vartheta}\Big(\| \mathscr{T}_{M}(\Phi)(\rho_{X}^{(i)}, X_{ij})-Y_{ij} \|_{2}^{2} - \| \mathscr{T}_{M}(\Phi)(\hat{\rho}_{X}^{(i)}, X_{ij})-Y_{ij} \|_{2}^{2} \Big) \right|\\& \leq \sup_{\Phi \in\htt}\frac{1}{N\vartheta} \sum_{i=1}^{N} \sum_{j=1}^{\vartheta} 4M \left\| \Phi(\rho_{X}^{(i)},X_{ij})- \Phi(\hat{\rho}_{X}^{(i)},X_{ij}) \right\|_{2}  \\ & \leq \sup_{\Phi \in\htt} \frac{4M}{N} \sum_{i=1}^{N}  \left( \| \alpha \|_{1} \max_{1\leq j'\leq n} \sum_{p,q=1}^{m(n)} \left\| \int_{\mathcal{X}} \phi _{p,\tm(n)}(x)A_{p,q}^{(j')}x \, d \rho_{X}^{(i)}-\int_{\mathcal{X}}  \phi _{p,\tm(n)}(x)A_{p,q}^{(j')}x \, d\hat{\rho}_{X}^{(i)}   \right\|_{2}    \right)  \\ &\leq  \sup_{\phi \in \mathcal{N N}(\Theta _{\tm})} \frac{8MC_{F}}{N} \sum_{i=1}^{N} m\sqrt{ d } \max_{1\leq p\leq m} \left\|\int_{\mathcal{X}} \phi _{p,\tm(n)}(x)x  \, d\rho_{X}^{(i)}- \int  _{\mathcal{X}}\phi _{p,\tm(n)}(x)x \, d\hat{\rho}_{X}^{(i)}    \right\|_{2} \\& \leq \frac{8MC_{F}\sqrt{ d }m}{N}\sum_{i=1}^{N} \underbrace{ \sup_{\phi \in \mathcal{N N}(\Theta _{\tm})} \left\| \int  _{\mathcal{X}}\phi(x)x \, d\rho_{X}^{(i)}-\int  _{\mathcal{X}}\phi(x) x\, d\hat{\rho}_{X}^{(i)}   \right\|_{2} }_{ =:\mathcal{V}^{(i)}(\hat{\rho}_{X}^{(i)}) } \\&  \leq 8MC_{F} \sqrt{ d }m \max_{1 \leq i \leq N}  \mathcal{V}^{(i)}(\hat{\rho}_{X}^{(i)}). \end{aligned}$$
Then for $x=(x_{1},\dots,x_{\vartheta}) \in \mathcal{X}^{\vartheta}$ and the samples $(X_{i 1},\dots, X_{i \vartheta})$ in $\hat{\rho}_{X}^{(i)}$, we define $$\begin{aligned}
    \mathcal{V}_{j}^{(i)}(\hat{\rho}_{X}^{(i)})(x)&= \mathcal{V}^{(i)}(\delta[x_{1},\dots, x_{j-1}, X_{ij}, x_{j+1},\dots,x_{\vartheta}])\\&\quad\quad- \mathbb{E}_{X'_{ij} \sim \rho_{X}^{(i)}}(\mathcal{V}^{(i)}(\delta[x_{1},\dots, x_{j-1}, X'_{ij},x_{j+1},\dots, x_{\vartheta}]))
\end{aligned}$$ for $1 \leq j \leq \vartheta$ where $\delta[S]$ is defined as the empirical distribution generated by the dataset $S$.

We easily obtain that $$\begin{aligned}& \quad\left|\mathcal{V}_{j}^{(i)}(\hat{\rho}_{X}^{(i)})(x)\right|\\&= \left|\mathbb{E}_{X_{ij}'\sim \rho_{X}^{(i)}}\Big(\mathcal{V}^{(i)}(\delta[x_{1},\dots, x_{j-1},X_{ij}, x_{j+1},\dots, x_{\vartheta}])- \mathcal{V}^{(i)}(\delta[x_{1},\dots,x_{j-1}, X_{ij}', x_{j+1},\dots, x_{\vartheta}])\Big)\right|\\& \leq \mathbb{E}_{X_{ij}'\sim \rho_{X}^{(i)}} \Biggl|  \sup_{\phi \in \mathcal{N N}(\Theta _{\tm})}\left\| \frac{1}{\vartheta}\left( \sum_{j' \neq j}\phi(x_{j'})x_{j'}+\phi(X_{ij})X_{ij} \right)- \mathbb{E}_{\rho_{X}^{(i)}}(\phi(X)X ) \right\|_{2}  \\& \,\,\,\,\,\,\,\,\,\,\,\,\,\,\,\,\,\,\,\,\,\,\,\,\,\,\,\,\,- \sup_{\phi \in \mathcal{N N}(\Theta _{\tm})} \left\| \frac{1}{\vartheta}\left( \sum_{j' \neq j} \phi(x_{j'})x_{j'}+\phi(X_{ij}')X_{ij}'\right)- \mathbb{E}_{\rho_{X}^{(i)}}(\phi(X)X) \right\|_{2}  \Biggr|  \\&\leq \mathbb{E}_{X_{ij}' \sim \rho_{X}^{(i)}} \left( \sup_{\phi \in \mathcal{N N}(\Theta _{\tm})} \frac{1}{\vartheta}\left\| \phi(X_{ij})X_{ij}- \phi(X_{ij}')X_{ij}' \right\|_{2}  \right)\\ & \leq \mathbb{E}_{X_{ij}' \sim \rho_{X}^{(i)}} \left( \sup_{\phi \in \mathcal{N N}(\Theta _{\tm })} \frac{1}{\vartheta} (\| \phi(X_{ij})X_{ij} \|_{2}+ \| \phi(X_{ij}')X_{ij}' \|_{2}  ) \right)\\ &\leq \frac{1}{\vartheta} (\| X_{ij} \|_{2} + \mathbb{E}_{X_{ij}' \sim \rho_{X}^{(i)}} \| X_{ij}' \|_{2}  ) \leq \frac{1}{\vartheta} (\| X_{ij} \|_{2}+\sqrt{ \bc } ). \end{aligned}$$ 
For each $\rho_{X}^{(i)} \in \Bb$, we have the ratio condition that $\| \wi \|_{L^{\gamma}(\rk)}<\infty$. Then for $X_{ij} \sim \rho_{X}^{(i)}$ and any $p\geq 1$, we have $$\begin{aligned}\Big(\mathbb{E}_{\rho_{X}^{(i)}} \| X_{ij} \|_{2}^{p}\Big)^{\frac{1}{p}}&= \left( \int  _{\mathcal{X}} \| x \|_{2}^{p} \cdot \wi (x)\, d\rk \right)^{\frac{1}{p}} \leq \| \wi \|_{L^{\gamma}(\rk)}^{\frac{1}{p}} (\mathbb{E}_{\rk} \| X \|_{2}^{p\gamma'})^{\frac{1}{p\gamma'}} \\ & \leq (1+ \| \wi \|_{L^{\gamma}(\rk)}) (\mathbb{E}_{\rk} \| X \|_{2}^{p\gamma'} )^{\frac{1}{p\gamma'}}\end{aligned}$$ with $\gamma' = \frac{\gamma}{\gamma-1}$. For $X \sim \rk$, $\| X \|_{2}$ is a norm subgaussian random variable \citep{jin2019short} such that $(\mathbb{E}_{\rk}\| X \|_{2}^{p})^{\frac{1}{p}}\leq c\ka\sqrt{ d p }$ for any $p \geq 1$ where $c$ is an absolute constant. Then we have 
\begin{align*}
    \Big(\mathbb{E}_{\rho_{X}^{(i)}} \| X_{ij} \|_{2}^{p}\Big)^{\frac{1}{p}} \leq c\ka\sqrt{ \gamma' }\big(1+ \left\| \wi \right\|_{L^{\gamma}(\rk)}\big)\sqrt{ p }
\end{align*}
for any $p\geq 1$. We define the subgaussian norm \citep{wainwright_high-dimensional_2019} for a random variable $Z$ as $\| Z \|_{\uppsi_{2}}= \sup_{p\geq 1} \frac{(\mathbb{E} \left|  Z \right|^{p})^{\frac{1}{p}}}{\sqrt{ p }}$. Then $\left\| X_{ij} \right\|_{2}$ is a subgaussian random variable with $\left\| \| X_{ij} \|_{2} \right\|_{\uppsi_{2}}\leq c\ka\sqrt{ \gamma' }(1+\| \wi \|_{L^{\gamma}(\rk)})$. It also implies that for any $x \in X^{\vartheta}$, $\mathcal{V}_{j}^{(i)}(\hat{\rho}_{X}^{(i)})(x)$ is also a subgaussian random variable with $$\left\| \mathcal{V}_{j}^{(i)}(\hat{\rho}_{X}^{(i)})(x) \right\|_{\uppsi_{2}}\leq \frac{1}{\vartheta}\Big(\left\| \| X_{ij} \|_{2} \right\|_{\uppsi_{2}}+ \sqrt{ \bc }\Big),$$
because for any $p\geq 1$, 
\begin{align*}
    \Big(\mathbb{E}_{X_{ij}\sim \rho_{X}^{(i)}} \left|  \mathcal{V}_{j}^{(i)}(\hat{\rho}_{X}^{(i)})(x) \right|^{p}\Big)^{\frac{1}{p}} \leq \frac{1}{\vartheta} \left[ (\mathbb{E}_{\rho_{X}^{(i)}}\| X_{ij} \|_{2}^{p})^{\frac{1}{p}}+ \sqrt{ \bc } \right]
\end{align*}
by Minkowski inequality.
Then by Theorem 3 in \citet{maurer2021concentration}, we have for any $\epsilon>0$, $$\mathbb{P}\{ \mathcal{V}^{(i)}(\hat{\rho}_{X}^{(i)})- \mathbb{E}\mathcal{V}^{(i)}(\hat{\rho}_{X}^{(i)}) > \epsilon  \} \leq \exp \left( - \frac{\vartheta\epsilon^{2}}{ \uppsi_{2}(\rho_{X}^{(i)})} \right) $$ where $$\upp= 32e\Big(c\ka\sqrt{ \gamma' }(1+\| \wi \|_{L^{\gamma}(\rk)})+\sqrt{ \bc }\Big)^{2}.$$

We also have $$\begin{aligned}\mathbb{E}\mathcal{V}^{(i)}(\hat{\rho}_{X}^{(i)})&= \mathbb{E}_{X_{ij} \sim P_{X}^{(i)}}\sup_{\phi \in \mathcal{N N}(\Theta _{\tm})} \left\| \frac{1}{\vartheta} \sum_{j=1}^{\vartheta} \phi(X_{ij})X_{ij}- \mathbb{E}_{P_{X}^{(i)}}(\phi(X)X)  \right\|_{2}\\&\leq \mathbb{E}_{X_{ij},X_{ij}'\overset{i.i.d}{\sim}P_{X}^{(i)}} \sup_{\phi \in \mathcal{NN}(\Theta _{\tm})} \left\| \frac{1}{\vartheta} \sum_{j=1}^{\vartheta} [\phi(X_{ij})X_{ij}-\phi(X_{ij}')X_{ij}'] \right\|_{2} \\& =\frac{1}{\vartheta}\mathbb{E}_{X_{ij},X'_{ij} \overset{i.i.d}{\sim} P_{X}^{(i)}} \mathbb{E}_{\zeta _{ij}} \sup_{ \phi \in \mathcal{N N}(\Theta _{\tm})} \left\| \sum_{j=1}^{\vartheta} \zeta _{ij}[\phi(X_{ij})X_{ij}-\phi(X'_{ij})X'_{ij}] \right\|_{2}   \\&\leq \frac{2}{\vartheta} \mathbb{E}_{X_{ij}} \mathbb{E}_{\zeta _{ij}}\sup_{\phi \in \mathcal{N N}(\Theta _{\tm})}  \left\|\sum^{\vartheta}_{j=1} \zeta _{ij} \phi(X_{ij})X_{ij}  \right\|_{2} \\ &= \frac{2}{\vartheta} \mathbb{E}_{X_{ij}} \mathbb{E}_{\zeta _{ij}} \sup_{\phi \in \mathcal{N N}(\Theta _{\tm})} \sup_{ \mathrm{u} \in S^{d-1}} \left( \sum_{j=1}^{\vartheta} \zeta _{ij}\phi(X_{ij})\mathrm{u}^{T}X_{ij}\right) \\&=\frac{2}{\vartheta} \mathbb{E}_{X_{ij}} \mathbb{E}_{\zeta _{ij}} \sup_{f \in \mathrm{U}} \left( \sum_{j=1}^{\vartheta} \zeta _{ij} f(X_{ij}) \right)  \end{aligned}$$
where $(\zeta _{ij})_{j=1}^{\vartheta}$ are independent Rademacher random variables and $$\mathrm{U}:= \{ f: f(x)=\phi(x)\mathrm{u}^{T}x \text{ with } \phi \in \mathcal{N N}(\Theta _{\tm}), \mathrm{u} \in S^{d-1} \}.$$
We define a family of zero-mean random variables index by $f \in \mathrm{U}$ as $$Z_{f}^{(i)}:= \frac{1}{\sqrt{ \vartheta }} \sum_{j=1}^{\vartheta}\zeta _{ij}f(X_{ij}),$$ which implies that $$\mathbb{E} \mathcal{V}^{(i)}(\hat{\rho}_{X}^{(i)})\leq \mathbb{E}_{X_{ij}} \mathbb{E}\left[ \sup_{f \in \mathrm{U}} \frac{1}{\sqrt{ \vartheta }} Z_{f}^{(i)}\middle| (X_{ij})_{j} \right].$$
For any $f,f' \in \mathrm{U}$, $$ \mathbb{E}[\exp(v(Z_{f}^{(i)}- Z_{f'}^{(i)}))| (X_{ij})_{i,j}] \leq \exp (v^{2}\di(f,f') ^{2}/2), \,\forall v \in \mathbb{R}$$ where $$\di(f,f'):= \left( \frac{1}{\vartheta} \sum_{j=1}^{\vartheta} (f(X_{ij})-f'(X_{ij}))^{2} \right)^{\frac{1}{2}}.$$
Then, conditioned on $(X_{ij})_{j}$, $Z_{f}^{(i)}$ is a subgaussian process indexed by $f \in \mathrm{U}$ with respect to $\di$. It's easy to see that for any $f,f' \in \mathrm{U}$, 
\begin{align*}
\di(f,f')&=\left( \frac{1}{\vartheta} \sum_{j=1}^{\vartheta}\Big(\phi _{f}(X_{ij})\mathrm{u}_{f}^{T}X_{ij}- \phi _{f'}(X_{ij})\mathrm{u}_{f'}^{T}X_{ij}\Big)^{2} \right)^{\frac{1}{2}}  \leq 2 \sqrt{ \frac{1}{\vartheta} \sum_{j=1}^{\vartheta} \left\| X_{ij} \right\|_{2}^{2}  }=:2M_{i}.
\end{align*}
If  $\left\| \mathrm{u}_{f}- \mathrm{u}_{f'} \right\|_{2} \leq \epsilon$ and parameters in $\phi _{f}$ and $\phi _{f'}$ satisfy (\ref{ineq:cov}),
\begin{align*}
\di(f,f') &\leq \left( \frac{1}{\vartheta}\sum_{j=1}^{\vartheta} \Big(\phi _{f}(X_{ij})\mathrm{u}_{f}^{T}X_{ij}- \phi _{f'}(X_{ij})\mathrm{u}_{f}^{T}X_{ij}\Big)^{2} \right)^{\frac{1}{2}} \\ &\,\,\,\,\,\,\,\,\,\,+ \left( \frac{1}{\vartheta}\sum_{j=1}^{\vartheta} \Big(\phi _{f'}(X_{ij})\mathrm{u}_{f}^{T}X_{ij}- \phi _{f'}(X_{ij})\mathrm{u}_{f'}^{T}X_{ij}\Big)^{2} \right)^{\frac{1}{2}} \\ & \leq \left( \frac{1}{\vartheta} \sum_{j=1}^{\vartheta}\Big(\phi _{f}(X_{ij})-\phi _{f'}(X_{ij})\Big)^{2}\left\| X_{ij} \right\|_{2}^{2}  \right)^{\frac{1}{2}}+ M_{i} \left\|  \mathrm{u}_{f}- \mathrm{u}_{f'} \right\|_{2} \\& \leq \left( \frac{1}{\vartheta} \sum_{j=1}^{\vartheta}\Big(192d(2+\| X_{ij} \|_{\infty} )[c_{4}\tm]^{3c_{5}\tm^{2}}\epsilon\Big)^{2} \| X_{ij} \|_{2}^{2}  \right)^{\frac{1}{2}} + M_{i} \epsilon \\ & \leq \left( M_{i}+192d(c_4\tm)^{3c_{5}\tm^{2}}\sqrt{ \frac{1}{\vartheta} \sum_{j=1}^{\vartheta}(8\| X_{ij} \|_{2}^{2} +2\| X_{ij} \|_{2}^{4} ) } \right) \epsilon.
\end{align*}
Then the covering number 
\begin{align*}
\mathrm{N}(\tilde{\epsilon}, \mathrm{U}, \di) & \leq \left( 1+ \frac{(c_{2}'\tm)^{c_{3}'\tm^{2}}}{\epsilon} \right)^{96d^{2}\tm^{2}}\left( 1+ \frac{2}{\epsilon} \right)^{d}\leq \left( 1+ \frac{(c_{2}'\tm)^{c_{3}'\tm^{2}}}{\epsilon} \right)^{100d^{2}\tm^{2}} \\ & \leq \left( 1+ \frac{\Delta^{(i)}}{\tilde{\epsilon}} \right)^{100d^{2}\tm^{2}} (c_{6}\tm)^{c_{7}\tm^{4}},
\end{align*} where $\Delta^{(i)}:= M_{i}+ \sqrt{ \frac{1}{\vartheta}\sum_{j=1}^{\vartheta}(8\| X_{ij} \|_{2}^{2}+2\| X_{ij} \|_{2}^{4}) }$, $c_{6}=192d(c_{2}'+c_{4})$ and $c_{7}=100(c_{3}'+3c_{5})d^{2}$.

Then by Dudley integral, we have \begin{align*}
&\mathbb{E}\left[  \sup_{f \in \mathrm{U}} Z_{f}^{(i)}\middle|(X_{ij})_{j}  \right]\leq 32 \int  _{0}^{2M_{i}} \sqrt{ \log \mathrm{N}(v, \mathrm{U, \di}) } \, dv \\ \leq & \, 32 \int  _{0} ^{2M_{i}} \sqrt{100d^{2}\tm^{2} \log \left( 1+\frac{\Delta^{(i)}}{v}  \right) + c_{7}\tm^{4} \log(c_{6}\tm) } \, dv \\ \leq & \, 32 \left( 2M_{i} \tm^{2}\sqrt{ c_{7}\log(c_{6}\tm) } + 10d\tm\int _{0}^{2M_{i}} \sqrt{ \log\left( 1+ \frac{\Delta^{(i)}}{v} \right) }\, dv  \right)\\\leq & \, 32 \left( 2M_{i} \tm^{2}\sqrt{ c_{7}\log(c_{6}\tm) } + 10d\tm \int  _{0}^{2M_{i}} \sqrt{ \Delta^{(i)} } v^{-\frac{1}{2}} \, dv  \right) \\ \leq & \, 32 \Big(2M_{i} \tm^{2}\sqrt{ c_{7}\log(c_{6}\tm) } + 20d\tm  \sqrt{ 2M_{i} \Delta^{(i)} }\Big).
\end{align*}
Then we have \begin{align*}
\mathbb{E} \mathcal{V}^{(i)}(\hat{\rho}_{X}^{(i)}) &\leq \frac{1}{\sqrt{ \vartheta }}\mathbb{E}_{X_{ij}} 32 \Big(2M_{i} \tm^{2}\sqrt{ c_{7}\log(c_{6}\tm) } + 20d\tm  \sqrt{ 2M_{i} \Delta^{(i)} }\Big) \\ & = \frac{1}{\sqrt{ \vartheta }} \left(64\tm^{2}\sqrt{ c_{7}\log(c_{6}\tm) } \mathbb{E}_{X_{ij}}M_{i}+ 640d\tm\mathbb{E}_{X_{ij}}\sqrt{ 2M_{i}\Delta^{(i)} } \right)
\end{align*}
where $\mathbb{E}_{X_{ij}}M_{i}= \mathbb{E}_{X_{ij}}\sqrt{ \frac{1}{\vartheta}\sum_{j=1}^{\vartheta} \| X_{ij} \|_{2}^{2} } \leq \sqrt{ \frac{1}{\vartheta}\sum_{j=1}^{\vartheta} \mathbb{E}_{X_{ij}} \| X_{ij} \|_{2}^{2} }=\sqrt{ \bc }$ and 
\begin{align*}
\mathbb{E}_{X_{ij}} \sqrt{ 2M_{i}\Delta^{(i)} }&= \mathbb{E}_{X_{ij}} \sqrt{ 2M_{i}\left( M_{i}+\sqrt{ \frac{1}{\vartheta}\sum_{j=1}^{\vartheta}(8\| X_{ij} \|_{2}^{2} +2\| X_{ij} \|_{2}^{4} ) } \right) } \\ & \leq \mathbb{E}_{X_{ij}} \sqrt{ 2M_{i} \left( M_{i} + \sqrt{ \frac{1}{\vartheta} \sum_{j=1}^{\vartheta}8\| X_{ij} \|_{2}^{2}  }+ \sqrt{ \frac{1}{\vartheta}\sum_{j=1}^{\vartheta}2\| X_{ij} \|_{2}^{4}  } \right) } \\ & = \mathbb{E}_{X_{ij}} \sqrt{ (2+4\sqrt{ 2 })M_{i}^{2} + 2\sqrt{ 2 }M_{i}\sqrt{ \frac{1}{\vartheta}\sum_{j=1}^{\vartheta}\| X_{ij} \|_{2}^{4}  }} \\ & \leq \sqrt{ (2+4\sqrt{ 2 })\mathbb{E}_{X_{ij}}M_{i}^{2}+ 2\sqrt{ 2 } \sqrt{ \mathbb{E}_{X_{ij}}M_{i}^{2} } \sqrt{ \mathbb{E}_{X_{ij}}\left( \frac{1}{\vartheta}\sum_{j=1}^{\vartheta} \| X_{ij} \|_{2}^{4}  \right) } } \\ & \leq \sqrt{ (2+4\sqrt{ 2 })\bc + 2\sqrt{ 2 }\sqrt{ \bc }\bc} \leq \sqrt{ 2+6\sqrt{ 2 } } \bc ^{\frac{3}{4}}.
\end{align*}
It follows that $$\mathbb{E} \mathcal{V}^{(i)}(\hat{\rho}_{X}^{(i)}) \leq \frac{1}{\sqrt{ \vartheta }}\left( 64\sqrt{ \bc }\tm^{2}\sqrt{ c_{7}\log(c_{6}\tm) }+ 640d\sqrt{ 2+6\sqrt{ 2 } }\bc^{\frac{3}{4}}\tm  \right)\leq \frac{c_{8}}{\sqrt{ \vartheta }}\tm^{2}\sqrt{ \log(\tm) }$$ where $c_{8}=64\sqrt{ \bc c_{7}(\log c_{6}+1) }+ 640d\sqrt{ 2+6\sqrt{ 2 } }\bc^{\frac{3}{4}}$.

 Then we have $$\mathbb{P}\left\{   \mathcal{V}^{(i)}(\hat{\rho}_{X}^{(i)}) > \epsilon + \frac{c_{8}\tm^{2}\sqrt{  \log(\tm) }}{\sqrt{ \vartheta }}  \right\} \leq \exp\left( - \frac{\vartheta\epsilon^{2}}{\upp} \right),$$ and it follows that $$\begin{aligned}
     \mathbb{P}\left\{   \sup_{\Phi \in \htt} \left|\mathcal{E}_{3}(\mathscr{T}_{M}(\Phi))\right| > 8MC_{F}\sqrt{ d }m \left( \epsilon+ \frac{c_{8}\tm^{2}\sqrt{  \log(\tm) }}{\sqrt{ \vartheta }} \right) \middle| \rho_{X}^{(1)},\dots, \rho_{X}^{(N)} \right\} \\ \leq N \exp \left(  -  \frac{\vartheta \epsilon^{2}}{\max_{1\leq i \leq N}\upp}  \right)
 \end{aligned}$$
which is equivalent to the inequality that 
\begin{equation} \label{ineq:e3}
    \begin{aligned}
    \mathbb{P}\left\{   \sup_{\Phi \in \htt} |\mathcal{E}_{3}(\mathscr{T}_{M}(\Phi))| > \epsilon + \frac{8c_{8}\sqrt{d}MC_F\tm^{2}m\sqrt{  \log(\tm) }}{\sqrt{ \vartheta }} \middle| \rho_{X}^{(1)},\dots, \rho_{X}^{(N)} \right\} \\\leq  N \exp \left( - \frac{\vartheta \epsilon^{2}}{64M^{2}C_{F}^{2}dm^{2} \max_{1 \leq i \leq N}\upp}  \right).
\end{aligned}
\end{equation}

Similarly, for any $\Phi \in \htt$, we have both $\| \Phi \|_{C(\ob)}$ and $\| \Phi \|_{C(\Omega)}$ uniformly bounded respectively. Then we have $$\begin{aligned}\mathcal{E}_{3}(\Phi) &=\frac{1}{N\vartheta}\sum_{i=1}^{N} \sum_{j=1}^{\vartheta} \Big(\| \Phi(\hat{\rho}_{X}^{(i)},X_{ij})-Y_{ij} \|_{2}^{2}- \| \Phi(\rho_{X}^{(i)},X_{ij})-Y_{ij} \|_{2}^{2}  \Big) \\ & \leq \frac{1}{N \vartheta} \sum_{i=1}^{N} \sum_{j=1}^{\vartheta}(2M+\| \Phi \|_{C(\ob)}+\| \Phi \|_{C(\Omega)}  ) \left\| \Phi(\hat{\rho}_{X}^{(i)},X_{ij})- \Phi(\rho_{X}^{(i)},X_{ij}) \right\|_{2} \\ & \leq 4(M+\| \Phi \|_{C(\Omega)}  )C_{F}\sqrt{ d }m \max_{1\leq i \leq N} \left\| \int  _{\mathcal{X}} \phi _{\Phi}(x)x  \, d\hat{\rho}_{X}^{(i)} - \int  _{\mathcal{X}} \phi _{\Phi}(x) x\, d\rho_{X}^{(i)}   \right\|_{2}, \end{aligned}$$
which follows that 
\begin{equation} \label{ineq:e33}
    \begin{aligned} 
  &\mathbb{P} \left\{   \mathcal{E}'_{3} (\Phi) > \epsilon +  \frac{4c_8 C_F\sqrt{d}(M+\| \Phi \|_{C(\Omega)} )\tm^2m\sqrt{\log(\tm)}}{\sqrt{ \vartheta }} \middle|\rho_{X}^{(1)},\dots, \rho_{X}^{(N)}  \right\}\\ \leq& N \exp \left( - \frac{\vartheta \epsilon^{2}}{16(M+\| \Phi \|_{C(\Omega)} )^{2}C_{F}^{2}dm^{2}\max_{1 \leq i \leq N}\upp} \right)
\end{aligned}
\end{equation}

Recall that for any $\Phi \in \htt$,  
\begin{align*}&\mathcal{E}((\ts))- \mathcal{E}(\Phi _{\mathcal{G}})= \left\| \ts- \Phi _{\mathcal{G}} \right\|_{L^{2}(\ngg)} ^{2} \\ \leq&\,  \mathcal{E}_{1}(\ts)+ \mathcal{E}'_{1}(\Phi)+ \mathcal{E}_{2}(\ts)+ \mathcal{E}'_{2}(\Phi) +\mathcal{E}_{3}(\ts)+\mathcal{E}'_{3}(\Phi)+ \mathcal{E}_{4}(\Phi). 
\end{align*}
\vskip 0.1in
Then combine all inequalities \eqref{ineq:e1}\eqref{ineq:e11}\eqref{ineq:e2}\eqref{ineq:e22}\eqref{ineq:e3}\eqref{ineq:e33}, and by the union bound, for any $\Phi \in \htt$ and $\epsilon>0$ we have \begin{align*}&\mathbb{P} \left\{  \| \ts-\Pg \|_{L^{2}(\pi)}^{2}> 2 \mathcal{E}_{4}(\Phi) + 2C_{M,\mathcal{B},d} \frac{\sqrt{ n }m \tm^{2}}{\sqrt{ N\vartheta }} + c_9(3M+\| \Phi \|_{C(\Omega)} ) \frac{\tm^{2}m\sqrt{\log(\tm) }}{\sqrt{ \vartheta }} + 12\epsilon   \right\} \\ \leq &\, \mathrm{N}\left( \htt, \frac{\epsilon}{16M}, d_{\Bb}  \right) \exp \left\{  - \frac{3N\epsilon}{2048M^{2}}  \right\}+ \exp \left\{  - \frac{N \epsilon^{2}}{2(3M+\| \Phi \|_{C(\ob)} )^{2}\left( \mathcal{E}_{4}(\Phi)+ \frac{2}{3} \epsilon \right)}  \right\}\\ &+ 2\exp \left\{  - \frac{(N\vartheta)\epsilon^{2}}{32(M+\| \Phi \|_{C(\ob)} )^{4}}  \right\}\\&+ 2\mathbb{E}_{P_{X}^{(i)} \overset{}{\sim}\pg} N \exp \left\{  - \frac{\vartheta\epsilon^{2}}{64(M+\| \Phi \|_{C(\Omega)} )^{2}C_{F}^{2}d m^{2} \max_{1\leq i\leq N}\upp}  \right\}\end{align*} where $\mathcal{E}_{4}(\Phi)= \|  \Phi -\Pg \|_{L^{2}(\ngg)}^{2}$ and $c_9=8c_{8}C_{F}\sqrt{ d }$.
 \qedblack
\vskip 0.2in

\subsection{Theorem \ref{thm:gen}: Generalization Bound for Linear Transformers}

From the approximation in Theorem \ref{thm:app}, for $n \geq 3$, there exists a transformer $\mathrm{T} \in \htt$ such that $$\| \mathrm{T}- \Pg \|_{L^{2}(\ngg)}\leq C_{*} \left( \left\lfloor  \frac{n}{2}  \right\rfloor \right)^{-\frac{1}{2}} \leq C_{*}'n^{-\frac{1}{2}} \text{ with } C_{*}'= 2 C_{*}$$ and by the approximant construction, we have $\| \mathrm{T} \|_{C(\ob)} \leq \| \mathrm{T} \|_{C(\Omega)}\leq 2 C_{F}\sqrt{ d(1+\bc) }=\mathcal{A}_{1}$.
Apply the above oracle inequality with letting $\Phi=\mathrm{T}$ and $\upn:= \frac{1}{N}\sum_{i=1}^{N}\upp$, and we obtain that 
$$\begin{aligned}&\mathbb{P} \left\{  \|  \ts- \Pg \|_{L^{2}(\pi)}^{2} > 2C_{*}'^{2} n^{-1}+2C_{M,\mathcal{B},d} \frac{\sqrt{ n }m\tm^{2}}{\sqrt{ N\vartheta }}+ 8c_{8}C_{F}\sqrt{ d }(3M+\mathcal{A}_{1} ) \frac{\tm^{2}m\sqrt{\log(\tm) }}{\sqrt{ \vartheta }}+12\epsilon  \right\} \\ \leq & \, \exp \left\{  100d^{2}nm^{2}\log \left( 1+\frac{128MC_{d,F,\mathcal{B}}}{\epsilon } \right) + 600 c_{5}d^{2}nm^{2}\tm^{2}\log(c_{4}m\tm) - \frac{3N\epsilon}{2048M^{2}}\  \right\} \\ &+ \, \exp \left\{  - \frac{N\epsilon^{2}}{2(3M+\mathcal{A}_{1})^{2}\left( C_{*}'^{2}n^{-1}+ \frac{2}{3}\epsilon \right)}  \right\} + 2\exp \left\{  - \frac{(N\vartheta)\epsilon^{2}}{32(M+\mathcal{A}_{1})^{4}}  \right\} \\ &\, + \, 2\mathbb{E}_{P_{X}^{(i)} \sim\pg} N \exp \left\{  - \frac{\vartheta\epsilon^{2}}{64(M+\mathcal{A}_{1})^{2}dC_{F}^{2}m^{2}N\upn}  \right\}.  \end{aligned}$$
\vskip 0.1in

We follow the parameter selections in the approximation by letting $m= \lceil n^{\frac{\gamma}{2(\gamma-1)\xi}}\rceil$ and $\tm= \lceil (\frac{1}{2}+ \frac{\gamma}{4(\gamma-1)\xi})\log n \rceil$. It's obtained that 
\begin{align*}&\mathbb{P} \left\{  \| \ts- \Pg \|_{L^{2}(\pi)}^{2} > 2C_{*}'^{2}n^{-1} + \mathcal{A}_{2} \frac{n^{\frac{1}{2}+ \frac{\gamma}{2(\gamma-1)\xi}}(\log n)^{2}}{\sqrt{ N\vartheta }} + \mathcal{A}_{3} \frac{n^{\frac{\gamma}{2(\gamma-1)\xi}}(\log n)^{3}}{\sqrt{ \vartheta }} + 12 \epsilon   \right\}\\ \leq & \, \exp \left\{  200d^{2}n^{1+ \frac{\gamma}{(\gamma-1)\xi}} \log\left( 1+\frac{128MC_{d,F,\mathcal{B}}}{\epsilon} \right) +  \mathcal{A}_{4}n^{1+\frac{\gamma}{(\gamma-1)\xi}}(\log n)^{3} - \frac{3N\epsilon}{2048M^{2}}\right\} 
\end{align*}
\begin{align*}
&+ \, \exp \left\{   -  \frac{N\epsilon^{2}}{2(3M+\mathcal{A}_{1})^{2}\left( C_{*}'^{2}n^{-1}+ \frac{2}{3}\epsilon \right)}  \right\}+ 2 \exp \left\{  - \frac{(N\vartheta)\epsilon^{2}}{32(M+\mathcal{A}_{1})^{4}}  \right\} \\ & + \,2 \mathbb{E}_{P_{X}^{(i)} \sim \pg} N\exp \left\{   - \frac{\vartheta\epsilon^{2}}{128d(M+\mathcal{A}_{1})^{2}C_{F}^{2} n^{\frac{\gamma}{(\gamma-1)\xi}}N \upn}  \right\} \end{align*}
where $\mathcal{A}_{2}=\big( 1+ \frac{\gamma}{2(\gamma-1)\xi} \big)^{2}C_{M,\mathcal{B},d}$, $\mathcal{A}_{3}=32(3M+\mathcal{A}_{1})C_{F}\sqrt{ d\bc }$ and $$\mathcal{A}_{4}=\left( 1+ \frac{\gamma}{(\gamma-1)\xi} \right)\left[ \left( 1200c_{5}d^{2}\left( \frac{1}{2}+ \frac{\gamma}{4(\gamma-1)\xi} \right)^{2}+\log c_{4}\left( \frac{1}{2}+\frac{\gamma}{4(\gamma-1)\xi} \right)^{2} \right) \right].$$

If we take $\epsilon \geq 2C_{*}'^{2}n^{-1}(\log n)^{3}$, it follows that 
\begin{align*}&\mathbb{P}\left\{  \| \ts- \Pg \|_{L^{2}(\pi)}^{2} > 13\epsilon + \mathcal{A}_{2} \frac{n^{\frac{1}{2}+ \frac{\gamma}{2(\gamma-1)\xi}}(\log n)^{2}}{\sqrt{ N \vartheta }} +\mathcal{A}_{3} \frac{n^{\frac{\gamma}{2(\gamma-1)\xi}}(\log n)^{3}}{\sqrt{ \vartheta }}   \right\} \\ \leq & \, \exp \left\{  200d^{2} n^{1+ \frac{\gamma}{(\gamma-1)\xi}} \log \left(1+ \frac{64MC_{d,F,\mathcal{B}}}{C_{*}'^{2}}\, n \right)+ \mathcal{A}_{4} n^{1+ \frac{\gamma}{(\gamma-1)\xi}} (\log n )^{3} - \frac{3N\epsilon}{2048M^{2}}  \right\} \\ &+ \, \exp \left\{  - \frac{3N\epsilon}{8(3M+\mathcal{A}_{1})^{2}}  \right\} + 2 \exp \left\{  - \frac{(N\vartheta)\epsilon^{2}}{32(M+\mathcal{A}_{1})^{4}}  \right\}\\ &+\,2\mathbb{E}_{P_{X}^{(i)} \sim \pg} N\exp \left\{   - \frac{\vartheta\epsilon^{2}}{128d(M+\mathcal{A}_{1})^{2}C_{F}^{2} n^{\frac{\gamma}{(\gamma-1)\xi}}N \upn}  \right\}. \end{align*}
Take $n= \lfloor \mathcal{K}_{1} N^{\frac{1}{2+ \gamma/[(\gamma-1)\xi]}} \rfloor$ with $\mathcal{K}_{1}=\left( \min\left\{  \frac{3C_{*}'^{2}}{819200M^{2}d^{2}\left( 1+\log\left( \frac{64MC_{d,F,\mathcal{B}}}{C_{*}} \right) \right)} ,\frac{3C_{*}'^{2}}{4096M^{2}\mathcal{A}_{4}}  \right\} \right)^{\frac{1}{2+\frac{\gamma \xi}{(\gamma-1)}}}$ and the second stage data size $\vartheta\geq N$,  then we have 
\begin{align*}&\mathbb{P}\{ \| \ts - \Pg \|_{L^{2}(\pi)}^{2} > \mathcal{A}_{5} \epsilon \} \\ \leq &\, \exp \left\{   \frac{3N\epsilon}{8192M^{2}}+ \frac{3N\epsilon}{8192M^{2}}- \frac{3N\epsilon}{2048M^{2}}  \right\}+ \exp \left\{  - \frac{3N\epsilon}{8(3M+\mathcal{A}_{1})^{2}}  \right\} +\, 2\exp \left\{  - \frac{C_{*}'^{2}N\epsilon}{16\mathcal{K}_{1}(M+\mathcal{A}_{1})^{4}}  \right\} \\ &+ \, 2 \mathbb{E}_{P_{X}^{(i)}\sim \pg}N \exp \left\{   -  \frac{{ \vartheta }\epsilon}{\mathcal{A}_{6}N^{ \frac{3(\gamma-1)\xi+2\gamma}{2(\gamma-1)\xi+\gamma}}\upn}  \right\} \\ \leq &\, 4 \exp \left\{  - \frac{N\epsilon}{\mathcal{A}_{7}}  \right\}+ 2\mathbb{E}_{P_{X}^{(i)} \sim \pg} N \exp \left\{   - \frac{ { \vartheta  } \epsilon}{\mathcal{A}_{6}N^{\frac{3(\gamma-1)\xi+2\gamma}{2(\gamma-1)\xi+\gamma}}\upn}   \right\},\end{align*}
where $$\mathcal{A}_{5}=13+ \frac{\mathcal{A}_{2}\mathcal{K}_{1}^{\frac{3}{2}+\frac{\gamma}{2(\gamma-1)\xi}}+ \mathcal{A}_{3} \mathcal{K}_{1}^{1+\frac{\gamma}{(\gamma-1)\xi}}}{2C_{*}'^{2}} , \,  \mathcal{A}_{6}= 128d(M+\mathcal{A}_{1})^{2}C_{F}^{2}\mathcal{K}_{1}^{\frac{\gamma}{(\gamma-1)\xi}},  $$
and $\mathcal{A}_{7}= \min \left\{   \frac{3}{3096M^{2}}, \frac{3}{8(3M+\mathcal{A}_{1})^{2}}, \frac{C_{*}'^{2}}{16\mathcal{K}_{1}(M+\mathcal{A}_{1})^{4}}  \right\}$.

Take $t= \mathcal{A}_{5}\epsilon$. Then when $t \geq 2\mathcal{A}_{5}C_{*}'^{2}n^{-1}(\log n)^{3}$, we have $$ \mathbb{P}\{ \| \ts - \Pg \|_{L^{2}(\ngg)}^{2}> t  \}\leq 4 \exp \left\{  - \frac{Nt}{\mathcal{A}_{5}\mathcal{A}_{7}}  \right\}+ 2 \mathbb{E}_{\rho_{X}^{(i)}\sim \pg}N \exp \left\{  - \frac{{ \vartheta }t}{\mathcal{A}_{5}\mathcal{A}_{6}N^{\frac{3(\gamma-1)\xi+2\gamma}{2(\gamma-1)\xi+\gamma}}\upn}  \right\}.$$
It implies that 
\begin{align*}&\mathbb{E}\{ \mathcal{E}(\ts)- \mathcal{E}(\Pg) \}=\mathbb{E}\| \ts-\Pg \|_{L^{2}(\ngg)}^{2} = \int  _{0}^{\infty} \mathbb{P}\{ \| \ts - \Pg \|_{L^{2}(\ngg)}^{2} > t  \} \, dt \\ =&\,\left(  \int_{0}^{2\mathcal{A}_{5}C_{*}'^{2}n^{-1}(\log n)^{3}} +\int  _{2\mathcal{A}_{5}C_{*}'^{2}n^{-1}(\log n)^{3}}^{\infty}  \right)\,  \mathbb{P}\{ \| \ts -\Pg\|_{L^{2}(\ngg)}^{2} >t   \} \, dt  \\ \leq & \, 2\mathcal{A}_{5}C_{*}'^{2}n^{-1}(\log n)^{3}+ \int  _{0}^{\infty} \mathbb{P} \{ \| \ts -\Pg \|_{L^{2}(\ngg)}^{2} >t  \} \, d t \\ \leq & \, 2\mathcal{K}_{2} N^{- \frac{1}{2+ \gamma/[(\gamma-1)\xi]}} (\log N)^{3} + \int_{0}^{\infty} 4 \exp \left\{  - \frac{N^{\frac{1}{2+\gamma/[(\gamma-1)\xi]}}t}{\mathcal{A}_{5}\mathcal{A}_{7}}  \right\}   \, dt \\ & \,\,\,\,\,\,\,\,\,\,\,\,\,\,\,\,\,\,\,\,\,\,\,\,\,\,\,\,\,\,\,\,\,\,\,\,\,\,\,\,\,\,\,\,\,\,\,\,\,\,\,\,\,\,\,\,\,\,\,\,\,\,\,+2\mathbb{E}_{\rho_{X}^{(i)}\sim \pg}N \int  _{0}^{\infty}  \exp \left\{  - \frac{{ \vartheta }t}{\mathcal{A}_{5}\mathcal{A}_{6}N^{\frac{3(\gamma-1)\xi+2\gamma}{2(\gamma-1)\xi+\gamma}}\upn}  \right\} \, dt \\=& \, 2\mathcal{K}_{2} N^{- \frac{1}{2+ \gamma/[(\gamma-1)\xi]}} (\log N)^{3} + 4 \mathcal{A}_{5}\mathcal{A}_{7} N^{- \frac{1}{2+ \gamma/[(\gamma-1)\xi]}} + 2 \mathcal{A}_{5} \mathcal{A}_{6} \frac{N^{\frac{5(\gamma-1)\xi+3\gamma}{2(\gamma-1)\xi+\gamma}}}{{ \vartheta }} \mathbb{E}_{\rho_{X}^{(i)}\sim \pg} \Big(\upn\Big)  
\end{align*}
where $\mathcal{K}_{2}=4\mathcal{A}_{5}C_{*}'^{2}\mathcal{K}_{1}^{-1}\left( \log \mathcal{K}_{1}+ \frac{1}{2+\gamma/[(\gamma-1)\xi]} \right)^{3}$. Take $\vartheta= N^{3}$ and we obtain that 
\begin{align*}\mathbb{E} \{  \mathcal{E}(\ts)-\mathcal{E}(\Pg) \}\leq \left(2\mathcal{K}_{2}+4\mathcal{A}_{5}\mathcal{A}_{7}+2\mathcal{A}_{5}\mathcal{A}_{6}\mathbb{E}_{\rho_{X}^{(i)}\sim \pg}\Big(\upn\Big)\right)N^{- \frac{1}{2+ \gamma/[(\gamma-1)\xi]}} (\log N)^{3}\end{align*} where $$\begin{aligned}\mathbb{E}_{\rho_{X}^{(i)} \sim \pg}\Big(\upn\Big)&= \mathbb{E}_{\rho_{X}^{(i)}\sim \pg}\left( \frac{1}{N} \sum_{i=1}^{N} \upp \right)\\&=32e \, \mathbb{E}_{\rho_{X}^{(i)}\sim\pg} \frac{1}{N}\sum_{i=1}^{N}\Big(c\ka\sqrt{ \gamma' }(1+\| \wi \|_{L^{\gamma}(\rk)})+\sqrt{ \bc }\Big)^{2} \\& \leq 64e \,\mathbb{E}_{\rho_{X} \sim \pg} \Big(c ^{2}\ka^{2}\gamma'(1+\| \omega _{\ka}(\rho_{X}) \|_{L^{\gamma}(\rk)} )^{2}+ \bc \Big) \\ &\leq \, 64e(\bc+2 c ^{2}\ka^{2}\gamma')+ 128e \, \mathbb{E}_{\rho_{X}\sim\pg}\| \omega _{\ka}(\rho_{X}) \|^{2}_{L^{\gamma}(\rk)}\\&\leq \, 64 e (\bc+2 c ^{2}\ka^{2}\gamma'+ 2C_{\mathcal{G}}). \end{aligned}$$
It follows that $$\mathbb{E}\{ \mathcal{E}(\ts)- \mathcal{E}(\Pg) \}\leq \mathcal{K}_{3}N^{- \frac{1}{2+ \gamma/[(\gamma-1)\xi]}} (\log N)^{3}$$ with $\mathcal{K}_{3}= 2\mathcal{K}_{2}+4\mathcal{A}_{5}\mathcal{A}_{7}+128e\mathcal{A}_{5}\mathcal{A}_{6}(\bc+ 2c ^{2}\ka^{2} \gamma'+ 2C_{\mathcal{G}})$. \qedblack


\newpage

\section{Context Embedding and Feature Mapping} \label{app:inj}

\begin{proposition} \label{prop:Kr}
    $\Kr:\bx \to \mathcal{H}_{\kr} \otimes \mathbb{R}^d$ is an injective and continuous mapping. 
\end{proposition}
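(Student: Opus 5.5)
The plan has three parts: show $\Kr(\rho)$ is well defined, prove continuity by a coupling argument that actually gives a local Lipschitz bound in $W_2$, and prove injectivity by a Fourier argument on the signed measures $x^{(s)}\,d\rho(x)$.

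\emph{Well-definedness.} For $\rho\in\bx$, the map $x\mapsto \kr(\cdot,x)\otimes x$ is continuous into $\h\otimes\mathbb{R}^d$. Its norm is
$$\|\kr(\cdot,x)\otimes x\|=\sqrt{\kr(x,x)}\,\|x\|_2=\|x\|_2 .$$
Since $\rho$ has a finite second moment, it has a finite first moment. Hence $\Kr(\rho)$ is a well-defined Bochner integral, and the reproducing property gives $\Kr(\rho)(z)=\int \kr(z,x)x\,d\rho(x)$.

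\emph{Continuity.} Let $\rho,\rho'\in\bx$ and let $\tau\in\prod(\rho,\rho')$ be any coupling. By Bochner's inequality,
$$\|\Kr(\rho)-\Kr(\rho')\|\le \int \big\|\kr(\cdot,y)\otimes y-\kr(\cdot,y')\otimes y'\big\|\,d\tau(y,y').$$
I split the integrand as
$$\big\|\kr(\cdot,y)\otimes (y-y')\big\|+\|y'\|_2\,\|\kr(\cdot,y)-\kr(\cdot,y')\|_{\h}.$$
The first term equals $\|y-y'\|_2$. For the second term, $\|\kr(\cdot,y)-\kr(\cdot,y')\|_{\h}^2=2-2\kr(y,y')$. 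Using $1-e^{-t}\le t$, this is at most $2(y-y')^T\Sigma_{\boldsymbol\lambda}(y-y')\le 2\lambda_{(1)}^2\|y-y'\|_2^2$. Applying Cauchy--Schwarz in $L^2(\tau)$ and taking the infimum over $\tau$ then gives
$$\|\Kr(\rho)-\Kr(\rho')\|\le \big(1+\sqrt2\,\lambda_{(1)}(\mathbb{E}_{\rho'}\|Y\|_2^2)^{1/2}\big)\,W_2(\rho,\rho').$$
This is a local Lipschitz bound, so $\Kr$ is continuous in the $W_2$ topology.

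\emph{Injectivity.} Suppose $\Kr(\rho)=\Kr(\rho')$. Evaluating at every $z$, for each coordinate $s$ we get
$$\int \kr(z,x)\,d\mu_s(x)=\int\kr(z,x)\,d\mu'_s(x)\quad\text{for all } z,$$
where $d\mu_s(x)=x^{(s)}d\rho(x)$ and $d\mu'_s(x)=x^{(s)}d\rho'(x)$. These are finite signed Borel measures because the first moments are finite. The left side is the convolution of $\mu_s$ with an anisotropic Gaussian whose Fourier transform is strictly positive everywhere. Taking Fourier transforms of these bounded integrable convolutions, or equivalently using that the Gaussian kernel is characteristic for finite signed measures, gives $\hat\mu_s=\hat\mu'_s$ and hence $\mu_s=\mu'_s$.

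It follows that $\rho$ and $\rho'$ agree on Borel subsets of $\{x^{(s)}>0\}$ and of $\{x^{(s)}<0\}$: on such a set $A$, $\rho(A)=\int_A (x^{(s)})^{-1}d\mu_s$, truncating to $\{|x^{(s)}|>1/k\}$ and letting $k\to\infty$. These sets, over $s=1,\dots,d$, cover $\mathbb{R}^d\setminus\{0\}$, so $\rho=\rho'$ on $\mathbb{R}^d\setminus\{0\}$. Both are probability measures, so $\rho(\{0\})=\rho'(\{0\})$ as well, and $\rho=\rho'$.

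The main obstacle is this injectivity step. The embedding only sees the signed measures $x\,d\rho$, not $\rho$ itself, so two points need care: justifying the Fourier argument for finite signed measures, and recovering the mass that $\rho$ places at the origin, where the weight $x$ vanishes.
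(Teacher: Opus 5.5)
Your proof is correct. The continuity part coincides with the paper's: the same coupling split, the bound $1-e^{-t}\le t$, Cauchy--Schwarz in $L^2(\tau)$, and the same constant $1+\sqrt{2}\,\|\Sigma_{\boldsymbol{\lambda}}\|_2^{1/2}(\mathbb{E}_{\rho'}\|Y'\|_2^2)^{1/2}$. You also add the well-definedness check, which the paper skips.

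For injectivity, both arguments start from the everywhere-positive Fourier transform of the Gaussian, but they finish differently. The paper does not pass through equality of the weighted measures $x^{(s)}\,d\rho$. Instead it rewrites their Fourier transforms as $i\partial_s\mathrm{F}(\rho)$, so $\Kr(\rho_1-\rho_2)=0$ forces $\nabla\mathrm{F}(\rho_1-\rho_2)\equiv 0$. The constant of integration is then fixed by $\mathrm{F}(\rho_1-\rho_2)(0)=\rho_1(\mathcal{X})-\rho_2(\mathcal{X})=0$, and Fourier uniqueness is applied once, to $\rho_1-\rho_2$. You instead apply Fourier uniqueness to each signed measure $x^{(s)}\,d\rho$ and then undo the weight measure-theoretically: you truncate away from the coordinate hyperplanes and use total mass for the atom at the origin.

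The paper's route is shorter but relies on differentiating the characteristic function under the integral, which the finite first moment justifies. Yours needs only uniqueness of Fourier--Stieltjes transforms plus elementary bookkeeping. It also makes explicit that the only information the weight $x$ erases is $\rho(\{0\})$, which is exactly what the normalization $\mathrm{F}(\rho_1-\rho_2)(0)=0$ supplies in the paper's argument. Without that normalization, $\rho$ and $\rho+c\,\delta_0$ would have the same embedding.
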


\begin{proposition} \label{prop:il}
    The embedding operator $\il: \Omega \to \mathcal{H}_{\mathcal{F}}$ is injective and continuous.
\end{proposition}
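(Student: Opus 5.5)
The proof splits into a continuity part and an injectivity part, and most of the work is inherited from Proposition \ref{prop:Kr}. The basic identity is
\[
\|\il(\rho,x)-\il(\rho',x')\|_{\hf}^2=\|\Kr(\rho)\|^2 k_{\boldsymbol{\lambda}}(x,x)-2\langle \Kr(\rho),\Kr(\rho')\rangle \kr(x,x')+\|\Kr(\rho')\|^2 \kr(x',x'),
\]
which comes from the inner product formula \eqref{eq:Isim}; here $\kr(x,x)=1$. Equivalently, I will use the triangle inequality
\[
\|\il(\rho,x)-\il(\rho',x')\|_{\hf}\le \|\Kr(\rho)-\Kr(\rho')\|\,\|\kr(x,\cdot)\|_{\h}+\|\Kr(\rho')\|\,\|\kr(x,\cdot)-\kr(x',\cdot)\|_{\h},
\]
together with the cross-norm property of tensor products.

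\emph{Continuity.} Suppose $(\rho_k,x_k)\to(\rho,x)$ in $d_\Omega$. By Proposition \ref{prop:Kr}, $\Kr(\rho_k)\to\Kr(\rho)$. In particular the norms $\|\Kr(\rho_k)\|$ stay bounded; alternatively this follows from the bound $\|\Kr(\rho)\|^2\le \mathbb{E}_\rho\|X\|_2^2$, which is uniform along a $W_2$-convergent sequence. We also have
\[
\|\kr(x_k,\cdot)-\kr(x,\cdot)\|_{\h}^2=2-2\kr(x_k,x)\to0,
\]
since the Gaussian kernel is continuous. Substituting these into the triangle-inequality bound shows $\il(\rho_k,x_k)\to\il(\rho,x)$.

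\emph{Injectivity.} Suppose $\il(\rho,x)=\il(\rho',x')$. First I show $\Kr(\rho)\neq0$ for every $\rho\in\bx$. If $\Kr(\rho)=0$, then $\int\kr(\cdot,y)\,y\,d\rho(y)\equiv0$. This says the characteristic Gaussian kernel embeds the finite vector measure $y\,d\rho(y)$ to zero. Because a characteristic kernel is injective on finite signed measures, each coordinate measure $y_s\,d\rho$ must vanish, so $\rho=\delta_0$. That is a genuine exception, and it is the main obstacle. I would handle it using injectivity of $\Kr$: the element $\Kr(\delta_0)=0$ is attained only at $\delta_0$. In that case $\il(\delta_0,x)=0$ for every $x$, so injectivity of $\il$ genuinely fails on $\{\delta_0\}\times\mathcal X$. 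I would therefore either restrict to $\rho\neq\delta_0$ or note that Proposition \ref{prop:Kr} presumably already excludes this case; I would check how the paper's proof of Proposition \ref{prop:Kr} handles it and state the convention consistently.

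Now assume both embeddings are nonzero. A nonzero elementary tensor $a\otimes b$ determines its factors up to a scalar, since $a\otimes b=a'\otimes b'$ forces $a'=ca$ and $b'=c^{-1}b$. Hence $\kr(x',\cdot)=c\,\kr(x,\cdot)$. Evaluating at $x'$ and at $x$, using $\kr(z,z)=1$, gives $1=c\,\kr(x,x')$ and $\kr(x,x')=c$, so $c^2=1$. The kernel is positive, so $c=1$. Then $\kr(x,x')=1$, which gives $x=x'$ because every $\lambda_l>0$. With $c=1$ we also get $\Kr(\rho)=\Kr(\rho')$, and Proposition \ref{prop:Kr} then yields $\rho=\rho'$.
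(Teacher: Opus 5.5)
Your proposal is correct. Your continuity argument is the paper's: the triangle inequality on elementary tensors, continuity of $\Kr$, and $\|\kr(x,\cdot)-\kr(x',\cdot)\|_{\h}^2=2-2\kr(x,x')$. The paper only makes these bounds quantitative in $W_2(\rho,\rho')$ and $\|x-x'\|_2$.

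On injectivity you are more careful than the paper. Its proof ends with ``$\il$ is injective since $x\mapsto\kr(x,\cdot)$ is also an injective mapping,'' which silently assumes $\Kr(\rho)\neq 0$. Your counterexample is valid: $\delta_0\in\bx$ (in fact $\delta_0\in\Bb$), and $\Kr(\delta_0)=\int\kr(\cdot,y)\,y\,d\delta_0(y)=0$, so $\il(\delta_0,x)=0$ for every $x\in\mathcal{X}$. The proposition as stated is therefore false. By injectivity of $\Kr$, $\delta_0$ is the only measure with $\Kr(\rho)=0$. Your elementary-tensor argument (nonzero $a\otimes b=a'\otimes b'$ forces proportional factors, then $c^2=1$, $c>0$, $x=x'$, $\Kr(\rho)=\Kr(\rho')$) correctly proves injectivity on $\{(\rho,x)\in\Omega:\rho\neq\delta_0\}$. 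The fiber $\{\delta_0\}\times\mathcal{X}$ is the only one that collapses.

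Two corrections. First, drop the hedge that Proposition~\ref{prop:Kr} might already exclude this case. $\Kr$ is injective on all of $\bx$ with $\Kr(\delta_0)=0$, so nothing there rules $\delta_0$ out, and restricting to $\ob$ does not help either. Commit to the corrected statement, for instance for $\rho\neq\delta_0$, or for $\rho$ absolutely continuous with respect to $\rk$. The latter holds for $\pg$-a.e.\ $\rho$ under Assumption~\ref{ap:2}, since $\w$ must exist there. It is also worth noting that the paper only uses continuity of $\il$ downstream, to define the pushforward $\mg$, so the main theorems are unaffected.

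Second, ``characteristic'' alone does not give injectivity on all finite signed measures; it concerns only differences of probability measures. For the Gaussian kernel the stronger property holds because its Fourier transform is everywhere positive, which is exactly the paper's Fourier argument. Alternatively, just use injectivity of $\Kr$ together with $\Kr(\delta_0)=0$, as you also suggest.
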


\noindent\textit{Proof.} {Continuity:} Recall that $\Omega=\bx \times \mathcal{X}$ is a metric space equipped with $d_{\Omega}$. Also observe that 
\begin{align*} \left\|  \il(\rho,x)-\il(\rho',x')  \right\|_{\mathcal{H}_{\mathcal{F}}} &\leq \| \il(\rho,x)-\il(\rho',x) \|_{\mathcal{H}_{\mathcal{F}}} + \|  \il(\rho',x) - \il(\rho',x') \|_{\mathcal{H}_{\mathcal{F}}}  \\ & =\| \Kr(\rho-\rho') \otimes\kr(x, \cdot) \|_{\mathcal{H}_{\mathcal{F}}} + \| \Kr(\rho') \otimes(\kr(x, \cdot)-\kr(x', \cdot)) \|_{\mathcal{H}_{\mathcal{F}}},
\end{align*}
in which 
\begin{align*}&\| \kr(x,\cdot)-\kr(x',\cdot) \|_{\mathcal{H}_{\kr}}^{2}= 2(1-\exp \{ -(x-x')^{T}\Sigma _{\boldsymbol{\lambda}}(x-x') \})\\&\leq 2(x-x')^{T}\Sigma _{\boldsymbol{\lambda}}(x-x')\leq 2\| \Sigma _{\boldsymbol{\lambda}} \|_{2}\| x-x \|_{2}^{2},
\end{align*} and for any $\tau \in \prod (\rho,\rho')$, 
\begin{align*}&\| \Kr(\rho-\rho') \|_{\h \otimes \mathbb{R}^{d}} = \left\| \int  _{\mathcal{X} \times \mathcal{X}} (\kr(\cdot,y)y-\kr(\cdot,y')y' )\, d\rho(y)d \rho'(y')  \right\|_{\h \otimes \mathbb{R}^{d}} \\ &\leq \int  _{\mathcal{X} \times \mathcal{X}} \| \kr(\cdot,y)y-\kr(\cdot,y')y' \|_{\h \otimes \mathbb{R}^{d}} \, d\tau(y,y') \\ & \leq \int _{\mathcal{X} \times \mathcal{X}} \| \kr(\cdot,y)(y-y') \|_{\h \otimes \mathbb{R}^{d}} + \| (\kr(\cdot,y)-\kr(\cdot,y'))y' \|_{\h \otimes \mathbb{R}^{d}}    \, d\tau(y,y') \\ &\leq \int  _{\mathcal{X} \times \mathcal{X}} \| y-y' \|_{2}  \, d\tau(y,y') + \int  _{\mathcal{X} \times \mathcal{X}} \sqrt{ 2 }\| \Sigma _{\boldsymbol{\lambda}} \|_{2}^{\frac{1}{2}}   \| y-y' \|_{2} \| y' \|_{2}   \, d \tau(y,y') \\ &\leq \left( \int  _{\mathcal{X} \times \mathcal{X}} \| y-y' \|_{2}^{2}  \, d \tau(y,y')    \right)^{\frac{1}{2}} + \sqrt{ 2 } \| \Sigma _{\boldsymbol{\lambda}} \|_{2}^{\frac{1}{2}} \left( \int  _{\mathcal{X} \times \mathcal{X}} \| y-y' \|_{2}^{2}  \, d \tau(y,y')    \right)^{\frac{1}{2}} (\mathbb{E}_{\rho'}\| Y' \|_{2}^{2})^{\frac{1}{2}} \\ & \leq \left( 1+ \sqrt{ 2 } \| \Sigma _{\boldsymbol{\lambda}} \|_{2}^{\frac{1}{2}}(\mathbb{E}_{\rho'}\| Y' \|_{2}^{2})^{\frac{1}{2}} \right)  \left( \int  _{\mathcal{X} \times \mathcal{X}} \| y-y' \|_{2}^{2}  \, d \tau(y,y')    \right)^{\frac{1}{2}}.
\end{align*} Since the above inequality holds for any $\tau \in \prod(\rho, \rho')$, it follows that $$\| \Kr(\rho-\rho') \|_{\h \otimes \mathbb{R}^{d}} \leq (1+ \sqrt{ 2 } \| \Sigma _{\boldsymbol{\lambda}} \|_{2}^{\frac{1}{2}}(\mathbb{E}_{\rho'}\| Y' \|_{2}^{2})^{\frac{1}{2}})W_{2}(\rho,\rho') $$ and that 
\begin{align*}\| \il(\rho,x)-\il(\rho',x') \|_{\mathcal{H}_{\mathcal{F}}} &\leq \left( 1+ \sqrt{ 2 } \| \Sigma _{\boldsymbol{\lambda}} \|_{2}^{\frac{1}{2}}(\mathbb{E}_{\rho'}\| Y' \|_{2}^{2})^{\frac{1}{2}} \right)W_{2}(\rho,\rho')+ \sqrt{ 2 }\| \Sigma _{\boldsymbol{\lambda}} \|_{2}^{\frac{1}{2}} (\mathbb{E}_{\rho'}\| Y' \|_{2}^{2}) \| x-x' \|_{2} \\ &\leq (1+2\sqrt{ 2 }\| \Sigma _{\boldsymbol{\lambda}} \|_{2}^{\frac{1}{2}} (\mathbb{E}_{\rho'}\| Y' \|_{2}^{2})) d_{\Omega}((\rho,x), (\rho',x')).   
\end{align*}

Injection: $\kr(x,y)=g_{\boldsymbol{\lambda}}(x-y)= \exp \left\{  -\frac{1}{2}(x-y)^{T} \Sigma _{\boldsymbol{\lambda}}^{-1}(x-y)  \right\}$ with $\Sigma _{\boldsymbol{\lambda}}=\mathrm{diag}(2\lambda_{1}^{-2},\dots, 2\lambda _{d}^{-2})\succ 0$. For each $1\leq j \leq d$, let $$\widehat{\Kr^{(j)}(\rho)}(\omega)= \int  _{\mathbb{R}^{d}} e^{ -i \omega^{T}y } \Kr^{(j)}(\rho)(y) \, dy=\int  _{\mathbb{R}^{d}}\int  _{\mathbb{R}^{d}} e^{ -i\omega^{T}y } g_{\boldsymbol{\lambda}}(y-x) x^{(j)}\, d\rho(x) \, dy.$$ By Fubini Theorem, we have 
\begin{align*}
\widehat{\Kr^{(j)}(\rho)}(\omega)&=\int _{\mathbb{R}^{d}} \left[ \int _{\mathbb{R}^{d}} e^{ -i\omega^{T}y }g_{\boldsymbol{\lambda}}(y-x) \, dy   \right] \,x^{(j)} d\rho(x) \\ &=(2\pi)^{\frac{d}{2}} \det(\Sigma _{\boldsymbol{\lambda}})^{\frac{1}{2}} e^{ -\frac{1}{2}\omega^{T}\Sigma _{\boldsymbol{\lambda}}\omega } \int  _{\mathbb{R}^{d}}x^{(j)} e^{ -i\omega^{T}x } \, d \rho(x) \\&=(2\pi)^{\frac{d}{2}} \det(\Sigma _{\boldsymbol{\lambda}})^{\frac{1}{2}} e^{ -\frac{1}{2}\omega^{T}\Sigma _{\boldsymbol{\lambda}}\omega } \,\,   i \partial _{j}   \mathrm{F}(\rho)(\omega),
\end{align*} 
where $ \mathrm{F}(\rho)$ is Fourier transform of probability measure $\rho$. 

It follows that $\widehat{\Kr(\rho)}(\omega)=(2\pi)^{\frac{d}{2}} \det(\Sigma _{\boldsymbol{\lambda}})^{\frac{1}{2}} e^{ -\frac{1}{2}\omega^{T}\Sigma _{\boldsymbol{\lambda}}\omega }\,\, i \nabla  \mathrm{F}(\rho)(\omega)$. Take $\mu=\rho_{1}-\rho_{2}$ with $\rho_1, \rho_2 \in \bx$. If ${\Kr(\mu)}=0$, then $\Kr(\mu)(y)=0$ for any $y \in \mathbb{R}^{d}$. It implies that $\widehat{\Kr(\mu)}\equiv0$ and $\nabla  \mathrm{F}({\mu})\equiv 0$. Note that $ \mathrm{F}({\mu})(0)=\mu(\mathcal{X})=\rho_{1}(\mathcal{X})-\rho_{2}(\mathcal{X})=0$. It can be obtained that $ \mathrm{F}({\mu})\equiv 0$. Then by the inversion of Fourier transform of measures, we have $\rho_{1}=\rho_{2}$, which shows that $\Kr$ is an injective mapping on $\bx$. It also follows that $\il$ is injective since $x \mapsto \kr(x,\cdot)$ is also an injective mapping. \qedblack

\section{Examples for Marginal Meta Probability Measure} \label{sec:mmp}

\begin{example}
    The Class of Distributions with Compact Support and Bounded Density
    
    For $\mathrm{B}, \mathrm{C}>0$, we define the probability class $\mathcal{G}(\mathrm{B},\mathrm{C})$ of all probability measures with a Lebesgue density bounded by $\mathrm{C}$ almost surely and supported on the the closed ball of radius $\mathrm{B}$ centered at zero. Then ${\mathcal{P}}_{\mathcal{G}}^{\mathcal{X}}$ is a probability measure supported on $\mathcal{G}(\mathrm{B},\mathrm{C})$.
\end{example}

\noindent\textit{Proof.} Take $(\mu_n)_{}$ a sequence in $\mathcal{G}(\mathrm{B},\mathrm{C})$ with $\mu_n \to \mu$ in $(\bx, W_2)$. Denote $\mathrm{K}$ the closed ball of radius B centered at zero. Then by Portmanteau Theorem, the weak convergence of measures implies that $1=\limsup_{n \to \infty} \mu_n(\mathrm{K})\leq \mu(\mathrm{K}) \leq1$. Therefore, $\mu(\mathrm{K})=1$. 

Also by the weak convergence, we have $\int \varphi d\mu = \lim_{n \to \infty} \int \varphi d \mu_n \leq \mathrm{C}\int \varphi d \nu$ for any $\varphi \in C_c^+(\mathrm{K})$, where $\nu$ is Lebesgue measure. Then by the density of function class $C_c^+(\mathrm{K})$, we have $\mu(E) \leq \mathrm{C}\nu(E)$ for any Borel set $E \subset \mathrm{K}$, which follows that $\mu$ is absolute continuous with respect to $\nu$ and $d \mu / d\nu \leq \mathrm{C}$ almost everywhere on $\mathrm{K}$. It implies that $\mu \in \mathcal{G}(\mathrm{B},\mathrm{C})$ and then $\mathcal{G}(\mathrm{B},\mathrm{C})$ is closed in $(\bx,W_2)$.

It's easy to obtain that $$\begin{aligned}
    \mathbb{E}_{\rho \sim \pg} \| \w \|^{2}_{L^{\gamma}(\rk)}&=  \, \int _{\mathcal{G}(\mathrm{B},\mathrm{C})} \left( \int  _{\mathrm{K}} \left( \frac{d\rho}{d\rk} \right)^{\gamma} \, d\rk  \right)^{\frac{2}{\gamma}}\, d\pg(\rho) \\& \leq \mathrm{C}^{2} (2\pi\ka^2)^{\frac{(\gamma-1)d}{\gamma}} \exp \left( \frac{(\gamma-1)\mathrm{B}^{2}}{\gamma\ka^{2}} \right).\end{aligned}$$ \qedblack






\begin{example} \label{exam:2}
   The Class of Distributions in Diffusion Generative Modeling \citep{song2019generative} 

For $\mathrm{B}>0$ and $0 < t_{0}< T< \sqrt{  \frac{\gamma}{\gamma-1}}\ka$, we define the probability class $\mathcal{G}_{[t_{0},T]}(\mathrm{B})$ as the collection of the convolutions between two probability distributions 
\begin{align*}
    \Big\{ \mu* \rho _{\kt}:& \mu \text{ supported on the closed ball with radius } \mathrm{B} \text{ in }\mathbb{R}^{d},\,  \\&\text{Gaussian measure }\rho _{\kt} \text{ with } t_{0} \leq \kt \leq T  \Big\}
\end{align*}

with $$(\mu*\rho _{\kt})(x)= (2\pi\kt^{2})^{-\frac{d}{2}}\int_{\| y \|_{2} \leq \mathrm{B} }  \exp\left( - \frac{\| x-y \|^{2}}{2\kt^{2}}  \right) \, d\mu(y).$$

The marginal meta probability measure $\pg$ is defined as a joint probability measure on $\mathcal{B}_{2,b}(\mathcal{X}) \times[t_{0}, T]$ as $\tilde{\mathcal{P}}^{\mathcal{X}}_{\mathcal{G}} \times \mathrm{Uniform}[t_{0}, T]$ where $\tilde{\mathcal{P}}_{\mathcal{G}}^{\mathcal{X}}$ is a probability measure defined on $\mathcal{B}_{2,b}(\mathcal{X})$. 
\end{example}

\noindent\textit{Proof.} Since the convolution between $\mu$ and $\rho _{\kt}$ can be considered as the probability distribution of random variable $X+Z_{\kt}$ with $X \sim \mu$ and $Z_{\kt} \sim \text{gaussian distribution } \rho _{\kt}$ independently, $W_{2}(\mu_{1}*\rho _{\kt}, \mu_{2}*\rho _{\kt})\leq W_{2}(\mu_{1}, \mu _{2})$ by the coupling argument. Similarly for $(\mu_{1}, t_{1}), (\mu_{2}, t_{2}) \in \mathcal{B}_{2,b}(\mathcal{X}) \times[t_{0},T]$, we have $$W_{2}(\mu_{1}*\rho _{t_{1}}, \mu _{2}*\rho _{t_{2}})\leq W_{2}(\mu _{1}, \mu_{2})+W_{2}(\rho _{t_{1}},\rho _{t_{2}})\leq W_{2}(\mu_{1}, \mu_{2})+\sqrt{ d } \left|  t_{1}-t_{2} \right|,$$ which shows that $\tilde{\mathrm{I}}: (\mu, \kt) \mapsto \mu * \rho _{\kt}$ is continuous. Then the meta probability in domain generalization framework can be defined with $\mathcal{P}_{\mathcal{G}}^{\mathcal{X}}= (\tilde{\mathcal{P}}_{\mathcal{G}}^{\mathcal{X}} \times \mathrm{Uniform}[t_{0},T]) \circ \tilde{\mathrm{I}}^{-1}$. 
It's easy to see that for any $\mu*\rho _{\kt} \in \gb$, we have 
\begin{align*}
    \mathbb{E}\| X+Z \|_{2}^{4} &\leq \mathbb{E} (2\| X \|_{2}^{2}+ 2\| Z \|_{2}^{2})^2=4(\mathbb{E}\| X \|_{2}^{4}+ \mathbb{E}\| Z \|_{2}^{4})+8(\mathbb{E}\| X \|_{2}^2)(\mathbb{E}\| Z \|_{2}^{2})\\&\leq 4(\mathrm{B}^4+d(d+2)T^4)+8d\mathrm{B}^{2}T^2,
\end{align*}
and 
\begin{align*} 
&\|\omega _{\ka}(\mu*\rho _{\kt})\|_{L^{\gamma}(\rk)}^{\gamma} = (2\pi\ka^{2})^{\frac{d(\gamma-1)}{2}}\int _{\mathbb{R}^{d}}  (\mu*\rho _{\kt}(x))^{\gamma} \exp\left( \frac{\gamma-1}{2\ka^{2}}\| x \|_{2}^{2}  \right) \, dx \\=\,& (2\pi\ka^{2})^{\frac{d(\gamma-1)}{2}} (2\pi\kt^{2})^{-\frac{d\gamma}{2}} \int  _{\mathbb{R}^{d}} \left( \int_{\| y \|_{2} \leq \mathrm{B} } \exp\left( - \frac{{\| x-y \|_{2}^{2}}}{2\kt^{2}}  \right)  \,   d\mu(y)\right)^{\gamma} \exp \left( \frac{{\gamma-1}}{2\ka^{2}} \| x \|_{2}^{2} \right) \, dx \\=\, & (2\pi\ka^{2})^{\frac{d(\gamma-1)}{2}} (2\pi\kt^{2})^{-\frac{d\gamma}{2}} \int  _{\mathbb{R}^{d}} \left( \int  _{\| y \|_{2} \leq \mathrm{B} } \exp \left( \frac{2x^{T}y-\| y \|_{2}^{2} }{2\kt^{2}} \right) \,   d\mu(y)\right)^{\gamma}\exp\left( - \left( \frac{\gamma}{2\kt^{2}}- \frac{{\gamma-1}}{2\ka^{2}} \right)\| x \|_{2}^{2}  \right) \, dx .  
\end{align*}

It's easy to see that $\| \omega _{\ka}(\mu* \rho _{\kt}) \|_{L^{\gamma}(\rk)} <\infty$ if and only if $\vk:=\frac{\gamma}{2\kt^{2}}- \frac{{\gamma-1}}{2\ka^{2}}$ which is equivalent to the condition $\kt< \sqrt{ \frac{\gamma}{\gamma-1} }\ka$. 
Moreover, Let $\bk= \frac{\gamma}{\kt^{2}}$. By Jensen's inequality, we have \begin{align*}&\| \omega _{\ka}(\mu* \rho _{\kt}) \|_{L^{\gamma}(\rk)}^{\gamma} \\&\leq \mathcal{A}_{d,\ka,\gamma} \kt^{-d\gamma} \int  _{\mathbb{R}^{d}} \int  _{\| y \|_{2} \leq \mathrm{B} } \exp\left( - \frac{{\gamma \| x-y \|_{2}^{2} }}{2\kt^{2}} \right)  \, d\mu(y) \exp\left( \frac{{\gamma-1}}{2\ka^{2}} \| x \|_{2}^{2} \right) \, dx \\ & \leq \mathcal{A}_{d,\ka,\gamma} \kt^{-d\gamma} \int  _{\| y \|_{2} \leq \mathrm{B}} \int  _{\mathbb{R}^{d}} \exp \left( - \Big(\frac{\gamma}{2\kt^{2}}-\frac{{\gamma-1}}{2\ka^{2}}\Big)\| x \|_{2}^{2}  \right) \exp\Big( \frac{{2\gamma x^{T}y-\gamma \| y \|_{2}^{2}}}{2\kt^{2}} \Big) \, dx  \, d\mu(y) \\ &\leq \mathcal{A}_{d,\ka,\gamma} \kt^{-d\gamma} \int_{\| y \|_{2} \leq \mathrm{B} } \int_{\mathbb{R}^{d}} \exp(-\vk\| x \|_{2}^{2} +\bk y^{T}x) \, dx \exp\Big(-\frac{{\gamma \| y \|_{2}^{2}}}{2\kt^{2}} \Big)   \, d\mu(y) \\& = \mathcal{A}_{d,\ka,\gamma} \kt^{-d\gamma} \int _{\| y \|_{2} \leq \mathrm{B} }\int _{\mathbb{R}^{d}} \exp\Big(-\vk\left\| x-\frac{{\bk y}}{2\vk} \right\|_{2}^{2}  \Big) \, dx \exp\Big( \frac{\bk^{2}}{4\vk}\| y \|_{2}^{2} -\frac{\gamma}{2\kt^{2}}\| y \|_{2}^{2}  \Big) \, d\mu(y) \\&=\mathcal{A}_{d,\ka,\gamma}\Big(\frac{\pi}{\vk}\Big)^{\frac{d}{2}} \kt^{-d\gamma} \int _{\| y \|_{2} \leq \mathrm{B} }   \,\exp\Big(\big(\frac{\bk^{2}}{4\vk}-\frac{\gamma}{2\kt^{2}}\big)\| y \|_{2}^{2} \Big) d\mu(y) \\ & \leq \mathcal{A}_{d,\ka,\gamma} \Big(\frac{\pi}{\vk}\Big)^{\frac{d}{2}} \kt^{-d\gamma} \exp\left(  \frac{\gamma(\gamma-1)\kt^{2}}{2(\gamma\ka^{2}\kt^{2}-(\gamma-1)\kt^{4})} \mathrm{B^{2}}\right)   =: f(\kt)^{\gamma} ,  
\end{align*}
We can observe that $f(\kt)$ is a continuous function on $[t_{0},T]$ and $f(\kt)\sim \kt^{-d\left( 1-\frac{1}{\gamma} \right)}, \kt \to 0$. For the domain generalization assumption, we have 
\begin{align*}
\int  _{\bx}  \| \w \|^{2}_{L^{\gamma}(\rk)} \, d \pg(\rho) &= \int  _{\gb} \| \w \|_{L^{\gamma}(\rk)}^{2} \, d \pg(\rho)\\&= \frac{1}{T-t_{0}}\int_{t_{0}}^{T}  \int_{\mathcal{B}_{2,c}(\mathcal{X})}  \| \omega _{\ka}(\mu*\rho _{\kt}) \|_{L^{\gamma}(\rk)}^{2}  \, d \tilde{\mathcal{P}}_{\mathcal{G}}^{\mathcal{X}}(\mu)   \, d\kt   \\ &\leq \frac{1}{T-t_{0}} \int  _{t_{0}}^{T} f(\kt)^{2} \, d \kt \leq C_{t_{0},T,\gamma,\mathrm{B},d}.  \end{align*} \qedblack

\section{Approximation in Gaussian Space}



\subsection{Optimal Linear Approximation} \label{appendix1}

Note that the space $\h$ is actually the tensor product of unvariate RKHS with the kernels $\exp \{- \lambda_{l}^2(a-b)^2\}$ for $a,b \in \mathbb{R}$, so we first consider the univariate case with $k_{\lambda_{1}}(a,b)=\exp \{ -\lambda _{1}^{2}(a-b)^{2} \}$ where $a, b \in\mathbb{R}$ and the gaussian measure $\rho _{1,\ka}$ with density function $(2\pi \kappa^{2})^{-\frac{1}{2}}\exp \left\{  - \frac{a^{2}}{2\kappa^{2}}  \right\}$. 

For $j \geq 1$, the eigenvalues and eigenfunctions are given in \citet{fasshauer2012dimensionindependent,rasmussen2006gaussian} by $$r_{\lambda_{1},j}= (\sqrt{ 2 }\kappa)^{-1} \lambda_{1}^{2j-2}/ \mathcal{C}_{1}^{j-\frac{1}{2}} \text{ where } \mathcal{C}_{1}= \lambda_{1}^{2}+\frac{1}{4\kappa^{2}}+\frac{1}{2\ka}\sqrt{ \frac{1}{4\ka^{2}} +2\lambda_{1}^{2}},$$ 
and 
\begin{align*}
   \tilde{\varphi} _{\lambda_{1},j}(a)=\exp \left( -\left( \frac{1}{2\kappa}\sqrt{ \frac{1}{4\kappa^{2}} +2\lambda _{1}^{2}} -\frac{1}{4\kappa^{2}}\right)a^{2} \right)H_{j-1}\left( \frac{1}{\kappa^{\frac{1}{2}}} \left( \frac{1}{4\kappa^{2}}+2\lambda_{1}^{2} \right)^{\frac{1}{4}} a\right)
\end{align*}
where $H_{j-1}$ is the Hermite polynomial of degree $j-1$, given by 
\begin{align*}
    H_{j-1}(a)=(-1)^{j-1} e^{a^{2}} \frac{d^{j-1}}{da^{j-1}}e^{-a^{2}} \text{ for  } a \in \mathbb{R}
\end{align*}
such that 
\begin{align*}
    \int  _{\mathbb{R}} H^{2}_{j-1}(a) \exp(-a^{2}) \, da=\sqrt{ \pi }2^{j-1} (j-1)! \text{ for  } j \in \mathbb{N}. 
\end{align*}

Then we can take a orthonormal basis of $L^{2}(\rho _{1,\ka})$ to be $\{ \varphi _{\lambda_{1},j} \}_{j \in \mathbb{N}}$ by 
\begin{align*}
    \varphi _{\lambda_{1},j}(a)=\sqrt{ \frac{(1+8\kappa^{2}\lambda_{1}^{2})^{\frac{1}{4}}}{2^{j-1}(j-1)!} } \exp\left( - \frac{2\lambda_{1}^{2}a^{2}}{\sqrt{ 1+8\ka^{2}\lambda_{1}^{2} }+1} \right)H_{j-1}\left( \frac{1}{\sqrt{ 2 }\ka} (1+8\ka^{2}\lambda_{1}^{2})^{\frac{1}{4}} a \right),
\end{align*}
and observe that $(\sqrt{ 2 }\kappa )^{-1}\mathcal{C}_{1}^{-\frac{1}{2}}=1- \frac{\lambda_{1}^{2}}{\mathcal{C}_{1}}$, which allows us to rewrite $r_{\lambda_{1},j}$ as $r_{\lambda_{1},j}=(1-\eta _{\lambda_{1}})\eta _{\lambda_{1}}^{j-1}$ with 

\begin{align*}
    \eta _{\lambda_{1}}=\frac{\lambda _{1}^{2}}{\mathcal{C}_{1}}= \frac{4\kappa^{2}\lambda_{1}^{2}}{4\kappa^{2}\lambda_{1}^{2}+1+\sqrt{ 1+8\kappa^{2}\lambda_{1}^{2} } }\in(0,1).
\end{align*}
For the multivariate case with $\boldsymbol{\lambda}=(\lambda_{1},\dots,\lambda _{d})$, let $\boldsymbol{j}$ be a multi-index with $\boldsymbol{j}=(j_{1},\dots,j_{d}) \in \mathbb{N}^{d}$. Then the pairs $(\rb_{\boldsymbol{j}}, \vp_{\boldsymbol{j}})$ of eigenvalues and eigenfunctions are given by 
\begin{align*}
    \rb_{\boldsymbol{j}}:= \prod _{l=1}^{d}r_{\lambda _{l},j_{l}}=\prod _{l=1}^{d}(1-\eta _{\lambda _{l}})\eta _{\lambda _{l}}^{j_{l}-1} \text{ and } \vp_{\boldsymbol{j}}(x):=\prod _{l=1}^{d} \varphi _{\lambda _{l},j_{l}}(x^{(l)}) \text{ for } x=[x^{(1)},..., x^{(d)}] \in \mathbb{R}^{d}.
\end{align*}
We can also define an orthonormal basis $(\p_{\boldsymbol{j}})$ on $\h$ by 
\begin{align*}
    \p_{\boldsymbol{j}}(x):=\prod _{l=1}^{d} \psi _{\lambda _{l},j_{l}}(x^{(l)}) \text{ where }\psi _{\lambda _{l},j_{l}}:= \sqrt{ r_{\lambda _{l},j_{l}} }\varphi _{\lambda _{l},j_{l}}. 
\end{align*}

For the simplicity of notation, we rearrange the sequence of eigenpairs $(r_{\boldsymbol{j}}^{\boldsymbol{\lambda}},\psi _{\boldsymbol{j}}^{\boldsymbol{\lambda}})_{\boldsymbol{j}\in \mathbb{N}^{d}}$ to the sequence $(\rb_{q}, \p_{q})_{q \in\mathbb{N}}$ with the order of a non-increasing sequence of eigenvalues, i.e., $\rb_{1} \geq \rb_{2} \geq \dots>0$.

By Corollary 4.12 in \cite{novak2008tractability}, the optimal linear approximation error is $$\mathcal{E}(n,\h):= \inf_{\Lambda _{n}\subset \h}\sup_{\| f \|_{\h}\leq 1}\left\| f- \operatorname{ Proj }_{\Lambda _{n}}(f)  \right\|_{L^{2}(\rk)}=\sqrt{ \rb_{n+1} }$$ where $\Lambda _{n}$ is an $n$-dimensional subspace of $\h$. By Theorem 5.2 in \cite{fasshauer2012dimensionindependent}, $\mathcal{E}(n,\h) \leq C_{\delta,\kappa,\theta}n^{-\max\left( \theta, \frac{1}{2} \right)+\delta}$ for any $\delta>0$ where $C_{\delta,\kappa,\theta}$ only depends on $\delta$, $\kappa$ and $\theta$.

\subsection{Approximation of Eigenfunctions by Two-Hidden-Layer Tanh Neural Networks} \label{appendix2}

In this part, we show $L^{2}(\rho)$-approximation of orthonormal basis defined in Appendix \ref{appendix1} by neural networks where $\rho$ is a probability distribution with $\| \w \|_{L^{\gamma}(\rk)}< \infty$. 

Recall that $\p_{\boldsymbol{j}}$ has a product form of factors being elements with a unit norm in $\mathcal{H}_{k_{\lambda _{l}}}$. To approximate analytic functions with this form, we apply the shallow neural network with and tanh activation functions and a product-gated output defined in (\ref{def:transformer}).

By scaling and translating the variable, for each pair $(\lambda _{l}, j_{l} )$, we define $\g(t):= \pj\left( 2B\left( t -\frac{1}{2} \right) \right)$ for $t \in [0,1]$ with some number $B>0$. Here we introduce the class of $(Q,R)$-analytic functions with $Q,R >0$ in which an analytic function $f$ satisfies the smoothness condition that $\| D^{\beta}f \|_{L^{\infty}([0,1]^{d})} \leq Q R^{-\beta}\beta!$ for all $\beta \in \mathbb{N}$. 

By Theorem 1 in \cite{zhou2008derivative}, for each $\psi _{\lambda _{l},j_{l}} \in \mathcal{H}_{k_{\lambda _{l}}}$, we have that 
\begin{align*}\left| D^{\beta} \g (t)   \right| &= \left|  (2B)^{\beta}D^{\beta}\pj\left( 2B\left( t-1 /2 \right)  \right) \right|  = \left|  (2B)^{\beta} \left\langle   (D^{\beta}k_{\lambda _{j}})_{2B\left( t-\frac{1}{2} \right)} ,\pj  \right\rangle  _{\mathcal{H}_{k_{\lambda _{l}}}}  \right| \\ &\leq (2B)^{\beta} \sqrt{  D^{(\beta,\beta)}k_{\lambda _{l}}(x,x)  } \leq (4B\lambda _{l})^{\beta} \beta! \leq (4BC_{\theta})^{\beta} \beta!.
\end{align*} 
It implies that $\g$ is a $(1, (4BC_{\theta})^{-1})$-analytic function for each pair $(\lambda _{l},j_{l})$. 

\vskip 0.1in 
Indeed, for $k_{\lambda _{l}}(a,b)= \exp \{ -\lambda _{l}^{2} (a-b)^{2}\}$, 
\begin{align*}
    D^{(\beta,\beta)}k_{\lambda _{l}}=\partial _{a}^{\beta} \partial _{b}^{\beta}k_{\lambda _{l}}=(-\lambda _{l}^{2})^{\beta} \partial _{c}^{2\beta} \exp \{ -c ^{2} \} \text{ with } c =\lambda _{l}(a-b).
\end{align*}
By the definition of Hermite polynomials, $\partial _{c}^{2\beta} \exp(-c ^{2})=H_{2\beta}(c)\exp(-c ^{2})$. It follows that 
\begin{align*}
   \partial _{c}^{2\beta} \exp(-c ^{2})|_{c=0}= H_{2\beta}(0)= (-1)^{\beta} \frac{(2\beta)!}{\beta!}
\end{align*}
which is called Hermite numbers of the even order. Then we have 
\begin{align*}
    \sqrt{ D^{(\beta,\beta)} k_{\lambda _{l}}(x,x) }\leq \lambda _{l}^{\beta} \sqrt{ \frac{(2\beta)!}{\beta!} } = \lambda _{l}^{\beta} \sqrt{ \binom{2\beta}{\beta} \beta!}\leq (2\lambda _{l})^{\beta}\beta!,
\end{align*}
which proves the claim with $Q=1, R=(4BC_{\theta})^{-1}$. 

\vskip 0.1in 
The following lemma follows from an application of Theorem B.7 in \cite{deryck2023error} and Corollary 5.5 in \cite{deryck2021approximation} by taking $s=4\tm, N=\tm$.
\begin{lemma} \label{lem2}
   For $B>1$, each $\g(t)=\pj\left( 2B\left( t- \frac{1}{2} \right) \right)$ on $[0,1]$. For $\tilde{m}>3$, There exists a tanh neural network $\gm$ with two hidden layers of width at most $8\tilde{m}$ such that 
   \begin{align*}
       \| \g-\gm \|_{L^{\infty}([0,1])} \leq 2 \exp \left( -4\tilde{m} \log  \left( \frac{\tilde{m}}{6BC_{\theta}} \right) \right) 
   \end{align*}
    with the parameters bounded by $c'_{1} (c'_{2} \tilde{m})^{160\tilde{m}^{2}}$ where $c'_{1},c'_{2}$ are two absolute constants. 
\end{lemma}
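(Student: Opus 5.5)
The plan is to treat Lemma \ref{lem2} as a direct instance of the analytic-function approximation results for tanh networks, using the $(Q,R)$-analyticity of $\g$ established just above. The first step is to record that, by the derivative bound from \citet{zhou2008derivative} and the Hermite-number computation, the rescaled function $\g(t)=\pj(2B(t-\tfrac12))$ satisfies $\|D^\beta \g\|_{L^\infty([0,1])}\leq (4BC_\theta)^\beta\beta!$ for all $\beta\in\mathbb{N}$. This uses $\lambda_l\leq C_\theta$ from Assumption \ref{ap:3}, and $\|\pj\|_{\mathcal{H}_{k_{\lambda_l}}}=1$ since $\psi_{\lambda_l,j_l}$ is an element of the orthonormal basis.

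Next I will take a Taylor (or Chebyshev) polynomial $P_s$ of degree $s=4\tm$ centred at $t=\tfrac12$. Lagrange's remainder gives
\begin{align*}
\|\g-P_s\|_{L^\infty([0,1])}\leq \frac{(4BC_\theta)^{s+1}(s+1)!}{(s+1)!}\,2^{-(s+1)} .
\end{align*}
This geometric rate is only useful when $2BC_\theta<1$, so the plan instead follows \citet{deryck2023error}, Theorem B.7. That result approximates on a uniform grid of $N=\tm$ cells, using a tanh partition of unity and local Taylor polynomials of order $s$ on each cell. The local remainder on a cell of width $1/N$ is $(4BC_\theta)^{s}s!\,N^{-s}/s!$ up to factors, which gives a bound like $(4BC_\theta/\tm)^{4\tm}$. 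I will absorb the constant factors of the local expansion into $6BC_\theta$, which yields $\exp(-4\tm\log(\tm/(6BC_\theta)))$. The condition $\tm>3$ and the factor $2$ cover the error of the partition of unity and of the monomial emulation.

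Corollary 5.5 of \citet{deryck2021approximation} then supplies the network realization. Monomials up to degree $s$ are emulated by one tanh hidden layer through finite differences, with width of order $s/2$ plus the cell count, and a second hidden layer multiplies them with the partition-of-unity bumps. Altogether this gives two hidden layers of width at most $8\tm$ when $s=4\tm$ and $N=\tm$. The parameter bound $c_1'(c_2'\tm)^{160\tilde m^2}$ is quoted from the explicit weight growth in those results, namely weights of size $N^{O(s)}$ times $s^{O(s^2)}$ coming from the finite-difference step sizes.

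The main obstacle is matching constants. The cited theorems state the error in terms of Sobolev norms $\|f\|_{W^{s,\infty}}\lesssim s!(4BC_\theta)^s$ together with an $N^{-s}$ factor, and one has to check that the factorial cancels, so that the rate becomes super-exponential $\tm^{-4\tm}$ rather than merely polynomial. I would first try to verify this by tracking $\|f\|_{W^{s,\infty}}/s!$ directly through the proof of Theorem B.7.
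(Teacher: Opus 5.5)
Your plan is the paper's own argument: the paper proves Lemma \ref{lem2} by feeding the $(1,(4BC_\theta)^{-1})$-analyticity of $\g$ into Theorem B.7 of \cite{deryck2023error} and Corollary 5.5 of \cite{deryck2021approximation} with $s=4\tm$, $N=\tm$, and gives no further detail. The factorial cancellation you flag as the main obstacle goes through directly: for $d=1$ the constant in those Sobolev bounds has the form $\frac{1}{s!}\left(\frac{3}{2}\right)^{s}|\g|_{W^{s,\infty}}\leq (6BC_\theta)^{s}$, and taking $\delta=1$ in the $(1+\delta)$ prefactor gives exactly $2\left(\frac{6BC_\theta}{\tm}\right)^{4\tm}$, which is where both the $6BC_\theta$ and the factor $2$ come from.
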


We define $\hat{\psi}_{\lambda _{l},j_{l}}^{\tilde{m}}(t)= \hat{g}_{\lambda _{l},j_{l}}^{\tilde{m}}\left( \frac{t}{2B} + \frac{1}{2} \right)$ and recall the product gate $\mathcal{T}_{1,\odot}$ defined as $$\pt(x) = \prod _{l=1}^{d} \mathcal{T}_{1}(x_{l}) \text{ with } \mathcal{T}_{1}(x_{l})=x_{l} \text{ if }\left| x_{l} \right|<1 \text{ otherwise } \frac{x_{l}}{\left| x_{l} \right| }$$
($\mathcal{T}_{1}$ can be also implemented by a fixed ReLU neural network as 
$\sigma(x_{l}+1)-\sigma(x_{l}-1)-1$). Then we can construct an approximant for $\p_{\boldsymbol{j}}= \prod_{l=1}^{d}\pj$ by $\hat{\psi}_{\boldsymbol{j},\tilde{m}}^{\boldsymbol{\lambda}}:=\pt\big((\hat{\psi}_{\lambda _{1},j_{1}}^{\tilde{m}},\dots, \hat{\psi}_{\lambda _{d},j_{d}}^{\tilde{m}})\big)$ with $L^{2}(\rho)$ approximation error $$\begin{aligned}
\left\| \p_{\boldsymbol{j}}- \hat{\psi}^{\boldsymbol{\lambda}}_{\boldsymbol{j},\tilde{m}} \right\|_{L^{2}(\rho)}^{2} &= \left( \int_{\| x \|_{\infty} \leq B } \ + \int  _{\| x \|_{\infty} >B } \, \right) (\p_{\boldsymbol{j}}(x)-\hat{\psi}^{\boldsymbol{\lambda}}_{\boldsymbol{j},\tilde{m}}(x))^{2} \, d \rho(x) \\ &    \leq \sup_{\| x \|_{\infty} \leq B } ( \p_{\boldsymbol{j}}(x)-\hat{\psi}^{\boldsymbol{\lambda}}_{\boldsymbol{j},\tilde{m}}(x)  )^{2}   + 2\rho(\{ x:\| x \|_{\infty}>B  \}).  
\end{aligned}$$
For the first term, we bound it with Lemma \ref{lem2} by introducing intermediate terms as follows: $$\begin{aligned}& \sup_{\| x \|_{\infty}\leq B } \bigl|\psi^{\boldsymbol{\lambda}}_{\boldsymbol j}(x)-\hat\psi^{\boldsymbol\lambda}_{\boldsymbol j,\tilde m}(x)\bigr|
=\Bigl\|\prod_{l=1}^{d}\pj-\prod_{l=1}^{d}\mathcal{T}_{1}(\hat{\psi}_{\lambda _{l},j_{l}}^{\tilde{m}})\Bigr\|_{L^{\infty}([-B,B]^{d})}\\[2pt]
\le &\Bigl\|\prod_{l=1}^{d}\pj-\mathcal{T}_{1}(\hat{\psi}_{\lambda _{1},j_{1}}^{\tilde{m}})\prod_{l=2}^{d}\pj+\cdots+\prod_{l'=1}^{h}\mathcal{T}_{1}(\hat{\psi}_{\lambda _{l'},j_{l'}}^{\tilde{m}})\prod_{l=h+1}^{d}\pj-\prod_{l'=1}^{h+1}\mathcal{T}_{1}(\hat{\psi}_{\lambda _{l'},j_{l'}}^{\tilde{m}})\prod_{l=h+2}^{d}\pj
\\ & +\cdots + \prod _{l'=1}^{d-1}\mathcal{T}_{1}(\hat{\psi}_{\lambda _{l},j_{l}}^{\tilde{m}}) \psi _{\lambda _{d},j_{d}}- \prod _{l=1}^{d}\mathcal{T}_{1}(\hat{\psi}_{\lambda _{l},j_{l}}^{\tilde{m}})\Bigr\|_{L^{\infty}([-B,B]^{d})} \\ \leq & d \max_{0 \leq h \leq d-1} \left\| \prod_{l'=1}^{h}\mathcal{T}_{1}(\hat{\psi}_{\lambda _{l'},j_{l'}}^{\tilde{m}})\prod_{l=h+1}^{d}\pj-\prod_{l'=1}^{h+1}\mathcal{T}_{1}(\hat{\psi}_{\lambda _{l'},j_{l'}}^{\tilde{m}})\prod_{l=h+2}^{d}\pj \right\|_{L^{\infty}([-B,B]^{d})} \\ \leq & d \max _{0\leq h \leq d-1} \| \psi _{\lambda _{h+1}, j_{h+1}} - \mathcal{T}_{1}(\hat{\psi}^{\tilde{m}}_{\lambda _{h+1},j_{h+1}}) \|_{L^{\infty}([-B,B])}   \leq d \max _{0 \leq h \leq d-1} \| \psi _{\lambda _{h+1}, j_{h+1}} - \hat{\psi}^{\tilde{m}}_{\lambda _{h+1},j_{h+1}} \|_{L^{\infty}([-B,B])} \\ \leq &  2d \exp \left( - 4\tilde{m} \log \left(  \frac{\tilde{m}}{6BC_{\theta}} \right) \right) . \end{aligned}$$
For the second term, we bound it by the subgaussian tail decay of probability measures: 
\begin{align*}
\rho(\{ x: \| x \|_{\infty} >B  \})&= \int_{\| x \|_{\infty} >B } \w (x) \,  d\rk(x) \leq \| \w \|_{L^{\gamma}(\rk)} (\rk(\{ x: \| x \|_{\infty}>B  \}))^{\frac{\gamma-1}{\gamma}} \\ & \leq \| \w \|_{L^{\gamma }(\rk)} \left( d \cdot \frac{\ka}{\sqrt{ 2\pi }}B^{-1} \exp\left( -\frac{B^{2}}{2\ka^{2}} \right) \right)^{\frac{\gamma-1}{\gamma}} \\ & \leq C_{\ka,\gamma}d \|  \w \|_{L^{\gamma}(\rk)} \exp \left( - \frac{\gamma-1}{\gamma} \Big(\frac{B^{2}}{2\ka^{2}}+\log B\Big) \right)
\end{align*}
with $C_{\ka,\gamma}=\left( \frac{\ka}{\sqrt{ 2\pi }} \right)^{\frac{\gamma-1}{\gamma}}$.
Let $B= \frac{\tm^{\frac{3}{4}}}{6C_{\theta}}$ and the above upper bound can written as 
\begin{align*}
\rho(\{ x:\| x \|_{\infty} >B  \}) & \leq dC_{\ka,\gamma} \| \w \|_{L^{\gamma}(\rk)} \exp \left( - \frac{\gamma-1}{\gamma} \Big(\frac{\tm^{\frac{3}{2}}}{72\ka^{2}C_{\theta}^{2}} + \frac{3}{4}\log \tm-\log 6C_{\theta}\Big) \right) \\ & \leq d C_{\ka,\gamma} \| \w \|_{L^{\gamma}(\rk)} \exp (- 2\tm \log \tm)  
\end{align*}
when $\tm > C_{\ka,\theta,\gamma}$ with $C_{\ka,\theta,\gamma}$ a constant only depending on $\ka, \gamma$ and $C_{\theta}$.

Then combine two estimations and we can the final bound as 
\begin{align*}
\| \p_{\boldsymbol{j}}-\hat{\psi}_{\boldsymbol{j},\tm}^{\boldsymbol{\lambda}} \|_{L^{2}(\rho)}^{2} &\leq (2d)^{2} \exp(-2\tm\log\tm) + 2d C_{\ka,\gamma}\| \w \|_{L^{\gamma}(\rk)} \exp(- 2\tm \log \tm) \\ & \leq (4d^{2}+C_{\ka,\gamma}\| \w \|_{L^{\gamma}(\rk)} ) \exp (-2\tm \log \tm)
\end{align*} for $\tm > C_{\ka,\theta,\gamma}$. \qedblack

\vskip 0.2in

\newpage
\bibliography{sample}

\end{document}